\documentclass[10pt]{article}
\usepackage[utf8]{inputenc}
\usepackage[margin=.8in]{geometry}
\usepackage{rotating}
\usepackage{booktabs}
\usepackage{array}
\usepackage{amsmath, amsfonts}
\usepackage{color}
\usepackage{hyperref}
\usepackage{bbm}
\usepackage{algorithm}
\usepackage[noend]{algpseudocode}
\usepackage{subcaption}
\usepackage{graphbox}

\usepackage{mathtools}
\mathtoolsset{showonlyrefs}

\usepackage[
    backend=biber,
    style=ieee,
    citestyle=numeric-comp,
    sorting=none,
    giveninits=true,
    natbib,
    hyperref,
    maxbibnames=99,
    doi=false,isbn=false,url=false,eprint=false
]{biblatex}

\addbibresource{bibliography.bib}

\usepackage{amsthm}
\usepackage[normalem]{ulem}
\usepackage{soul}

\newtheorem{theorem}{Theorem} 
\newtheorem{prop}{Proposition}
\newtheorem{cor}{Corollary}[section]
\newtheorem{definition}{Definition}

\usepackage{chngcntr}
\usepackage{apptools}
\AtAppendix{\counterwithin{theorem}{section}}
\AtAppendix{\counterwithin{prop}{section}}

\usepackage{mathtools}


\setlength{\parskip}{0.5em}
\usepackage[dvipsnames]{xcolor}

\def\Ical{\mathcal{I}}

\def\R{\mathbb{R}}
\def\E{\mathbb{E}}

\newcommand{\Y}{\mathcal{Y}}
\newcommand{\X}{\mathcal{X}}

\newcommand{\T}{\mathcal{T}}
\newcommand{\Tlhat}{\mathcal{T}_{\lhat}}

\newcommand{\Tlam}{\mathcal{T}_{\lambda}}

\newcommand{\lhat}{\hat{\lambda}}
\newcommand{\Lhat}{\widehat{\Lambda}}
\newcommand{\Rhat}{\widehat{R}}
\newcommand{\ind}[1]{\mathbbm{1}\left\{#1\right\}}

\newcommand{\triangleq}{\stackrel{\triangle}{=}}

\newcommand{\fdr}{\mathrm{FDR}}

\newcommand{\fdp}{\mathrm{FDP}}

\newcommand{\iou}{\mathrm{IOU}}
\renewcommand{\P}{\mathbb{P}}

\newcommand{\Var}{\mathrm{Var}}

\newcommand{\lb}{\left(}
\newcommand{\rb}{\right)}
\newcommand{\Z}{\mathbb{Z}}
\newcommand{\td}[1]{\tilde{#1}}
\newcommand{\eps}{\epsilon}
\newcommand{\cZ}{\mathcal{Z}}
\newcommand{\swap}{\mathrm{swap}}

\makeatletter
\def\blfootnote{\xdef\@thefnmark{}\@footnotetext}
\makeatother


\newcommand{\lihua}[1]{{\color{black} #1}}

\title{\vspace{-1.5cm} Learn then Test:\\Calibrating Predictive Algorithms to Achieve Risk Control}
\author{Anastasios N. Angelopoulos, Stephen Bates, Emmanuel J. Cand\`es, Michael I. Jordan, Lihua Lei}
\date{\today}

\begin{document}

\maketitle

\begin{abstract}
We introduce a framework for calibrating machine learning models so that their predictions satisfy explicit, finite-sample statistical guarantees. 
Our calibration algorithms work with any underlying model and (unknown) data-generating distribution and do not require model refitting.
The framework addresses, among other examples, false discovery rate control in multi-label classification, intersection-over-union control in instance segmentation, and the simultaneous control of the type-1 error of outlier detection and confidence set coverage in classification or regression.
Our main insight is to reframe the risk-control problem as multiple hypothesis testing, enabling techniques and mathematical arguments different from those in the previous literature.
We use the framework to provide new calibration methods for several core machine learning tasks, with detailed worked examples in computer vision and tabular medical data.
\end{abstract}

\section{Introduction}
\label{sec:introduction}
\begin{figure}[t]
    \vspace{-0.3cm}
    \centering
    \includegraphics[width=\textwidth]{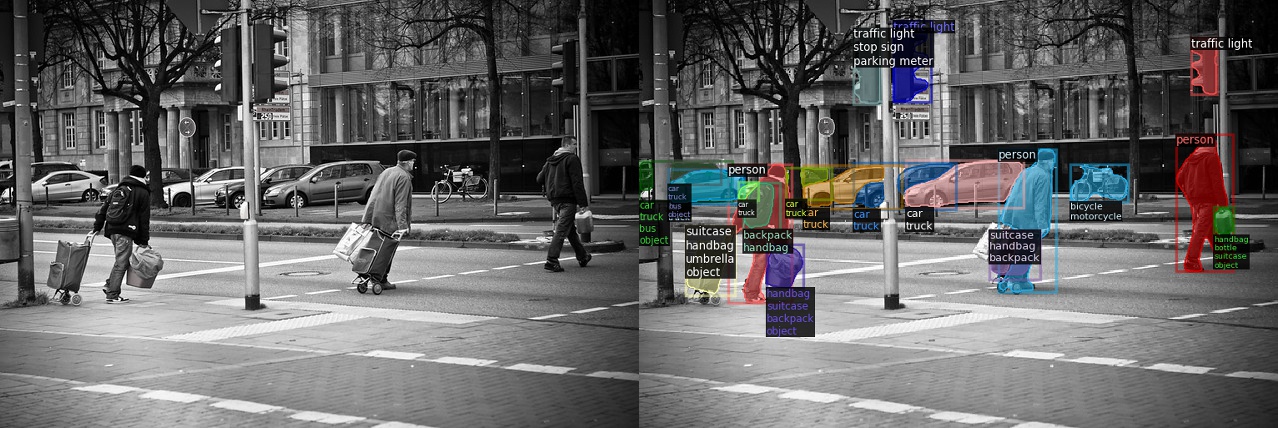}
    \caption{\textbf{Object detection with simultaneous distribution-free guarantees} on the expected intersection-over-union, recall, and coverage rate is possible with our methods; see Section~\ref{sec:detection} for details.}
    \label{fig:teaser}
\end{figure}

Deep neural networks and other complex machine learning models are increasingly used in real-world systems. 
Their use is motivated by the fact that (a) they often give the best prediction accuracy and (b) they extend to prediction tasks with structured outputs, such as image segmentation, predicting 3D volumes, sentence completion, and so on. 
However, the standard statistical toolbox does not apply to these models, so users seeking assurances about the reliability of predictions, such as bounds on the false discovery rate across multiple decisions, are left without recourse. 
For example, the Facebook object detector we used to produce Figure~\ref{fig:teaser} is currently deployed without any rigorous notion of uncertainty. 
Knowing when and how badly a model is likely to fail is critical for safe deployment; we urgently need new statistical tools for this task.

We develop a tool to endow complex prediction models with statistical guarantees. 
Our proposed \emph{\textbf{Learn then Test}} (LTT) framework and bestows finite-sample guarantees on any predictive model, without assumptions on the inner workings of the model or the true distribution underlying the dataset.
Our approach is modular in this sense, allowing it to keep pace with the ever-evolving ecosystem of neural network architectures, fitting schemes, and data types. 

In LTT, we begin with a \textbf{learned} model $\hat{f}$. 
We then post-process the model using calibration data in order to make our final predictions. 
The post-processing is controlled by a low-dimensional parameter $\lambda$. 
We \textbf{test} multiple values of the parameter using the calibration data in order to find settings that control a user-chosen statistical error rate. 
Thus, we link the problem of distribution-free predictive inference with the well-developed ideas in multiple hypothesis testing in order to ensure the reliability of machine learning models.

As an example, a multi-label classification model $\hat{f} : \X \to [0,1]^K$ generally outputs the estimated probability that each of $K$ classes is in an image. 
The parameter $\lambda$ is the binarization threshold that determines which classes we include in the prediction $\Tlam(x) = \{ k : \hat{f}(x)_k \geq \lambda \}$. 
Using fixed sequence testing, we can certify a choice of threshold $\lhat$ such that the resulting prediction $\Tlhat(X)$ has a false discovery rate (FDR) of no more than $10\%$ with the true set of classes in the image $Y$.
We apply our technique to this problem in Section~\ref{sec:multilabel}, and also to four other core problems in machine learning: selective classification, selective regression, out-of-distribution (OOD) detection, and instance segmentation.
The resulting guarantees are the first of their kind; all previous work in distribution-free uncertainty quantification, such as conformal prediction~\citep{vovk2005algorithmic} and risk-controlling prediction sets~\citep{bates2021distribution}, have required that $\lambda$ is one-dimensional and that the risk function be monotonic in $\lambda$. 
We do not make these assumptions, and can thus control many possibly non-monotone risks.
Code available at \textcolor{blue}{\url{https://github.com/aangelopoulos/ltt}}, allows reproduction of our experiments.

\subsection{Formalizing our goal}
\label{sec:notation-goals}

Let $(X_i, Y_i)_{i = 1,\dots,n}$ be the \emph{calibration set}, an independent and identically distributed (i.i.d.) set of variables, where the feature vectors $X_i$ take values in $\X$ and the responses $Y_i$ take values in $\Y$. 
We will use the calibration set to place guarantees on the predictions of a pretrained machine learning model $\hat{f}$ mapping from $\X$ to some space $\mathcal{Z}$.
In the tumor segmentation example above, $\hat{f}$ outputs an estimate of the probability that each of the $M \times N$ pixels comes from a tumor; thus, $\mathcal{Z}=[0,1]^{M \times N}$.
We post-process the raw model outputs in $\mathcal{Z}$ to generate predictions $\Tlam(x)$ indexed by a low-dimensional parameter $\lambda$.
Then, by carefully setting the parameter $\lhat$, we control a user-chosen error rate, regardless of the quality of $\hat{f}$ or the data distribution.

In the general framework, we allow the post-processing $\T_\lambda : \X \to \Y'$ to take values in any space $\Y'$.
We often choose $\Y'=\Y$, corresponding to predictions, or $\Y'=2^\Y$, corresponding to prediction sets. 
For such a function $\T_\lambda$, we define a \emph{risk} $R(\T_\lambda) \in \mathbb{R}$
that captures a problem-specific notion of the statistical error rate---like one minus the intersection-over-union (IOU) in our running example.
For convenience, we often use the shorthand $R(\lambda)$ to stand for $R\left(\T_{\lambda}(X)\right)$.
Our goal in this work is to train a function $\T_{\lhat}$ based on $\hat{f}$ and the calibration data in such a way that it achieves the following 
error-control property:
\begin{definition}[Risk-controlling prediction]
Let $\lhat \in \Lambda$ be a random variable. We say that $\T_{\lhat}$ is an \emph{$(\alpha,\delta)$-risk-controlling prediction (RCP)} if $\P(R(\T_{\lhat}) \le \alpha) \ge 1-\delta$. 
\label{def:rcp}
\end{definition}
\noindent The risk tolerance $\alpha$ and error level $\delta$ are chosen in advance by the user. 
The reader can think of 10\% as a representative value of $\delta$; the choice of $\alpha$ will vary depending on the risk function. 
In our work, $\hat{\lambda}$ will be a function of the calibration data, so the probability in the above definition will be over the randomness in the sampling of $(X_1,Y_1),\dots, (X_n,Y_n)$.
Note that some risks cannot be controlled at every level $\alpha$ for every data-generating distribution; in such cases, we abstain from returning an RCP.

\subsection{Related work}
Predictions with statistical guarantees have been heavily explored in the context of set-valued predictions.
This approach dates back at least to tolerance regions (sets that cover a pre-specified fraction of the population distribution) in the 1940s \cite{wilks1941, wilks1942, wald1943, tukey1947}. See \cite{krishnamoorthy2009statistical} for a review of this topic. Recently, tolerance regions have been used to form prediction sets with deep learning models \cite{Park2020PAC, park2021shift}.
In parallel, conformal prediction \cite{vovk1999machine, vovk2005algorithmic, angelopoulos2021gentle} has been developed as a way to produce prediction sets with finite-sample statistical guarantees. One convenient, widely-used form of conformal prediction, known as split conformal prediction \cite{papadopoulos2002inductive, lei2013conformal}, uses data splitting to generate prediction sets in a computationally efficient way; see also \cite{vovk2015cross, barber2019jackknife} for generalizations that re-use data for improved  statistical efficiency. Conformal prediction is a generic approach, and much recent work has focused on designing specific conformal procedures to have good performance according to additional desiderata such as small set sizes \cite{Sadinle2016LeastAS}, coverage that is approximately balanced across regions of feature space \cite{barber2019limits, romano2019conformalized, izbicki2020flexible, romano2020classification, cauchois2020knowing, guan2020conformal, angelopoulos2020sets}, and errors balanced across classes \cite{lei2014classification, Sadinle2016LeastAS, hechtlinger2018cautious, guan2019prediction}. 
Recent extensions also address topics such as distribution estimation \cite{vovk2019conformal}, causal inference~\cite{lei2020conformal}, survival analysis~\cite{candes2021conformalized}, differential privacy~\cite{angelopoulos2021private}, outlier detection~\cite{bates2021testing}, speeding up the test-time evaluation of complex models~\cite{fisch2020efficient, schuster2021consistent}, the few-shot setting~\cite{fisch2021few}, handling dependent data~\cite{chernozhukov18a, dunn2020distributionfree}, and handling of testing distribution shift~\cite{tibshirani2019conformal, cauchois2020robust, hu2020distributionfree, bates2021testing, gibbs2021adaptive, Vovk2021testing}.

Most closely related to the present work is the technique of Risk-Controlling Prediction Sets~\cite{bates2021distribution} which extends tolerance regions and conformal prediction to give prediction sets that control other notions of statistical error.
The present work goes beyond that work to consider prediction with risk control more generally, without restricting the scope to confidence sets. This is possible because we solve a key technical limitation of that earlier work---the restriction to monotonic risks---so that the techniques herein apply to any notion of statistical error. 
The present work moves in the direction of decision-making, which has been only lightly explored in the context of conformal prediction~\cite{vovk2018decision, vovk2019universally}, with the existing literature taking a very different approach.

\section{Risk Control in Prediction}
\label{sec:theory}
This section introduces our proposed method for controlling the risk of a prediction.
We begin by grounding our discussion with a preview of the LTT procedure.

\subsection{Overview of the Learn then Test procedure}
Recalling Definition~\ref{def:rcp}, our goal is to find a function $\Tlhat$ whose risk is less than some user-specified threshold $\alpha$. 
To do this, we search across the collection of functions $\{\T_\lambda\}_{\lambda \in \Lambda}$ and estimate their risk on the calibration data $(X_1,Y_1),\dots,(X_n,Y_n)$.
The output of the procedure will be a set of $\lambda$ values, $\Lhat \subseteq \Lambda$ which are all guaranteed to control the risk, abbreviated as $R(\lambda) = R(\Tlam)$.
The Learn then Test procedure is outlined below.

\begin{enumerate}
    \item For each $\lambda_j$ in a discrete set $\Lambda = \{\lambda_1, ..., \lambda_N\}$, associate the null hypothesis $\mathcal{H}_j : R(\lambda_j) > \alpha$.
    Notice that \emph{rejecting} $\mathcal{H}_j$ corresponds to selecting $\lambda_j$ as a point where the risk is controlled.
    
    \item For each null hypothesis, compute a finite-sample valid p-value using a concentration inequality. We discuss how to pick good p-values in Section~\ref{sec:pvals}.
    \begin{figure}[H]
        \centering
        \includegraphics[width=0.4\linewidth]{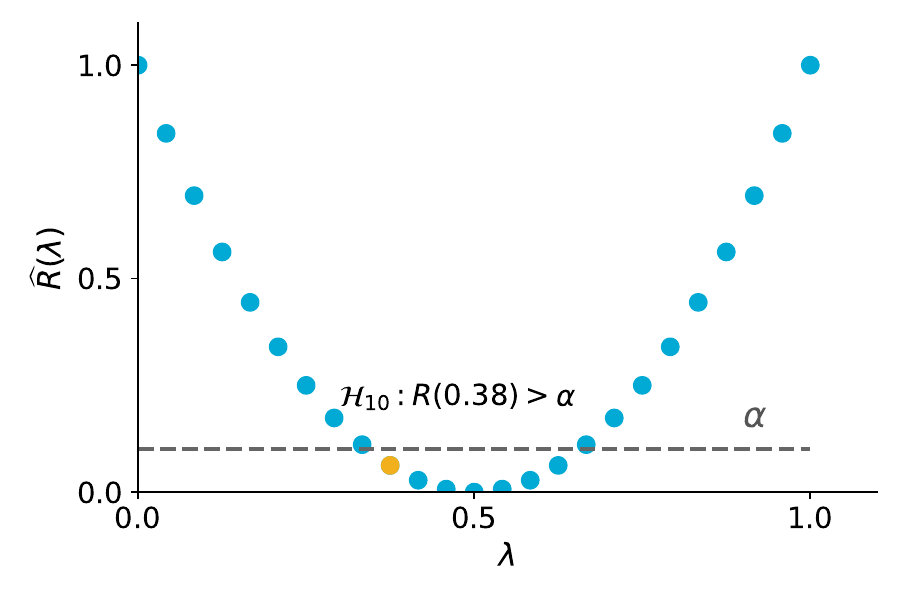}
        \includegraphics[width=0.4\linewidth]{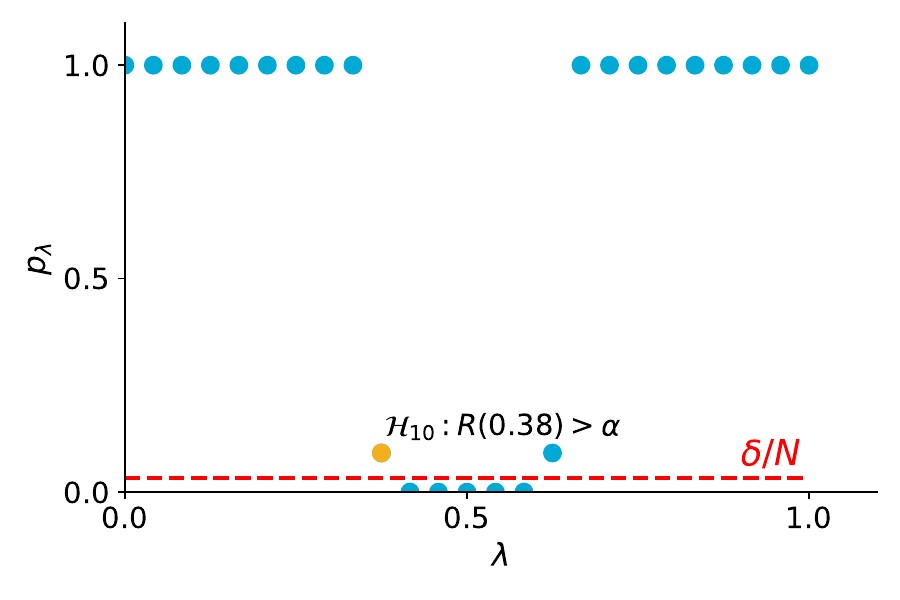}
    \end{figure}
    \item Return $\Lhat = \mathcal{A}\big(\{p_{j}\}_{j \in \{1,...,|\Lambda|\}}\big) \subset \Lambda$, where $\mathcal{A}$ is an algorithm that controls the family-wise error rate (FWER).
    For example, the Bonferroni correction yields $\Lhat=\big\{\lambda_j : p_{j} \leq \frac{\delta}{|\Lambda|}\big\}$. 
    In the image, this is all $\lambda$ values that fall below the red line.
    We formally define the FWER and show how to design good FWER-controlling procedures for machine learning settings in Section~\ref{sec:mht}.
\end{enumerate}

Except with probability $\delta$, each $\lhat \in \Lhat$ yields an RCP $\Tlhat$---this is our main result.
\begin{theorem}
    \label{thm:master-multiple-testing}
    Suppose $p_j$ has a distribution stochastically dominating the uniform distribution for all $j$ under $\mathcal{H}_j$.
    Let $\mathcal{A}$ be an FWER-controlling algorithm at level $\delta$. Then $\hat{\Lambda} = \mathcal{A}(p_1,\dots,p_N)$ satisfies the following: 
    \begin{equation}
    \label{eq:fwer_control}
        \P \left(\sup_{\lambda\in \Lhat}\{R(\lambda)\} \le \alpha \right) \geq 1-\delta,
    \end{equation}
    where the supremum over an empty set is defined as $-\infty$.
    Thus, selecting any $\lambda \in \Lhat$, $\T_{\lambda}$ is an $(\alpha,\delta)$-RCP.
\end{theorem}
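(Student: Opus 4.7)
The plan is to directly invoke the FWER-controlling property of $\mathcal{A}$ applied to the null hypotheses $H_j : R(\lambda_j) > \alpha$ from \eqref{eq:null-hypothesis}. First I would set up the correspondence with Definition~\ref{def:fwer-algo}: let $J_0 = \{j \in \{1,\dots,N\} : R(\lambda_j) > \alpha\}$ denote the indices of true nulls and $J_1 = \{1,\dots,N\} \setminus J_0$ the non-nulls. Because $R(\lambda_j)$ is a deterministic functional of the underlying data distribution, $J_0$ and $J_1$ are fixed (non-random) subsets of $\{1,\dots,N\}$, and by the hypothesis of the theorem each $p_j$ with $j \in J_0$ marginally stochastically dominates the uniform on $[0,1]$---exactly the condition required in Definition~\ref{def:fwer-algo}.

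Given this identification, the FWER guarantee applied to $\mathcal{A}$ immediately yields
$$\P\bigl(\Lhat \subseteq \{\lambda_j : j \in J_1\}\bigr) \ge 1 - \delta.$$
On this event, every $\lambda$ retained in $\Lhat$ satisfies $R(\lambda) \le \alpha$, so $\sup_{\lambda \in \Lhat} R(\lambda) \le \alpha$; when $\Lhat$ happens to be empty, the supremum is $-\infty$ by the stated convention and thus also lies below $\alpha$. Either way, \eqref{eq:fwer_control} holds.

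For the final RCP claim, I would observe that on the same high-probability event, any data-dependent choice $\lhat \in \Lhat$ satisfies $\lhat = \lambda_j$ for some $j \in J_1$ and hence $R(\lhat) \le \alpha$, while in the event $\Lhat = \emptyset$ one abstains as sanctioned by the remark following Definition~\ref{def:rcp}. Therefore $\P(R(\lhat) \le \alpha) \ge \P\bigl(\Lhat \subseteq \{\lambda_j : j \in J_1\}\bigr) \ge 1 - \delta$, which is precisely the $(\alpha,\delta)$-RCP property.

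There is no real obstacle; the argument is essentially a definitional unwinding. The one subtlety worth flagging is that only \emph{marginal} stochastic dominance of each $p_j$ under its own null is assumed, with no joint-distribution hypothesis on $(p_1, \dots, p_N)$---this suffices precisely because Definition~\ref{def:fwer-algo} requires $\mathcal{A}$ to retain its guarantee under arbitrarily dependent p-values, so any dependence induced by using the same calibration data to construct all $p_j$ is absorbed into the choice of $\mathcal{A}$.
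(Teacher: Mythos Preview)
Your proposal is correct and matches the paper's approach exactly: the paper's proof consists of the single sentence ``The result is immediate,'' and what you have written is precisely the definitional unwinding that justifies that sentence.
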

All proofs are presented in Appendix~\ref{app:proofs}.
This result is straightforward but can be used to great effect: the user can use any FWER-controlling procedure to find $\Lhat$, and then may pick any $\lambda \in \Lhat$ as their chosen RCP (even in a data-driven way). 
For example, we can choose the one that maximizes another performance metric estimated using the same data without concern about double-dipping.
Most commonly, as in the FDR case, we would want to choose $\lhat = \min \Lhat$, which yields the most discoveries and hence the lowest false negative rate (FNR).

Theorem~\ref{thm:master-multiple-testing} reduces the problem of risk control into two subproblems: first, generating a p-value for each hypothesis, and second, combining the hypotheses to discover the least conservative prediction that controls the risk at level $\alpha$.
We will next formally develop solutions to each subproblem.

\subsection{Calculating valid p-values}
\label{sec:pvals}
The first step is to calculate a \emph{valid} p-value $p_j$ for each null hypothesis $\mathcal{H}_j$, i.e., one satisfying $u \in [0,1], \P(p_j \le u) \le u$ under $\mathcal{H}_j$.
The general idea is to calculate the empirical risk of $\T_{\lambda_j}$ for each $j$ then use an appropriate assumption-light concentration inequality to get the p-value for $\mathcal{H}_j : R(\lambda_j) > \alpha$. 
A small p-value will indicate disagreement with $\mathcal{H}_j$, implying the risk is controlled. 

Though our framework is more general, we consider the case where the risk function is the expectation of a \emph{loss} function $L$:
$R(\T)=\E[L(\T(X),Y)]$.
For example, the false discovery rate is the expectation of the false discovery proportion.
However, there are many other cases for which p-values are available, and our framework would automatically apply there as well.

Beginning with the bounded case where $L(\T(X),Y) \in [0,1]$, we apply the hybridized Hoeffding-Bentkus (HB) inequality from~\citet{bates2021distribution} which combines the results from~\citet{hoeffding1963} and~\citet{bentkus2004hoeffding}.
The HB inequality will use the empirical risk on the calibration set, $\Rhat_j = (1/n)\sum_{i=1}^{n}L(T_{\lambda_j}(X_i),Y_i) $, as the test statistic.
\begin{prop}[Hoeffding-Bentkus inequality p-values]
    \label{cor:hb-p-value}
    The following is a valid p-value for $\mathcal{H}_j$:
    \begin{equation}
        \label{eq:hb-p-value}
        p_j^{\rm HB} = \min\left( \exp\{-nh_1(\Rhat_j \wedge \alpha , \alpha)\}, e\P\big(\mathrm{Bin}(n,\alpha) \leq \left\lceil n\Rhat_j \right\rceil \big) \right),
    \end{equation}
    where $h_1(a,b)=a\log(a/b)+(1-a)\log((1-a)/(1-b))$.
\end{prop}
The Hoeffding-Bentkus inequality provides finite-sample statistical results with surprising empirical effectiveness, as we will see later.
In the unbounded case, the HB inequality no longer applies, but one can get asymptotically-valid p-values from the central limit theorem; see Appendix~\ref{app:clt-p-value} for details.

\subsection{Multiple Hypothesis Testing}
\label{sec:mht}
We next show how to combine p-values in order to form the rejection set $\Lhat$ using multiple hypothesis testing.
Our techniques will build on FWER control, which we formalize next.
Consider a list of null hypotheses, $\mathcal{H}_j$, $j=1,...,N$, with associated p-values $p_j$ that stochastically dominate the uniform distribution on $[0, 1]$ under the null.
Let the indices of the true nulls be $J_0 \subset \{1,\dots,N\}$ and those of the non-nulls be $J_1=\{1,...,N\}\setminus J_0$.
The goal of FWER control is to use the p-values to reject a subset of the $\mathcal{H}_j$ while limiting the probability of making any false rejections at a level $\delta$.
\begin{definition}[FWER-controlling algorithm]
\label{def:fwer-algo}
    An algorithm $\mathcal{A} : [0,1]^N \to 2^{\{1,...,N\}}$ is an \emph{FWER-controlling algorithm at level $\delta$} if $\P\left(\mathcal{A}\left(p_1,\dots,p_N\right) \subseteq J_1 \right) \geq 1-\delta$.
\end{definition}
\noindent Note that the p-values in the above definition may be dependent; we require that the algorithm works even in this case.
As a first step, we consider forming $\Lhat$ with the Bonferroni correction.
It is well-known that Bonferroni satisfies Definition~\ref{def:fwer-algo}.
\begin{prop}[Bonferroni controls FWER]
\label{prop:bonferroni}
Let $\mathcal{A}^\textnormal{(Bf)}(p_1,\dots,p_N) = \{\lambda_j : p_j \le \delta / |\Lambda|\}$.
Then, $\mathcal{A}^\textnormal{(Bf)}$ is an FWER-controlling algorithm.
\end{prop}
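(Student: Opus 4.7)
The plan is to verify the FWER-controlling property directly from Definition~\ref{def:fwer-algo} by a union bound over the true null indices. Write $J_0 \subseteq \{1,\dots,N\}$ for the set of true nulls, i.e., indices $j$ with $R(\lambda_j) > \alpha$, and $J_1$ for its complement. The algorithm outputs $\mathcal{A}^{\textrm{(Bf)}} = \{j : p_j \le \delta/|\Lambda|\}$, and the event $\mathcal{A}^{\textrm{(Bf)}} \not\subseteq J_1$ is exactly the event that some null p-value falls at or below the threshold $\delta/|\Lambda|$.

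The core step is then the union bound:
\begin{equation}
\P\bigl(\mathcal{A}^{\textrm{(Bf)}} \not\subseteq J_1\bigr)
= \P\Bigl(\bigcup_{j \in J_0} \{p_j \le \delta/|\Lambda|\}\Bigr)
\le \sum_{j \in J_0} \P\bigl(p_j \le \delta/|\Lambda|\bigr).
\end{equation}
By the hypothesis in Definition~\ref{def:fwer-algo}, for each $j \in J_0$ the marginal distribution of $p_j$ stochastically dominates the uniform distribution on $[0,1]$, so $\P(p_j \le u) \le u$ for every $u \in [0,1]$. Taking $u = \delta/|\Lambda|$ and summing gives a bound of $|J_0| \cdot \delta/|\Lambda| \le \delta$, since $|J_0| \le |\Lambda| = N$. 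Rearranging yields $\P(\mathcal{A}^{\textrm{(Bf)}} \subseteq J_1) \ge 1 - \delta$, which is the definition of an FWER-controlling algorithm at level $\delta$.

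There is essentially no obstacle here; the only thing worth being explicit about is that the argument uses only the \emph{marginal} validity of each null p-value and never invokes any joint distributional assumption, which is exactly why Bonferroni fits the dependence-agnostic framework of Definition~\ref{def:fwer-algo}. This is important in the LTT setting because, as noted just before Section~\ref{sec:theory-testing-mechanics}, the p-values attached to adjacent $\lambda_j$ are typically highly dependent, and this proof shows that such dependence poses no issue for the resulting risk-control guarantee via Theorem~\ref{thm:master-multiple-testing}.
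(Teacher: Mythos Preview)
Your proof is correct and is exactly the standard union-bound argument for Bonferroni. The paper itself does not spell out a proof; it simply remarks that this is a classical result and points to the literature (e.g., \cite{holm1979simple}), so your write-up in fact supplies more detail than the paper's own treatment.
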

Bonferroni correction is simple, which makes it attractive.
However, for large $N$ the multiplicity correction degrades performance.
The remainder of the section is devoted to multiple testing methods that take advantage of problem structure to more intelligently search the hypothesis space, mitigating this issue.
This is possible because in our machine learning problems, adjacent p-values will be highly dependent for nearby $\lambda$, and non-nulls will generally cluster together.
Furthermore, we eventually only need one value $\lhat$ with reasonably good performance that guarantees the error control, while much of the multiple testing literature focuses on rejecting as many hypotheses as possible.

\subsubsection{Fixed sequence testing}

The multiple testing method herein is designed for settings where we know a-priori which hypotheses are more or less likely to control the risk.
For example, as in the case of the FDR, the risk function may be nearly monotone-decreasing in $\lambda$, making large $\lambda$ much more promising than small ones. 
In such settings we can dispense with multiplicity correction altogether if we sequentially test the hypotheses---e.g. from large $\lambda$ to small $\lambda$---as long as we stop upon the first acceptance.
This procedure, called \emph{fixed sequence testing}~\citep{bauer1991multiple}, controls the family-wise error rate.

More generally, we can initialize the fixed sequence test at several different points along the ordering, provided we adjust the significance level accordingly.
It is good to perform the \emph{multi-start} variant when nearby p-values are highly correlated, but the location of the minimum is not clear a-priori. 
This general procedure, stated formally in Algorithm~\ref{algo:bf_search}, is also guaranteed to control the type-1 error, as recorded in Proposition~\ref{prop:bf_search}.
\begin{algorithm}[H]
  \caption{Fixed sequence testing}
  \label{algo:bf_search}

  \begin{algorithmic}[1]
    \State \textbf{Input:} error level $\delta \in (0,1)$, parameter grid $\Lambda = \{\lambda_1,\dots,\lambda_N\}$, p-values $(p_1,\dots,p_N)$, initializations $\mathcal{J} \subset \{1,\dots,N\}$ (e.g., a coarse equi-spaced grid, with, say, 20 elements)
    \State $\Lhat \gets \emptyset$
    \For{$j \in \mathcal{J}$}
    \If{$\lambda_j \notin \Lhat$} \Comment{Avoid repeating values of $j$.}
        \While {$p_j \le \delta / |\mathcal{J}|$}
            \State $\Lhat\gets \Lhat\cup \{\lambda_j\}$
            \State $j \gets j + 1$
        \EndWhile
    \EndIf
    \EndFor
    \State \textbf{Return:} rejection set $\Lhat$
  \end{algorithmic}
\end{algorithm}
\begin{prop}[Fixed sequence testing controls FWER]
\label{prop:bf_search}
Algorithm~\ref{algo:bf_search} is an FWER-controlling algorithm; i.e., it satisfies Definition~\ref{def:fwer-algo}.
\end{prop}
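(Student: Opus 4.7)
The plan is to reduce Proposition~\ref{prop:bf_search} to the classical single-sequence fixed-sequence argument by bounding, separately for each initialization $j_0 \in \mathcal{J}$, the probability of any false rejection arising from the inner \textbf{while} loop, and then applying a union bound over $\mathcal{J}$. Concretely, for each $j_0 \in \mathcal{J}$, let $E_{j_0}$ be the event that the inner loop starting at $j_0$ inserts some index in $J_0$ into $\Lhat$. I will show $\P(E_{j_0}) \le \delta/|\mathcal{J}|$, so that the union bound gives
\begin{equation}
    \P\!\left(\Lhat \not\subseteq J_1\right) \le \sum_{j_0 \in \mathcal{J}} \P(E_{j_0}) \le |\mathcal{J}| \cdot \frac{\delta}{|\mathcal{J}|} = \delta,
\end{equation}
which is exactly the FWER-controlling guarantee of Definition~\ref{def:fwer-algo}.

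To control $\P(E_{j_0})$, I would fix the initialization $j_0$ and let $j^\star = \min\{j \ge j_0 : j \in J_0\}$ be the first true null reached by the walk (if no such index exists, then $E_{j_0}$ is empty and there is nothing to show). By construction of the \textbf{while} loop, any index added to $\Lhat$ during the iteration starting at $j_0$ is reached by incrementing $j$ one at a time from $j_0$ and only after every intermediate $p$-value satisfied $p_{\cdot} \le \delta/|\mathcal{J}|$. Therefore $E_{j_0}$ is contained in the event that $p_{j^\star} \le \delta/|\mathcal{J}|$, because $j^\star$ is the first true null that could possibly be inserted and inserting it requires the loop to reach $j^\star$ and then pass the threshold test there. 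By the marginal validity of $p_{j^\star}$ under $H_{j^\star}$, i.e.\ stochastic dominance over the uniform on $[0,1]$, $\P(p_{j^\star} \le \delta/|\mathcal{J}|) \le \delta/|\mathcal{J}|$, which is the desired bound.

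The only bookkeeping subtlety is the guard $j \notin \Lhat$ on line~4, which makes the algorithm skip an initialization whose index has already been added by an earlier pass. This can only shrink $\Lhat$ relative to the version of the algorithm without the guard, so the inclusion $\Lhat \subseteq \bigcup_{j_0 \in \mathcal{J}} (\text{indices added starting from } j_0)$ still holds and the union bound above goes through unchanged. I expect the main conceptual point — and the one most worth spelling out carefully — to be the identification of $j^\star$ as the unique ``gate'' that must be passed before any false rejection happens in the sub-sequence from $j_0$; once that observation is in place, the rest is a union bound and an application of the p-value definition, with no additional dependence assumptions required since we only use marginal validity at the single index $j^\star$ inside each $E_{j_0}$.
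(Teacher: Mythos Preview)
Your proposal is correct and follows essentially the same approach as the paper: bound the probability of a false rejection from each initialization by $\delta/|\mathcal{J}|$ via the ``first-null gate'' argument, then union bound over $\mathcal{J}$. Your treatment is more detailed (in particular, the explicit identification of $j^\star$ and the remark about the guard on line~4 only shrinking $\Lhat$), but the structure matches the paper's proof exactly.
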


Like Bonferroni, fixed sequence testing is simple to implement. 
In our experiments, we find that it offers large power improvements over Bonferroni, both in the \emph{multi-start} variant when $|\mathcal{J}|>1$ and in the standard setting where $|\mathcal{J}|=1$.
However, it involves an extra design-choice: selecting the ordering without looking at the calibration data.
Any ordering leads to valid error control, but some orderings will lead to better power than others.
When $\Lambda\subset \mathbb{R}$ and the loss function is nearly monotone, we can often take the natural ordering, as stated earlier.
When $\Lambda$ is higher dimensional, we discuss a general approach in Appendix~\ref{app:split-fixed-sequence} that learns an ordering using data splitting; we apply this new method to the instance segmentation detailed in Section~\ref{sec:detection}.
These choices matter to achieve tight risk. Still, we note that fixed sequence testing is yields exactly the desired FWER level:
\begin{prop}\label{prop:fixed_sequence_J=1}
Let $j^{*}$ be the index of the first null in the sequence. 
Then, for Algorithm~\ref{algo:bf_search} with $|\mathcal{J}| = 1$, $\mathrm{FWER} = \P(p_{j^{*}}\le \delta).$
As a result, if the null p-values are (asymptotically) uniform, the FWER is (asymptotically) $\delta$ as well.
\end{prop}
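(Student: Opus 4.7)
The plan is to read off the FWER directly from the deterministic structure of the algorithm. With $\mathcal{J} = \{j_0\}$ the outer \texttt{for} loop executes once and the inner \texttt{while} loop rejects $\lambda_{j_0},\lambda_{j_0+1},\ldots$ in order, stopping the first time $p_j > \delta$ (the rejection threshold is $\delta/|\mathcal{J}| = \delta$). Hence the rejected set is a contiguous prefix $\Lhat = \{\lambda_{j_0},\ldots,\lambda_{\tau-1}\}$, where $\tau$ denotes the smallest $j \ge j_0$ with $p_j > \delta$. Since by definition of $j^*$ every null index lies in $\{j^*, j^*+1,\ldots\}$, a false rejection occurs if and only if $j^* \in \Lhat$, equivalently $\tau > j^*$.

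Next I would unwind $\{\tau > j^*\}$ in terms of the p-values: it is exactly $\bigcap_{k=j_0}^{j^*} \{p_k \le \delta\}$. This already delivers the clean upper bound
\[
\mathrm{FWER} = \P\Bigl(\textstyle\bigcap_{k=j_0}^{j^*} \{p_k \le \delta\}\Bigr) \le \P(p_{j^*} \le \delta).
\]
To promote this to the stated equality I would invoke the monotone structure natural to RCP sequences: the Hoeffding--Bentkus and CLT p-values of Propositions~\ref{cor:hb-p-value} and~\ref{prop:clt-p-values} are monotone in $\Rhat_k$, and in the intended sequences $\Rhat_k$ is monotone in $k$ (e.g., the nested-threshold families underlying FDR control). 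Under such monotonicity, $\{p_{j^*} \le \delta\} \subseteq \{p_k \le \delta\}$ for every $k \le j^*$, so the intersection collapses to $\{p_{j^*} \le \delta\}$ and equality holds. Alternatively, the equality is recovered asymptotically: since $R(\lambda_k) < \alpha$ for $k < j^*$, the non-null p-values concentrate near $0$, hence $\P\!\left(\bigcap_{k<j^*}\{p_k\le\delta\}\right) \to 1$.

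The ``as a result'' consequence is then immediate: stochastic domination of $p_{j^*}$ by a uniform variable gives $\P(p_{j^*}\le \delta) \le \delta$, with equality (asymptotically) when $p_{j^*}$ is (asymptotically) uniform. The one genuine subtlety, which I would flag explicitly in the write-up, is the passage from the exact identity $\mathrm{FWER} = \P(\bigcap_{k=j_0}^{j^*}\{p_k\le \delta\})$ to the simpler $\P(p_{j^*}\le\delta)$; the rest is just a short combinatorial unrolling of the algorithm's control flow.
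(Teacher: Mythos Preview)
Your argument follows the same route as the paper's: both reduce $\mathrm{FWER}$ to the probability of rejecting $H_{j^*}$ by observing that the sequential procedure can reach a null only by first reaching (and rejecting) $H_{j^*}$. The paper's entire proof is the single sentence ``Since the procedure is sequential, it rejects a null iff it rejects $H_{j^*}$, in which case $p_{j^*}\le\delta$,'' which, read literally, only delivers the inclusion $\{\text{reject }H_{j^*}\}\subseteq\{p_{j^*}\le\delta\}$ and hence $\mathrm{FWER}\le\P(p_{j^*}\le\delta)$.

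You are more careful: you correctly identify the exact event as $\bigcap_{k=j_0}^{j^*}\{p_k\le\delta\}$ and explicitly flag that promoting the inequality to the stated equality requires something extra---either monotonicity of the p-values along the sequence, or the asymptotic concentration of the non-null p-values at zero. The paper does not address this point at all, so the gap you highlight is real (and your two proposed resolutions are the natural ones in the paper's setting). In short, your proof is at least as complete as the paper's, and your flagged subtlety is genuine rather than an artifact of your write-up.
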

By contrast, Bonferroni typically yields a FWER much smaller than $\delta$.

\subsubsection{A general recipe for FWER control}
\label{sec:sgt}
Lastly, we introduce a general, more powerful framework for FWER control due to~\citet{brentz2009graphical}, which we call \emph{sequential graphical testing} (SGT). 
The SGT approach encodes information about the space of hypotheses $\Lambda$ via a directed graph, where the null hypotheses indexed by $\lambda \in \Lambda$ are the nodes, and the edges determine the way the error budget percolates through the graph. 
The procedure then sequentially tests the hypotheses indexed by $\lambda \in \Lambda$ at
iteratively updated
significance levels, while guaranteeing that the final $\Lhat$ controls the FWER.
The basic idea is simple: when a hypothesis is rejected, its error budget gets distributed among adjacent hypotheses in the graph, which allows them to be rejected more easily. The fixed sequence testing procedure (Algorithm \ref{algo:bf_search}) is a special case of SGT.

Formally, the SGT procedure is parameterized by a directed graph $\mathcal{G}$ comprising a node set $\Lambda$, and edge weights $g_{i, j} \in [0,1]$ for each pair $i,j \in \Lambda$  obeying $g_{i, i} = 0$ and $\sum_{j=1}^{n}g_{i, j}\le 1$. 
In addition, each node $i$ is allocated an initial error budget $\delta_i$ such that $\sum_i \delta_i = \delta$. 
From here, the algorithm tests each hypothesis $i \in \Lambda$ at level $\delta_i$ (i.e., checks if $p_i \le \delta_i$). 
If any $i$ is rejected, the procedure reallocates the error budget from node $i$ to the rest of the nodes according to the edge weights; see Algorithm~\ref{alg:graphical_fwer} for details.
This procedure, outlined in Algorithm~\ref{alg:graphical_fwer}, controls the family-wise error rate.

\begin{algorithm}
  \caption{Sequential graphical testing~\citep{brentz2009graphical}}
  \label{alg:graphical_fwer}

  \begin{algorithmic}[1]
    \State \textbf{Input:} error level $\delta \in (0,1)$, parameter grid $\Lambda = \{\lambda_1,\dots,\lambda_N\}$, p-values $(p_1,\dots,p_N)$, graph $\mathcal{G}$, initial error budget $\delta_i$ such that $\sum_i \delta_i = \delta$
    \State $\Lhat \gets \emptyset$
    \While{$\exists i : p_i \le \delta_i$}
    \State Choose any $i$ such that $p_i \le \delta_i$
    \State $\Lhat\gets \Lhat\cup \{\lambda_i\}$ \Comment{Reject hypothesis $i$}
    \State Update the error levels and the graph:
    \begin{align*}
        \delta_{j} &\gets 
        \begin{cases}
        \delta_{j} + \delta_i g_{i, {j}} & {\lambda_j} \in \Lambda \setminus \Lhat \\
        0 & \text{ otherwise }
        \end{cases} \text{ and } g_{k, j} \gets 
        \begin{cases}
        \frac{g_{k, j} + g_{k, i} g_{i, j}}{1 - g_{k, i}g_{i, k}} 
        & \lambda_k, \lambda_j \in \Lambda \setminus \Lhat, \ \ k \ne j \\
        0 & \text{ otherwise }
        \end{cases}
    \end{align*}
    \EndWhile
    \State \textbf{Return:} rejection set $\Lhat$
  \end{algorithmic}
\end{algorithm}
\begin{prop}[SGT controls FWER~\citep{brentz2009graphical}]
\label{prop:sgt}
Algorithm~\ref{alg:graphical_fwer} is an FWER-controlling algorithm; i.e., it satisfies Definition~\ref{def:fwer-algo}.
\end{prop}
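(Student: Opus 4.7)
The plan is to prove the FWER bound by reducing the actual (data-dependent) execution of Algorithm~\ref{alg:graphical_fwer} to a deterministic worst-case budget allocation on the true nulls $J_0$, and then applying a union bound.

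First, I would establish a deterministic upper bound on the budget any true null can ever carry. For $j \in J_0$, define $\tilde{\delta}_j$ to be the value of $\delta_j$ produced by running the graph-update rules in Algorithm~\ref{alg:graphical_fwer} while hypothetically rejecting \emph{every} $i \in J_1$ (in any fixed order), without ever updating true-null budgets based on their own p-values. Then I would prove the monotonicity lemma: during the actual run of the algorithm, at every step and for every still-unrejected $j \in J_0$, the current budget $\delta_j$ is bounded above by $\tilde{\delta}_j$. This follows because (i) each rejection only increases the budgets of the remaining nodes (all $g_{i,j} \ge 0$), and (ii) the set of hypotheses actually rejected by the algorithm before any given moment is a subset of $J_1$ (assuming no true null has yet been rejected), so fewer additive contributions have been accumulated than in the hypothetical.

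Second, I would show the invariant $\sum_{j \in J_0} \tilde{\delta}_j \le \delta$ by induction on the number of hypothetical rejections of false nulls. The base case is $\sum_{j} \delta_j = \delta$. For the inductive step, rejecting some $i \in J_1$ updates the remaining budgets to $\delta_j + \delta_i g_{i,j}$, so the running sum over still-unrejected indices changes by
\begin{equation}
\sum_{j \neq i,\,j \notin \text{rej.}} (\delta_j + \delta_i g_{i,j}) \;-\; \sum_{j \notin \text{rej.}} \delta_j \;=\; -\delta_i + \delta_i \sum_{j \neq i,\, j \notin \text{rej.}} g_{i,j} \;\le\; -\delta_i + \delta_i \;=\; 0,
\end{equation}
using $\sum_j g_{i,j} \le 1$. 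Thus the sum is non-increasing, and at the end of the hypothetical execution only true-null budgets remain, giving $\sum_{j \in J_0}\tilde{\delta}_j \le \delta$.

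Third, I would combine these two ingredients with a union bound. An actual rejection of a true null $j$ requires $p_j \le \delta_j^{(\text{actual})} \le \tilde{\delta}_j$, so
\begin{equation}
\P\!\left(\mathcal{A}(p_1,\ldots,p_N) \cap J_0 \neq \emptyset\right) \;\le\; \P\!\left(\exists\, j \in J_0 : p_j \le \tilde{\delta}_j\right) \;\le\; \sum_{j \in J_0} \P(p_j \le \tilde{\delta}_j) \;\le\; \sum_{j \in J_0} \tilde{\delta}_j \;\le\; \delta,
\end{equation}
where the last-but-one inequality uses that $\tilde{\delta}_j$ is deterministic and $p_j$ stochastically dominates the uniform on $[0,1]$ for $j \in J_0$; dependence among the $p_j$ is irrelevant because the union bound is valid marginally.

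The main technical obstacle I anticipate is the monotonicity lemma: I need to verify that the update rule for the edge weights $g_{k,j}$, not just for the node budgets $\delta_j$, preserves the property that the actual-run budgets are dominated by $\tilde{\delta}_j$. This amounts to checking that the updated graph after rejecting a subset $S \subseteq \Lambda$ depends only on $S$ (and not on the order of rejections), and that the map $S \mapsto \delta_j(S)$ is non-decreasing in $S$ for each unrejected $j$. Both facts are known structural properties of the SGT update rule from~\cite{brentz2009graphical}, but they must be invoked carefully so that the reduction to the deterministic worst case is airtight.
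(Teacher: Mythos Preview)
The paper does not supply its own proof of this proposition: it is stated with a citation to \cite{brentz2009graphical} and is omitted from the proofs appendix (Appendix~\ref{app:proofs}), so there is no in-paper argument to compare against. Your outline is correct and is precisely the argument given in the original reference: bound the budget a true null can ever receive by the deterministic quantity obtained after hypothetically rejecting all of $J_1$, show that these deterministic budgets sum to at most $\delta$ via the invariant $\sum_{j \notin \Lhat}\delta_j \le \delta$, and finish with a union bound using superuniformity of the null p-values.

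Your identification of the delicate point is accurate. The monotonicity claim $\delta_j(S_1) \le \delta_j(S_2)$ for $S_1 \subseteq S_2$ and $j \notin S_2$ does \emph{not} follow from the naive observation that ``each rejection only adds nonnegative amounts,'' because the edge weights $g_{k,j}$ are themselves updated and the contributions from later rejections go through these updated weights. The correct route, which you flag, is first to establish order-independence (the state after rejecting a set $S$ depends only on $S$), then to argue monotonicity by rejecting $S_1$ first and observing that each subsequent rejection in $S_2\setminus S_1$ adds a nonnegative amount to $\delta_j$; this in turn requires checking inductively that the updated edge weights remain in $[0,1]$ and still satisfy $\sum_j g_{k,j}\le 1$. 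Both facts are Lemma~1 and the surrounding discussion in \cite{brentz2009graphical}; once you invoke them your reduction to the deterministic worst case is airtight.
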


The choices of the graph $\mathcal{G}$ and initial error budget $\{\delta_i\}_{i \in \Lambda}$ are critical for the power of the procedure.
The general principle is to concentrate the initial error budget on hypotheses likely to reject. 
If these promising hypotheses are indeed rejected, then the error budget should accrue to adjacent hypotheses, giving them a higher chance of rejected.
This allows the user to leverage structural information about the hypothesis space, as we do in Sections~\ref{sec:ood} and~\ref{sec:detection}. 

\subsection{Multiple risks and multi-dimensional $\lambda$}
\label{sec:multi-risk}
In this section, we show that our techniques apply to multiple risks and set constructions with multi-dimensional $\lambda$.

The formal modifications are straightforward. 
We allow $\Lambda$ with multiple dimensions and seek to control $m$ risks $R_1, ..., R_m$ at levels $\alpha_1, ..., \alpha_m$ simultaneously.
Thus, we define the null hypothesis
\begin{equation}
    \mathcal{H}_{j} : R_l(\lambda_j) > \alpha_l, \; \text{ for some } l \in{1,\dots,m}.
\end{equation}
To test this null hypothesis, we examine the finer null hypotheses, $\mathcal{H}_{j,l} : R_l(\lambda_j) > \alpha_l$.
Specifically, $\mathcal{H}_{j}$ holds if and only if there exists a $l \in 1, ..., m$ such that $\mathcal{H}_{j,l}$ holds,
which allows us to apply an FWER-controlling procedure to test $\mathcal{H}_j$, as we now summarize.
\begin{prop}
    \label{prop:multirisk-pvalue}
    Let $p_{j, l}$ be a p-value for $\mathcal{H}_{j,l}$, for each $l = 1, \dots, m$. Define $p_j := \underset{l}{\max}\: p_{j,l}$.
    Then, for all $j$ such that $\mathcal{H}_j$ holds and for all $u \in [0,1]$, we have $\P\left( p_j \le u \right) \leq u$.
\end{prop}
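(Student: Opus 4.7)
The plan is to reduce the max-of-p-values statement to the validity of a single, well-chosen component p-value by picking, for each outcome in which $H_j$ holds, some index $l^*$ such that $H_{j,l^*}$ holds, and then using containment of events.

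Concretely, first I would unpack the definitions. The hypothesis $H_j$ states that $R_l(\lambda_j) > \alpha_l$ for \emph{some} $l \in \{1, \dots, m\}$. Fix such an $l^*$ (which depends on $j$ and on the underlying distribution, but not on the data). By assumption, $p_{j, l^*}$ is a valid p-value for $H_{j, l^*}$, i.e., $\P(p_{j, l^*} \le u) \le u$ for all $u \in [0,1]$.

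Second, I would use the pointwise inequality $p_j = \max_l p_{j, l} \ge p_{j, l^*}$, which holds deterministically. This immediately yields the event inclusion
\begin{equation}
\{p_j \le u\} \subseteq \{p_{j, l^*} \le u\}.
\end{equation}
Taking probabilities and chaining with the validity of $p_{j, l^*}$ gives $\P(p_j \le u) \le \P(p_{j, l^*} \le u) \le u$, as required.

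There is no real obstacle here: the argument is a one-line monotonicity-of-probability observation once the right index $l^*$ is selected. The only thing worth flagging is that $l^*$ need not be unique and may depend on the distribution; since it is a fixed (non-random) index given the distribution, the validity of $p_{j, l^*}$ is inherited directly from the hypothesis of the proposition without any further union bound over $l$.
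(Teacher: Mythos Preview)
Your proof is correct and is precisely the standard one-line argument the paper has in mind; the paper's own proof simply reads ``The result is immediate.'' There is nothing to add.
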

Having calculated valid p-values for each $\lambda_j$, we can now directly use the techniques from the previous section to select a set $\Lhat$ that controls the FWER. 

\subsection{An alternative approach: uniform concentration}
\label{sec:theory-uniform-explanatory}
An alternative approach is to use a \emph{uniform concentration bound} to control the risk.
If the upper bound lies above the true risk for \emph{all $\lambda$ at once} with high probability, we can produce $\Lhat$ by including all $\lambda$ where the bound falls below $\alpha$.
\begin{prop}[Uniform bounds give risk control]
\label{thm:master-uniform}
Let $R^{+}$ be a $1-\delta$ uniform upper confidence bound. i.e., $P(R(\T_\lambda)\le R^+(\T_\lambda)\text{ for any }\lambda\in \Lambda)\ge 1 - \delta.$
For any risk level $\alpha$, consider any $\hat{\lambda}$ such that $R^+\big(\T_{\hat{\lambda}}\big) \leq \alpha$.
Then $\Tlhat$ is an RCP.
\end{prop}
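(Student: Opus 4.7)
The plan is to argue by a direct containment of events, exploiting the fact that the uniform bound holds simultaneously over all $\lambda$ and therefore continues to hold when evaluated at a data-dependent choice $\hat\lambda$. This is the key reason a uniform bound is needed rather than a pointwise bound: without simultaneity over $\Lambda$, plugging in a random $\hat\lambda$ would constitute post-selection inference.

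First, I would name the high-probability event furnished by the hypothesis:
\begin{equation}
E = \bigl\{\, R(\T_\lambda) \le R^+(\T_\lambda) \text{ for all } \lambda \in \Lambda \,\bigr\},
\end{equation}
so that $\P(E) \ge 1 - \delta$ by assumption. Next, on $E$, since $\hat\lambda$ is an element of $\Lambda$ (possibly random, but that is immaterial because the inequality inside $E$ is universal in $\lambda$), we may specialize the bound at $\lambda = \hat\lambda$ to obtain $R(\T_{\hat\lambda}) \le R^+(\T_{\hat\lambda})$. Combining this with the defining condition $R^+(\T_{\hat\lambda}) \le \alpha$ gives $R(\T_{\hat\lambda}) \le \alpha$ on $E$.

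This establishes the event inclusion $E \subseteq \{R(\T_{\hat\lambda}) \le \alpha\}$, and monotonicity of probability yields $\P(R(\T_{\hat\lambda}) \le \alpha) \ge \P(E) \ge 1 - \delta$, which is exactly the claim.

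The only mildly delicate point, and the one I would make explicit in the write-up, is the justification for substituting the random index $\hat\lambda$ into the uniform bound. There is no measurability issue beyond assuming $\hat\lambda$ is a bona fide random variable in $\Lambda$, and no obstacle arises from its dependence on the calibration data, precisely because the quantifier ``for all $\lambda \in \Lambda$'' inside $E$ absorbs any such selection rule. Everything else is a one-line event-containment argument, so there is no serious technical obstacle.
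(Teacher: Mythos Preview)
Your proof is correct and is essentially the same argument as the paper's, which is a one-line invocation of the uniform bound at the random $\hat\lambda$; you have simply spelled out the event-containment and post-selection-inference justification that the paper leaves implicit.
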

We develop a novel and carefully optimized concentration bound in Appendix~\ref{app:theory-uniform} and test its numerical behavior in Appendix~\ref{app:theory-numerics}.
The practical performance of uniform concentration is quite poor (see  Figures~\ref{fig:0_2_coco_histograms}-~\ref{fig:0_1_mse_meps}), which should be expected, as it solves a more difficult problem than is needed for risk control.

\section{Examples}
Next, we consider five large-scale machine learning tasks in computer vision and medicine, using LTT to provide new and useful finite-sample statistical guarantees.

\subsection{FDR Control for Multi-Label Classification}
\label{sec:multilabel}

\begin{figure}[ht]
    \centering
    \includegraphics[width=\linewidth]{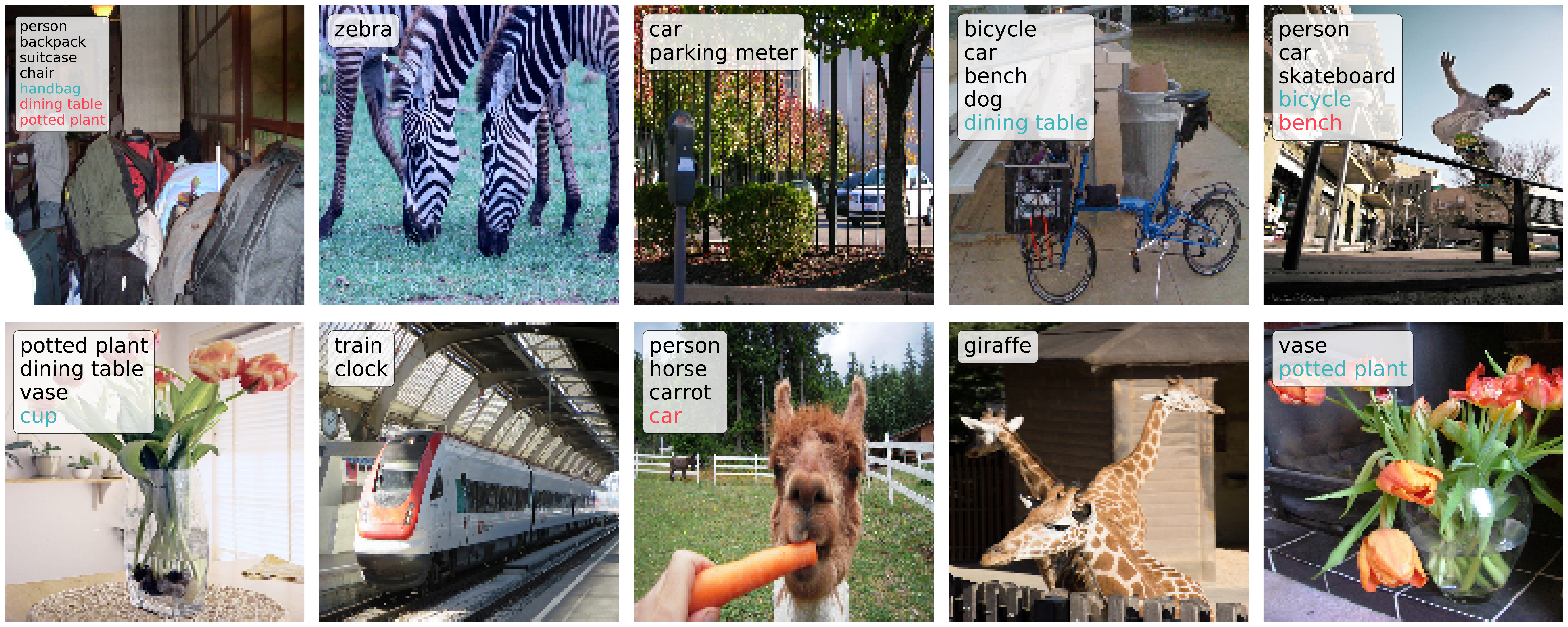}
    \caption{{\bf Multi-label prediction set examples on MS COCO} using fixed-sequence testing. Black classes are correct, blues are spurious, and reds are missed.}
    \label{fig:grid_coco_multiclass}
    \begin{minipage}{0.4\textwidth}
        \centering
        \includegraphics[width=\linewidth]{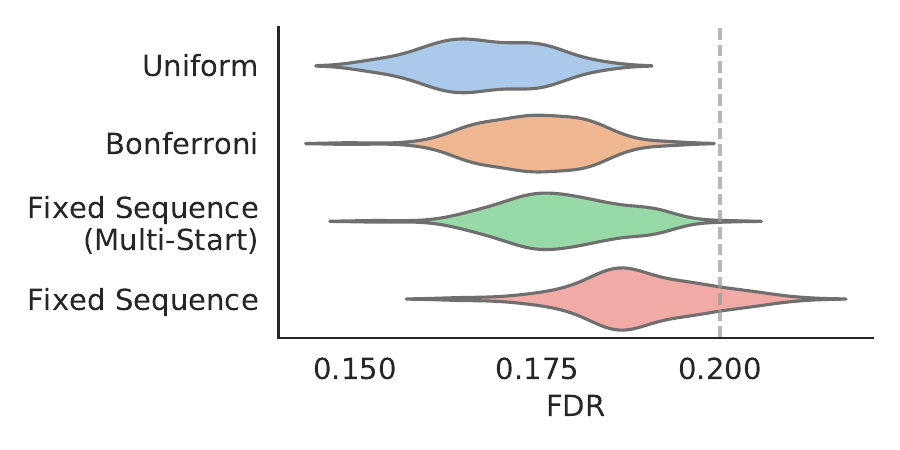}
    \end{minipage}
    \begin{minipage}{0.59\textwidth}
        \centering
        \footnotesize
\begin{tabular}{lccccccc} 
\toprule
Method & 50\% & 75\% & 90\% & 99\% & 99.9\% \\ 
\midrule
Uniform & 3 & 4 & 6 & 11 & 13\\
Bonferroni & 3 & 4 & 7 & 11 & 14\\
Fixed Sequence
(Multi-Start) & 3 & 4 & 7 & 11 & 14\\
Fixed Sequence & 3 & 5 & 7 & 12 & 14\\
\bottomrule
\end{tabular}
    \end{minipage}
    \caption{{\bf Numerical results of our multi-label classification procedure.}
    The risk is plotted as a violin plot over 100 random splits of MS COCO, with parameters $\alpha=0.2$, shown as the gray dotted line, and $\delta=0.1$.
    The table shows quantiles of set sizes for each method (larger is better). 
    For details see Section~\ref{sec:multilabel}.}
    \label{fig:0_2_coco_histograms}
\end{figure}

We first consider multi-label classification, where each input $X$ (in our case, an image) may have multiple corresponding correct labels; i.e., the response $Y$ is a subset of $\{1,...,K\}$. 
We seek to return predictions that control the FDR at level $\alpha$,
\begin{equation}
\label{eq:multiclass-loss}
\mathrm{FDR}(\T) = 1 - \E\left[\frac{|Y \cap \T(X)|}{|\T(X)|}\right],
\end{equation}
where $Y$ is the ground truth label set and $\T$ is the prediction set function.
That is, we want to predict sets that contain no more than $\alpha$ proportion of false labels on average.
In this case, we take our prediction to be the following set of classes:
\begin{equation}
\label{eq:multiclass-sets}
    \T_\lambda(x) = \big\{z \in \{1,...,K\} :\hat{f}_z(x) \geq \lambda\big\},
\end{equation}
where $\hat{f} : \X \to [0,1]^K$ is a base classifier that outputs estimated probabilities that each class is present in the image.
The set-valued prediction $\T_{\lambda}(X)$ contains all sufficiently probable classes (as judged by our base model).
LTT then chooses $\lambda$ in such a way that guarantees FDR control at level $(\alpha, \delta)$.

We use $n=4000$ data points from the 80-class Microsoft Common Objects in Context (MS COCO) computer vision dataset~\citep{lin2014microsoft} and $1000$ for validation.
We plot visual and numerical results of several methods from Section~\ref{sec:theory} for $\alpha=0.2$, $\delta=0.1$, and $\Lambda=\{0,0.001, \ldots, 1\}$ in Figures~\ref{fig:grid_coco_multiclass} and ~\ref{fig:0_2_coco_histograms}.
See Appendix \ref{app:experiments} for more results and details.

\subsection{Selective Classification with Accuracy Control}
\label{sec:selective-classification}
\begin{figure}[b]
    \centering
    \includegraphics[width=\linewidth]{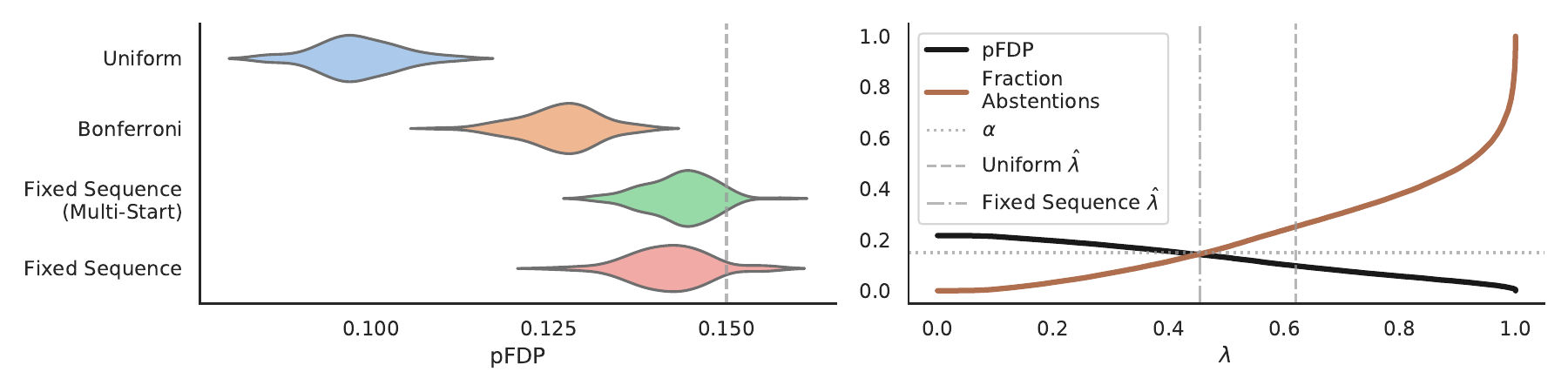}
    \caption{\textbf{Numerical results of selective classification on Imagenet.}
    The violins plot the selective error over 100 data splits at levels $\alpha=0.15$ and $\delta=0.1$. 
    The line plot shows the empirical risk and fraction of abstentions when sweeping across values of $\lambda$.}
    \label{fig:pfdp_0_15}
\end{figure}
We now consider the selective classification problem, sometimes called classification with a reject option or classification with abstention.
Here, $\mathcal{Y} = \{1,\dots,K\}$ for some $K$, and $\hat{f} : \X \to \Delta^K$ (the simplex on $K$ entries, since there can only be one true class).
For each test point, the classifier can either return the top-1 prediction or abstain by returning the empty set $\emptyset$, i.e., taking $y_{\mathrm{max}} = \arg\max_y\hat{f}_y(x)$, we predict
\begin{equation}
    \T_\lambda(x) = 
    \begin{cases}
         \big\{y_{\mathrm{max}}\big\} & \hat{f}_{y_{\mathrm{max}}}(x) > \lambda \\
          \emptyset & \text{else}.
    \end{cases}
\end{equation}
The risk is the error conditionally on predicting,
\begin{equation}
\label{eq:selective-error-def}
    \mathcal{E}(\T) := \P\left[ \T(X) \neq Y \bigg\lvert \ \T(X) \neq \emptyset \right].
\end{equation}
Due to the conditioning, we must calculate the empirical risk only among the subsample of data points that pass the threshold,
\begin{equation}
    \label{eq:selective-empirical-risk}
    \widehat{\mathcal{E}}(\T) = \frac{1}{n(\T)} \sum\limits_{i=1}^n \ind{\T(X_i) \neq Y_i \text{ and } |\T(X_i)| > 0} \text{, where } n(\T) = \sum\limits_{i=1}^n \ind{|\T(X_i)| > 0}.
\end{equation}
Because selective classification involves control of a binary loss, we use the exact binomial tail bound (the HB bound without the extra factor $e$) to calculate the p-value.

We now demonstrate selective classification on the Imagenet~\citep{deng2009imagenet} computer vision dataset using the model from~\citet{he2016deep}.
We report numerical results in Figure~\ref{fig:pfdp_0_15}, using 5K calibration points and 45K validation points with $\alpha=0.15$, $\delta=0.1$, and $\Lambda=\{0,0.001, \ldots, 1\}$. 
To avoid degeneracy, we clip all $\lambda$ with $n(\Tlam) < 25$.

\subsection{Selective Regression with MSE Control}
\label{sec:meps}
\begin{figure}[h]
    \centering
    \includegraphics[width=\linewidth]{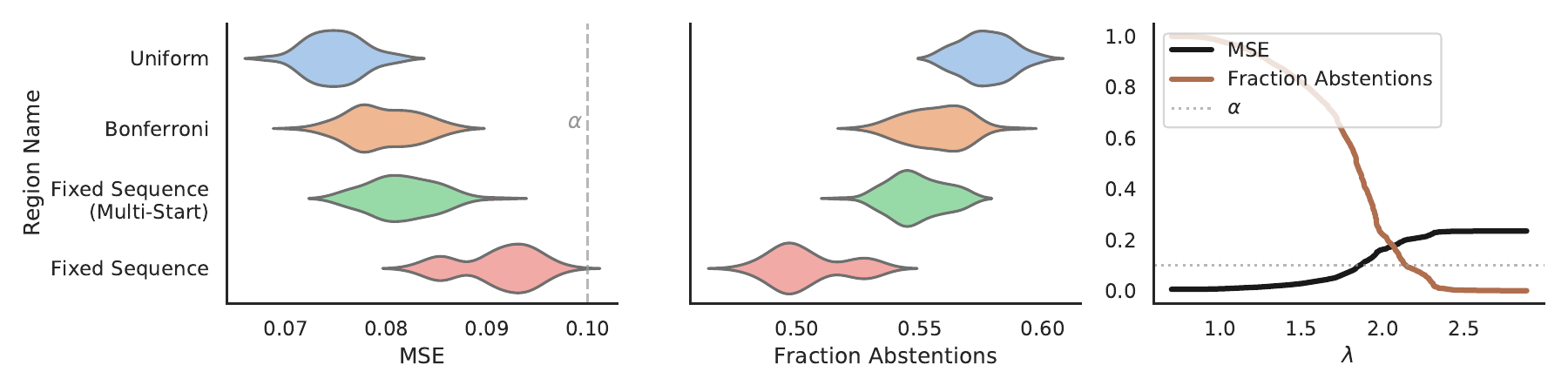}
    \caption{{\bf Numerical results of selective regression on the MEPS dataset.}
    The MSE is plotted as a violin plot over 100 random splits of the MEPS data, with parameters $\alpha=0.1$, shown as the gray dotted line, and $\delta=0.1$.
    The fraction of abstentions is plotted similarly.
    The line plot shows the tradeoff.
    For details, see Section~\ref{sec:meps}.}
    \label{fig:0_1_mse_meps}
\end{figure}
In the next example, we perform a \emph{selective regression} task, abstaining when the model is least confident to control the mean-squared error (MSE).

Formally, each input $X$ is a vector in $\X = \R^d$, where each of the $d$ dimensions signifies a covariate (a column index in the tabular dataset). 
The response $Y \in \R_{+}$ is in this case a medical expenditure.
The goal is to predict when we can be assured a low MSE and to abstain otherwise; that is, we control the conditional risk
\begin{equation}
    \label{eq:mse-risk}
    \mathrm{MSE}(\T) = \E\Big( \big(\T(X) - Y\big)^2 \Big\vert \T(X) \ne \emptyset \Big),
\end{equation}
where $\T$ takes values in $\mathbb{R}$ or the value $\emptyset$ signifying abstention.
We can obtain finite-sample valid p-values by calculating the empirical MSE on the subset of data points that pass the selection, as in Section~\ref{sec:selective-classification}.

For the set construction, we will use a model $\hat{f} : \X \to \R_{+}$ estimating the conditional mean and a second model $\hat{r} : \X \to \R_{+}$ estimating the magnitude of the residual of $\hat{f}$.
We abstain when the estimated residual magnitude is large,
\begin{equation}
    \T_\lambda(x) = 
    \begin{cases}
           \hat{f}(x) & \hat{r}(x) \leq \lambda \\
          \emptyset & \text{else}
    \end{cases}.
\end{equation}
We demonstrate results on the 2019 Medical Expenditure Prediction Survey (MEPS) dataset~\citep{cohen2009medical} in Figure~\ref{fig:0_1_mse_meps} for $\alpha=\delta=0.1$.
The MEPS dataset contains 15784 data points consisting of covariates about households paired with their yearly medical expenditures; we seek to predict the latter from the former.
We used $7892$ data points to train $\hat{f}$ and $\hat{r}$, $n=3946$ calibration data points, and $3946$ validation data points.
For $\hat{f}$ and $\hat{r}$ we use gradient boosting trees~\citep{friedman2001greedy}.

\subsection{Prediction Sets with OOD Detection}
\label{sec:ood}
Next, we seek to make predictions that either declare an input to be out-of-distribution (OOD) or return a prediction set with a coverage guarantee.
In this setting, we want to detect as many out-of-distribution inputs as possible while only falsely rejecting some pre-specified fraction (e.g., 1\%) of in-distribution inputs.
Furthermore, within the subset of inputs deemed in-distribution, we ask for prediction sets with coverage $1 - \alpha_2$ (e.g., 99\%).
This results in a two-dimensional risk function with coordinates
\begin{equation}
    R_1(\T) = \P\left( |\T(X)| = 0\right)\;\;\;\text{   and       }\;\;\;R_2(\T) = \P\left( Y \notin \T(X) \mid |\T(X)| > 0 \right).
\end{equation}
We apply LTT as in~Section~\ref{sec:multi-risk} such that both $R_1(\T) \le \alpha_1$ and $R_2(\T) \le \alpha_2$ hold with probability at least $1-\delta$.

We will abstain by outputing $\emptyset$ when the example is likely to be OOD based on an OOD detector $\mathrm{OOD}(x)$ and otherwise output prediction sets.
If $\mathrm{OOD}(x) \geq \lambda_1$, we output $\emptyset$.
Otherwise, as in conformal prediction, we use a score function $s(x,y)$ that controls which classes are in the prediction set, and threshold it at $\lambda_2$.
Concretely, the set construction procedure is:
\begin{equation}
    \label{eq:ood-conformal-nested-sets}
    \T_\lambda(X) =  \begin{cases} 
                      \emptyset, & \mathrm{OOD}(X) \geq \lambda_1, \\
                      \{ y : s(X,y) \leq \lambda_2 \} \cup \underset{y}{\arg \min} \; s(X,y), & \mathrm{OOD}(X) < \lambda_1.
                   \end{cases}
\end{equation}
The output of LTT will be all pairs of $(\lambda_1, \lambda_2)$ that simultaneously control both the type-1 error of OOD detection, and the coverage conditional on prediction.

We design a specialized multiple-testing method to take advantage of the coordinate-wise monotonicity of the risk.
Notice that the risks are coordinate-wise monotonic, meaning if $\lambda_2$ is fixed, $\lambda_1^{(1)} \le \lambda_1^{(2)} \implies R_1\big((\lambda_1^{(1)},\lambda_2)\big) \leq R_1\big((\lambda_1^{(2)},\lambda_2)\big)$
and vice-versa when $\lambda_1$ is fixed.
Furthermore, for a fixed $\lambda_1$, we are most interested in the smallest $\lambda_2$, and vice versa; this essentially defines a Pareto frontier between $R_1$ and $R_2$.
We will use SGT as described in~\ref{sec:sgt} to efficiently identify values of $(\lambda_1, \lambda_2)$ that control the risk in order to try to maximize power.
We describe these testing methods, dubbed \emph{2D fixed sequence testing}, \emph{fallback SGT}, and \emph{hamming-equalized SGT}, in Appendix~\ref{app:hamming-sgt}.

\begin{figure}[t]
    \centering
    \includegraphics[width=\linewidth]{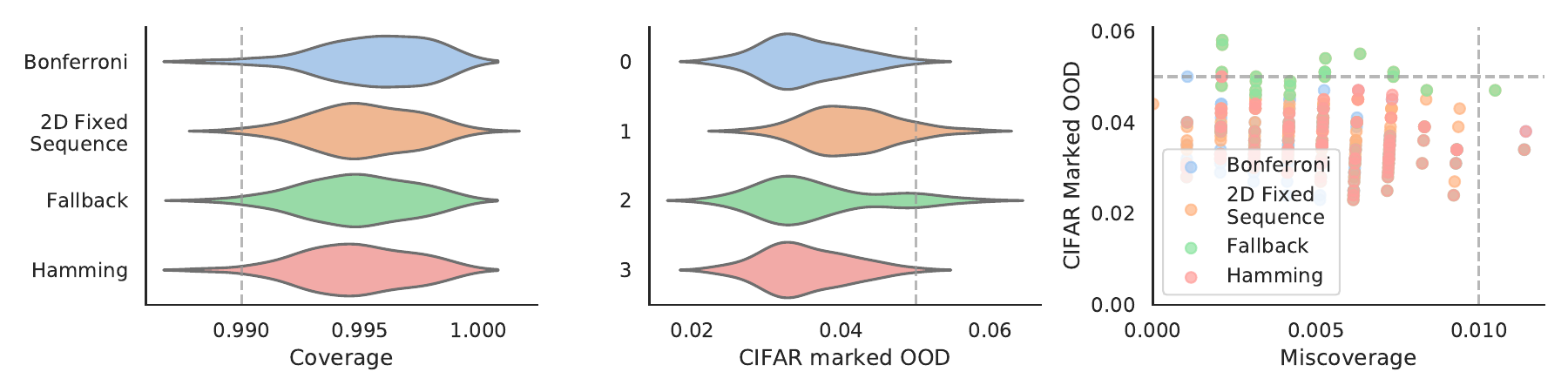}
    \caption{{\bf Numerical performance of methods for simultaneous OOD type-1 error and coverage control on CIFAR-10} with $\alpha_1=0.05$, $\alpha_2=0.01$, and $\delta=0.1$.
    The violins quantify coverage, type-1 error, and the power of the OOD procedure against Imagenet images over 1000 random data splits.
    The gray dotted lines show $\alpha_1$ and $\alpha_2$.}
    \label{fig:ood_bonferroni_rescaled}
\end{figure}

We evaluate these methods on the 10-class CIFAR-10 dataset~\citep{krizhevsky2009learning}. 
We take a DenseNet classifier $\hat{f}$ as our base model. 
As our OOD score, we use the ODIN method~\citep{liang2017enhancing}; see Appendix~\ref{app:odin-rescaling} for details. 
As our classification score function $s(x,y)$, we use the conformal score from the Adaptive Prediction Sets procedure~\citep{romano2020classification}.
Formally, the score function is
\begin{equation}
    \label{eq:aps-score}
    s(x,y) = \sum\limits_{j=1}^{k-1} \hat{f}_{\pi_j}(x) + U\hat{f}_{\pi_k}(x) \text{ where } y=\pi_k, \: U \sim \mathrm{Unif}(0,1),
\end{equation}
and $\pi$ is the permutation of $\{1,...,K\}$ that sorts $\hat{f}(X)$ from most to least likely.

The numerical performance of our method is reported in Figure~\ref{fig:ood_bonferroni_rescaled}. 
Here, we used $\Lambda = \{0, \frac{1}{1000}, \frac{2}{1000},...,1\}^2$ with 8000 calibration points and 2000 validation points. 
As expected, we see that we control both risks at the desired level.
To evaluate the power of the OOD detection substep, we also tested it on 10,000 downsampled images from Imagenet; it correctly identifies $>99\%$ of Imagenet images as OOD.
Thus, our strategy leads to reasonable results which are not conservative in coverage or OOD Type-1 error and easily distinguish against Imagenet.
Indeed, the SGT procedure violates the desired risks $10\%$ of the time when we set $\delta=10\%$; the procedure hits exactly the target level.

\section{Example: Instance Segmentation with mIOU, Coverage, and Recall Guarantees}
\label{sec:detection}
We finish with our flagship example: object detection.
We will focus on a variant of object detection called \emph{instance segmentation}, in which we are given an image and asked to (1) identify all distinct objects within the image, (2) segment them from their background, and (3) classify them.
See Figure~\ref{fig:detection-examples} for several examples of our procedure. 
These three goals can be formally encoded by evaluating a detector's recall, the \emph{Intersection-over-Union} (IOU) of the segmentation masks with the ground truth mask, and the misclassification rate, respectively.
Traditionally, these measures are combined into a single metric called \emph{average precision} to heuristically evaluate the performance of a detector~\cite{he2017mask}.
Although the average precision can also be handled by our methods, here we will take the stronger stance of \emph{controlling} all three error rates at the same time.

In more detail, we will use \texttt{recall}, \texttt{IOU}, and \texttt{coverage} as 
the error rates corresponding to the three subtasks of detection. (These error rates will be formally defined soon.) We set up our experiments such that the detector has three final tuning parameters, $\lambda_1, \lambda_2$ and $\lambda_3$, that each controls the detector's performance on one of the three tasks.
In particular, $\lambda_1$ tunes the number of objects that are selected, $\lambda_2$ tunes the size of the bounding regions, and $\lambda_3$ tunes the certainty level for classification. The setup can be summarized as follows: 
\begin{center}
\begin{tabular}{ l c c }
\textbf{Goal} & \textbf{Quantity to Control} & \textbf{Parameter} \\
 (1) Locate distinct objects & \texttt{recall} & $\lambda_1$ \\ 
 (2) Find the precise set of pixels for each object & \texttt{IOU} & $\lambda_2$ \\  
 (3) Assign the right class to each object & \texttt{coverage} & $\lambda_3$ 
\end{tabular}
\end{center}
In truth, this is a simplification, in that the parameters $\lambda_1, \lambda_2$ and $\lambda_3$ are not entirely disentangled: changing any one parameter can change all three error rates. Nonetheless, $\lambda_1$ primarily corresponds to the \texttt{recall} error rate, and so on. This will be made precise in the next section.
Lastly, before launching into the formal details, we note that state-of-the-art object detectors like \texttt{detectron2}~\cite{wu2019detectron2} have many substeps, and for the sake of brevity we will only explain the relevant ones.

\subsection{Formal specification of object detection}

\begin{figure}
    \centering
    \includegraphics{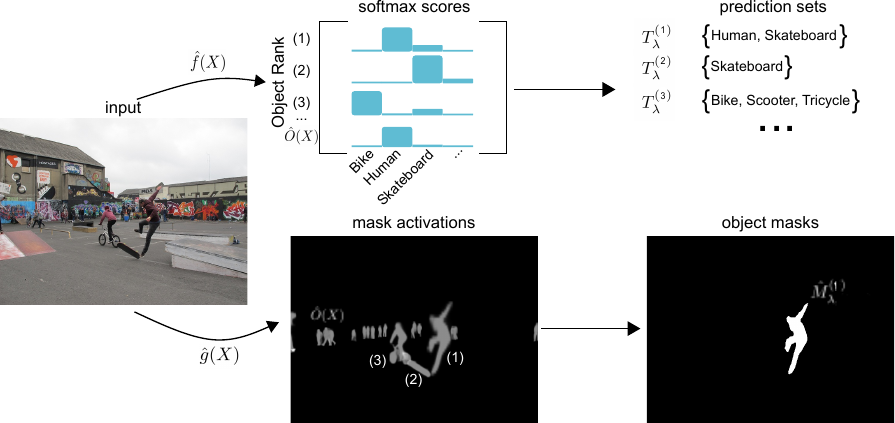}
    \caption{\textbf{Our detection pipeline} annotated with the formal mathematical notation to aid the reader.}
    \label{fig:my_label}
\end{figure}

Now we begin a formal treatment of the instance segmentation problem.
Our inputs are images $X_i \in \R^{H \times W \times D}$, $i=1,...,n$ where $H$, $W$, and $D$ are the height and width and channel depth respectively (in practice each image is a different size, but we ignore this for notational convenience).
Along with each input, we receive a set of tuples containing a mask and a class for each of the $O(X_i)$ number of objects in image $X_i$. 
The response $M_i^{(j)} \in \{0,1\}^{H \times W}$, $ j=1,...,O(X_i)$ represents a binary segmentation mask for the $j$th object in image $i$ and the response $C_i^{(j)} \in \{1,...,K\}$ represents the class of that object.
Thus, for a given $X_i$, the response $Y_i$ is the sequence of masks and classes
\begin{equation}
    Y_i = \left\{ \left(M_i^{(j)}, C_i^{(j)}\right)\right\}_{j=1}^{O(X_i)}.
\end{equation}
For our purposes, an object detector comprises three functions: $\hat{O}(X_i)$, which gives the number of predicted objects, $\hat{f}(X_i) \in [0,1]^{\hat{O}(X_i) \times K}$ \lihua{(with elements $\hat{f}_{k}^{(j)}(X_i)$ where $j\in \{1, \ldots, \hat{O}(X_i)\}$ and $k \in \{1, \ldots, K\}$)}, a set of estimated class probabilities for each of the  predicted objects, and $\hat{g}(X_i) \in [0,1]^{\hat{O}(X_i) \times H \times W}$ (\lihua{with elements $\hat{g}^{(j)}_{h, w}$ where $j\in \{1, \ldots, \hat{O}(X_i)\}$, $h \in \{1, \ldots, H\}$, and $w\in \{1, \ldots, W\}$}), representing the probability that a pixel comes from each predicted object.
In our eventual pipeline, the job of $\hat{f}$ will be to classify each predicted object, and the job of $\hat{g}$ will be to form binary segmentation masks for each.
It will soon become relevant that the number of predicted objects is not equal to the number of ground truth objects, i.e., $\hat{O}(X_i) \neq O(X_i)$.
Now, we must tackle the task of turning the raw outputs $\hat{f}(X)$ and $\hat{g}(X)$ into binary masks and classes, while filtering out those predictions deemed unreliable.

First, we select only the most confident detections for eventual display to the user.
In words, we will only output objects with a sufficiently high softmax score, since lower scores suggest that the object is more likely to be spurious.
We will call the indexes of those objects
\begin{equation}
    \tilde{J}_{\lambda}(X) = \{j : \lihua{\max_{k} \hat{f}_{k}^{(j)}(X)} \geq 1-\lambda_1 \} \subseteq \{1,\dots, \hat{O}(X) \}.
\end{equation}
We will use $\tilde{J}_{\lambda}(X)$ to ignore objects with top scores lower than the threshold $1-\lambda_1$ in later risk computations.

Second, we explain the construction of binary masks.
We set the $(h,w)$ coordinate of the estimated masks for the $j$th object as
\begin{equation}
    \hat{M}^{(j)}_{\lambda, (h,w)}(X) = \ind{\lihua{\hat{g}^{(j)}_{h,w}(X)} \geq  1 - \lambda_2 
    \text{, } j \in \tilde{J}_{\lambda}(X)}.
\end{equation}
Let us reflect on the construction of the mask.
If the pixel truly comes from the object, $\lihua{\hat{g}^{(j)}_{h,w}(X)}$ should be large, which motivates us to threshold these values at $1-\lambda_2$.
Also note that if the model is not sufficiently confident about the object's class, then $j \notin \tilde{J}_{\lambda}(X)$, causing the procedure to give up and output a mask of all zeros. 

Third, we discuss the formation of the prediction sets for the class of the object.
We set 
\begin{equation}
    T^{(j)}_{\lambda}(X) =  \begin{cases} 
                              \emptyset & j \notin \tilde{J}_{\lambda}(X) \\
                              \left\{ \lihua{k} : s^{(j)}(X,\lihua{k}) \leq \lambda_3 \right\} \cup \left\{ \lihua{\underset{k}{\arg \max}\,      \hat{f}_{k}^{(j)}(X)} \right\}  & j \in \tilde{J}_{\lambda}(X),
                           \end{cases}
\end{equation}
where $s^{(j)}(X,\lihua{k})$ is the score from~\eqref{eq:aps-score} of class $k$ on predicted object $j$ from image $X$, which is lower for more likely classes (see \eqref{eq:aps-score}).
Holding $\lambda_1$ fixed, the prediction set grows when $\lambda_3$ grows.
Also, as in the example of the mask construction, if the detector is not sufficiently confident about the top class, the procedure gives up and outputs the null set.

Having described both ingredients of our final output, we can now bring them together.
Given an input image $X$, our final prediction is a sequence of $\hat{O}(X)$ masks and prediction sets for the object class,
\begin{align}
    \hat{Y}_{\lambda} & = \left\{ \left(\hat{M}_{\lambda}^{(j)}(X), \hat{T}_{\lambda}^{(j)}(X)\right)\right\}_{j=1}^{\hat{O}(X)}.
\end{align}

Now we will formally define the risk functions. 
Before doing so, we begin by defining the \texttt{IOU} for two binary masks:
\begin{equation}
    \iou(M,\hat{M}) = \frac{\#\left\{ (h,w) : M_{(h,w)} = 1 \text{ and } \hat{M}_{(h,w)} = 1 \right\}}{\#\left\{ (h,w) : M_{(h,w)} = 1 \text{ or } \hat{M}_{(h,w)} = 1 \right\}}.
\end{equation}

\begin{figure}
    \centering
    \includegraphics[width=\textwidth]{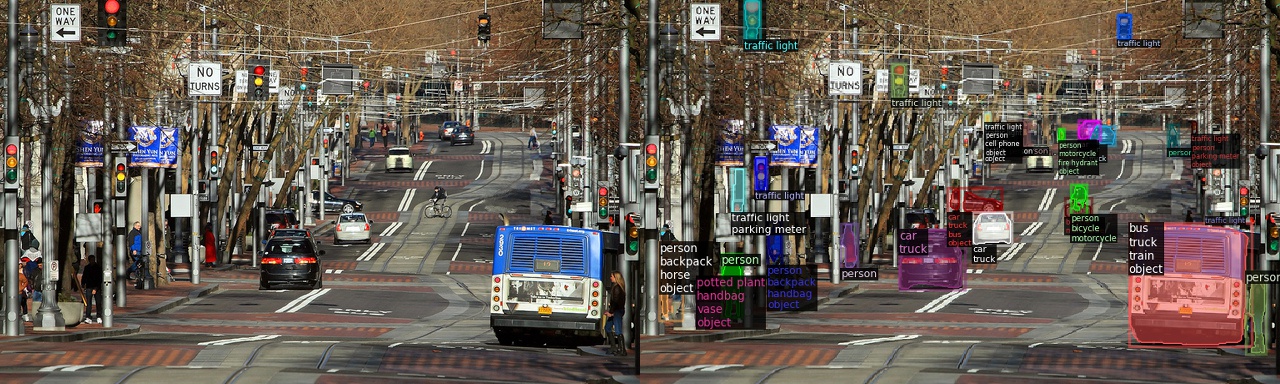}
    \includegraphics[width=1.0\textwidth]{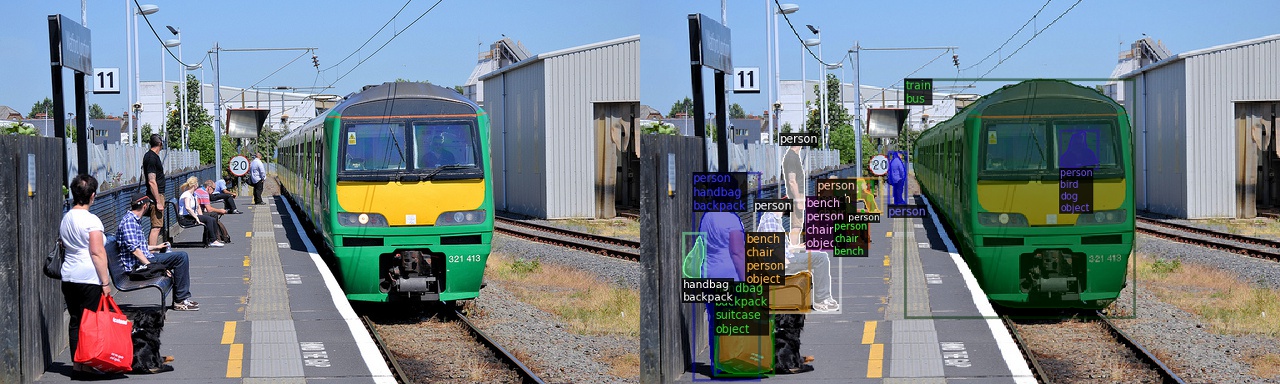}
    \includegraphics[width=\textwidth]{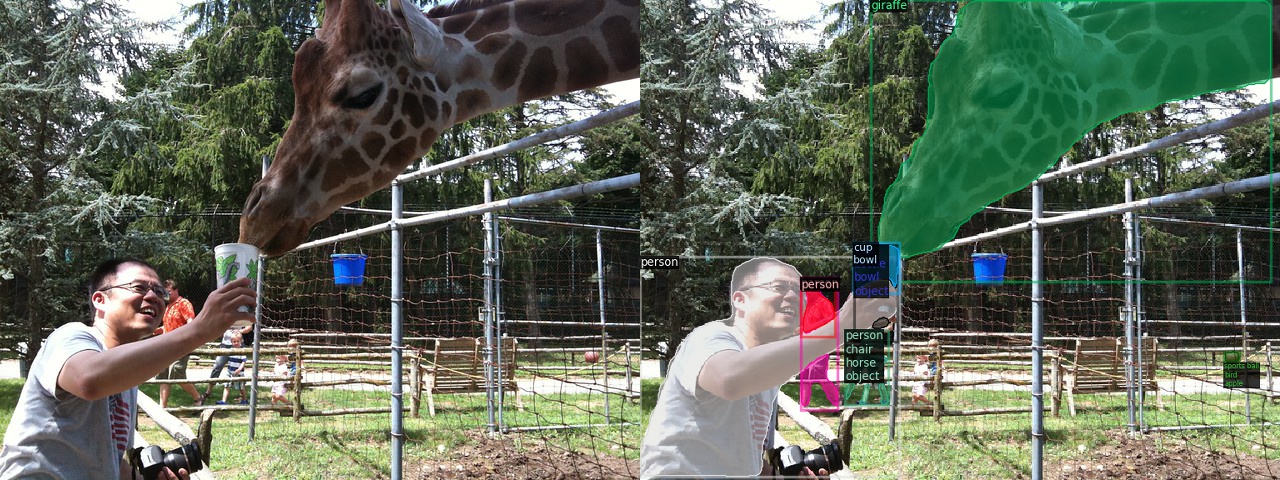}
    \caption{\textbf{Instance segmentation examples on MS COCO.} The input images are on the left, and the output segmentations are on the right.
    The segmentation masks are shown in random colors over their respective objects, and the prediction sets are included as color-coded lists of classes adjacent to the bounding box for each object.
    We produced these images by running the fixed-testing procedure with $\alpha_1=0.5$, $\alpha_2=0.5$, $\alpha_3=0.25$, and $\delta=0.1$.}
    \label{fig:detection-examples}
\end{figure}
The \texttt{IOU} is an FDR-like quantity for segmentation masks.
Importantly, the \texttt{IOU} function gets used to determine the correspondence between the predicted objects and the ground truth objects---recall that we do not know which predicted objects, if any, correspond to each ground truth object.
To match the predictions with the ground truth, we set $\tilde{\pi}$ and $\tilde{\pi}'$ to be the permutation maximizing
\begin{equation}
    \sum\limits_{j=1}^{\min\left(O(X),\hat{O}(X)\right)} \iou(M^{\tilde{\pi}(j)},\hat{M}_{\lambda}^{\tilde{\pi}'(j)}(X))\ind{j \in \tilde{J}_{\lambda}(X)}.
\end{equation}
From now on, assume without loss of generality that $\tilde{\pi}(j) = \tilde{\pi}'(j) = j$.
Finally, we define our three risks,
beginning with the negative of the \texttt{recall},
\begin{equation}
    R_1 = \E\left[1- \frac{1}{O(X)}\sum\limits_{j=1}^{\min\left(O(X),\hat{O}(X)\right)} \ind{C^{(j)} = \underset{k}{\arg \max} \hat{f}_{\lihua{k}}^{(j)}(X), j \in \tilde{J}_{\lambda}(X) } \right],
\end{equation}
continuing with the negative of the average \texttt{IOU},
\begin{equation}
    R_2 = \E\left[1-\frac{1}{|\tilde{J}_{\lambda}(X)|}\sum\limits_{j=1}^{\min\left(O(X),\hat{O}(X)\right)} \iou(M^{(j)}, \hat{M}_{\lambda}^{(j)}(X)) \ind{j \in \tilde{J}_{\lambda}(X)} \right],
\end{equation}
and ending with the negative of the image-wise average \texttt{coverage},
\begin{equation}
    R_3 = \E\left[1-\frac{1}{|\tilde{J}_{\lambda}(X)|}\sum\limits_{j=1}^{\min\left(O(X),\hat{O}(X)\right)} \ind{C^{(j)} \in \hat{T}_{\lambda}^{(j)}(X), j \in \tilde{J}_{\lambda}(X)}\right].
\end{equation}
The reader should notice that in $R_2$ and $R_3$, the term $|\tilde{J}_{\lambda}(X)|$ penalizes spurious detections.

\begin{figure}
    \centering
    \includegraphics[width=1.0\textwidth]{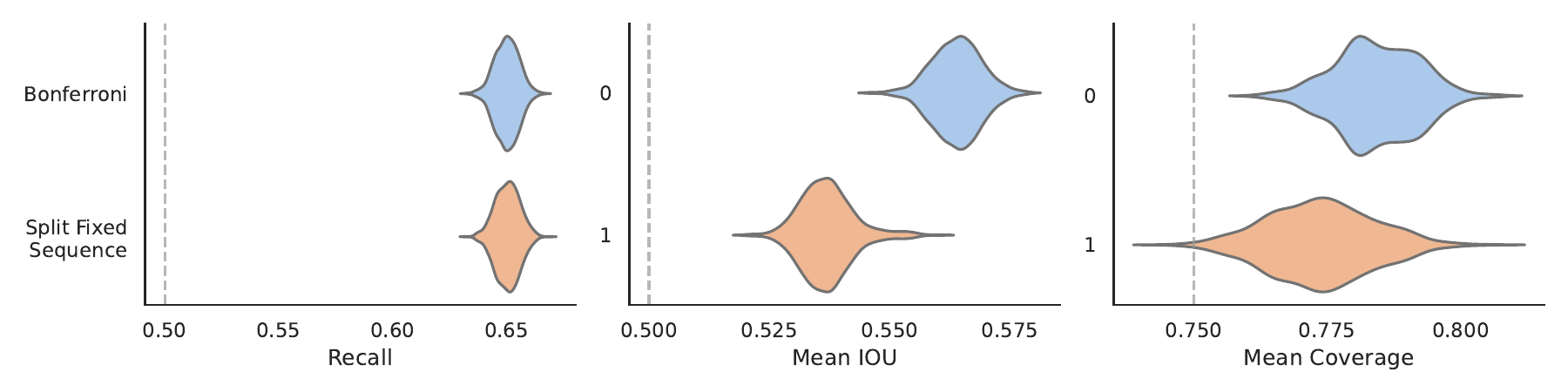}
    \caption{\textbf{Numerical results for our instance segmentation algorithm.}
    In order from left to right, we plot the mean \texttt{coverage}, mean \texttt{IOU}, and \texttt{recall} over 1000 random splits of the MS-COCO dataset. The gray dotted lines represent the coordinates of $\alpha$.
    We chose $\alpha_1=0.5$, $\alpha_2=0.5$, $\alpha_3=0.25$, and $\delta=0.1$.
    We define the split fixed sequence testing procedure in Appendix~\ref{app:split-fixed-sequence}.
    Here, we have chosen $\hat{\lambda}$ in such a way as to make coverage as tight as possible for both procedures. This leads to the conservativeness of \texttt{recall} and \texttt{IOU}; see the discussion at the end of Section~\ref{sec:detection}.
    }
    \label{fig:detection-histograms}
\end{figure}
As with our FWER-controlling procedure, we developed a new method called split fixed sequence testing, described in detail in Appendix~\ref{app:split-fixed-sequence}.
The basic inspiration for split fixed sequence testing is that when it is unclear how to construct the underlying graph for SGT, we can learn it from an extra split of data.
As a final remark on the theory, once we select a set $\widehat{\Lambda}$ with a FWER-controlling procedure as in Proposition~\ref{prop:multirisk-pvalue}, we still have to select the specific element $\lhat \in \widehat{\Lambda}$ to report our results.
Because we use FWER control to pick $\widehat{\Lambda}$, we can optimize over the parameters however we wish.
Accordingly, we follow the following procedure (note that all coordinates of $\lambda$ depend on each other, and must be chosen jointly):
\begin{align}
    \lhat_3 & = \underset{\lambda \in \Lhat}{\min} \left\{\lambda_3 : \exists \lambda_1 \in \Lhat,  1-\lambda_1 < \lambda_3\right\}\\
    \lhat_1 & = \underset{\lambda \in \Lhat}{\max} \left\{\lambda_1 : \lambda_3 = \lhat_3\right\} \\
    \lhat_2 & = \max_{\lihua{\lambda\in\Lhat}} \left\{ \lambda_2 : \left[\lhat_1, \lambda_2, \lhat_3\right] \in \underset{\lambda \in \Lhat}{\arg \min} \; \hat{R}_2(\lambda) \right\} ,
\end{align}
which optimizes over $\Lhat$ to get meaningful prediction sets, many detections, and a high IOU.

We again used the MS-COCO dataset for experiments, this time using Facebook AI Research's \texttt{detectron2} library to provide a pretrained detector.
We modified the detector to give raw softmax outputs for each object's class and raw sigmoid outputs for the binary masks.  
These became the functions $\hat{f}$ and $\hat{g}$ we described earlier.
We discretized the space, picking 
\begin{equation*}
    \Lambda = \underbrace{\{0.2,...,0.5\}}_{\text{\texttt{linspace(0.2,0.5,50)}}}\times\underbrace{\{0.3,...,0.7\}}_{\text{\texttt{linspace(0.3,0.7,5)}}}\times\underbrace{\{0.99,...,1\}}_{\text{\texttt{logspace(-0.00436,0,25)}}}.
\end{equation*}
Over 1000 random splits of the 5000 point validation set into a calibration set of 3000 points and a validation set of 2000 points, we produced violin plots of the three risks using the Bonferroni version of our method, with the target levels $\alpha = [0.5,0.5,0.75]$ and $\delta = 0.1$.
Although the levels $\alpha$ are nominally higher than they were in previous examples, they are quite stringent in this context.
For example, note that $R_3$ almost looks like coverage, but in practice is much more difficult to achieve because of the average over all predicted objects; if an object is predicted that does not match with a ground truth object, it counts as a miscoverage event.
So, the level $0.75$ is challenging.

Lastly, the reader may notice that the \texttt{recall} and \texttt{IOU} violin plots in Figure~\ref{fig:detection-histograms} are conservative.
This is due to the algorithm for choosing $\lhat$; we choose $\lambda_3$ first, $\lambda_1$ second, and $\lambda_2$ last.
By choosing $\lambda_3$ first, we told the algorithm to choose small prediction sets, then to optimize \texttt{recall} and \texttt{IOU} to be both controlled and as far from the nominal level as possible, provided the constraint on \texttt{coverage} holds.
This is why the \texttt{coverage} is controlled tightly, while the other risks are farther away from the constraint
(as shown in Figure~\ref{fig:detection-histograms}). 
Importantly, all risks are always controlled, so the calibrated detector can be used with confidence.

\section{Discussion}
\label{sec:discussion}

Our examples showcase useful statistical guarantees for complicated, modern machine learning systems without directly analyzing the fitting procedure.
Statistical procedures like LTT or conformal prediction, which wrap around machine learning deployments, are likely to increase in popularity---and even to become standard---since machine learning engineers will continue to build new models faster than case-by-case theoretical analyses can be completed.
Indeed, in the face of neural network engineering, statistics itself has become decoupled from how engineers train models; meanwhile, statistical ideas are all-the-more important for interpreting these non-parametric predictors.
We believe this tension will be a critical point of discussion for statisticians.

On the technical side, hypothesis testing combined with SGT allows for very tight control of risks while uniform concentration does not.
This is surprising; SGT was developed within the medical statistics community for very small hypothesis spaces, such as two major and two minor endpoints of a clinical trial.
Meanwhile, uniform concentration has a long history of use for the analysis of machine learning algorithms, and we derived sharp constants for the bound we tested using advanced uniform concentration inequalities~\citep[e.g.][]{anthony1993result}.
Despite this, in our work the testing framework is more useful, perhaps because it solves the easier discretized problem.

Turning to open questions, we first mention what has become a standard surgeon-general's warning in the field of distribution-free statistics: all the guarantees in this paper are marginal.
The reader should internalize what this means: we cannot guarantee that the errors are balanced over different strata of $X$- and $Y$-space, even meaningful ones such as object class, race, sex, illumination, et cetera.
All the errors may occur in one pathological bin---although it is possible to guard against such behaviors by designing a good score and evaluating the algorithms over relevant strata. Extending the proposed techniques to have errors exactly or approximately balanced across strata is an open direction of great importance.

The work presented herein opens many other questions for future work. 
We have not explored the question of optimal testing procedures---neither when they exist, nor what they would be, nor when they are SGTs.
Additionally, the SGTs used above are relatively simple and somewhat hand-designed heuristic procedures.
One could hope to do better by learning the graph from another data split.
We are not aware of work addressing this topic---such methods would not have been relevant in the medical statistics community that originated SGTs.
On a different note, the capability to tightly control non-monotonic risks should enable computer scientists and statisticians in many applied areas to more carefully audit and certify their algorithmic predictions.
We hope our examples demonstrate to this community that, even in complex setups with sophisticated algorithms, pragmatic and explicit statistical guarantees are possible.

\printbibliography

\clearpage
\appendix

\section{Experimental Details and Additional Results}\label{app:experiments}

\subsection{Numerical comparisons}
\label{app:theory-numerics}
We directly compare the numerical performance of the methods that we have presented in a synthetic first-order autoregressive (AR) process.
An AR process is a sequence of random variables where the elements in the sequence depend on the previous value plus a noise term.
Although the elements in the sequence are correlated, the mean of the process can be designed to have any shape.
Loosely, we model the loss of example $i \in \{1,...,n\}$ as an independently drawn AR process indexed by a one dimensional ordered set of $\Lambda$, with a ``V''-shaped mean.
In other words, the risk is ``V''-shaped, with the tip falling below $\alpha$.
We will make this description mathematically concrete in a moment.

In addition to the multiple testing strategies presented in the main text, we consider an alternative approach outlined in Appendix~\ref{app:theory-uniform}, using uniform concentration results to guarantee risk control. 
Unfortunately, this approach was seen to be more conservative than the multiple testing approach in the main-text examples.

Now, we turn to the concrete definition of our AR process.
We consider a case where $\Lambda = \{0.001, 0.002, \dots, 1\}$ and the true risk $R(\lambda)$ is ``V''-shaped, falling below the desired level $\alpha$ in the center of the interval; see Figure~\ref{fig:concentration_comparisons}. 
For each observation $i=1,\dots,n$, we
simulate losses for $\T_{\lambda_j}$ from the following first-order autoregressive model:
\begin{align}
    \label{eq:armodel}
    L_{i,j} &= \Phi(u_j + \mu_j) \\
    u_j &= \mathrm{corr} \cdot u_{j-1} + \sqrt{1-\mathrm{corr}^2} \cdot \mathcal{N}(0,1).
\end{align}
Here, the $\mu_j$ are chosen in such a way as to create the ``V''-shaped risk curve; note that there is no dependence on $i$ anywhere, because an independent random process gets generated for each sample $i \in \{1,...,n\}$.
From this simulated data, we form p-values for each hypothesis using the Hoeffding-Bentkus bound in~\eqref{eq:hb-p-value}. 
Then, we apply the Bonferroni test and the fixed sequence test to these p-values.

Before turning to the results, we pause to think about the meaning of our setting.
Here, we wish to find a value of $\lambda$ such that the risk is below $\alpha$, but do not know in advance where this may happen. 
Therefore, we use the data to test whether the risk is below $\alpha$ for each value of $\lambda$ using the calibration data. Note that because in practice all tests are based on the same calibration data, the results will be dependent, which is why our simulated setting includes the autoregressive correlation term.

The results are recorded in Figure~\ref{fig:concentration_comparisons}, which we parse next.
Firstly, the uniform concentration bound, perhaps predictably, performs badly in practice; see our discussion of this topic in Section~\ref{sec:discussion}. 
Secondly, the fixed sequence method is less conservative than the Bonferroni method, as expected; fixed sequence testing is designed to make the rejection set  as wide as possible on the right side.

\begin{figure}[H]
    \centering
    \begin{minipage}{0.45\textwidth}
        \includegraphics[width=0.7\linewidth]{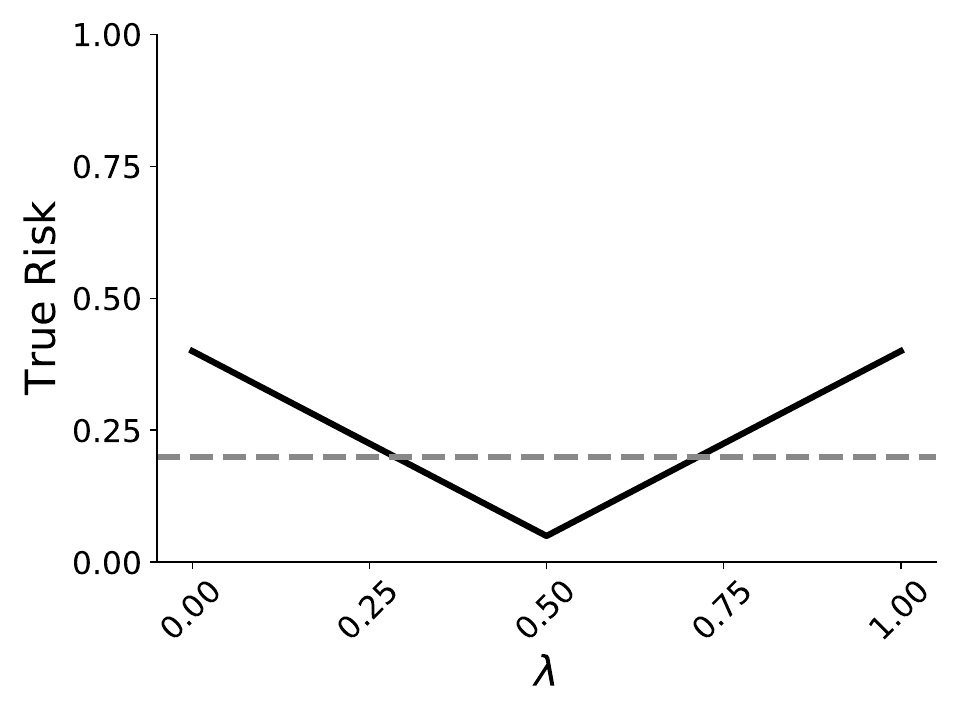}
    \end{minipage}
    \begin{minipage}{0.45\textwidth}
        \footnotesize
\begin{tabular}{lccc} 
\toprule
Method & $\alpha$=0.1 & $\alpha$=0.15 & $\alpha$=0.2 \\
\midrule
Empirical risk $<\alpha$ & 0.61& 0.67& 0.73\\
Fixed Sequence & $\emptyset$ & 0.53& 0.56\\
Bonferroni & $\emptyset$& $\emptyset$& 0.56\\
Uniform & $\emptyset$& $\emptyset$& $\emptyset$\\
\bottomrule
\end{tabular}
    \end{minipage}
    \caption{{\bf Numerical comparison of FWER-controlling algorithms on a synthetic AR process.} 
    On the left-hand side, we plot a representative of the ``V''-shaped risk used to generate the AR process, along with the desired risk level $\alpha=0.1$ as a gray dotted line.
    In the right table, we report the rightmost endpoint of the set $\Lhat$ for different FWER-controlling procedures (higher is less conservative).
    The subplots show different levels $\alpha$ and correlation parameters for the AR process, as in Section~\ref{eq:armodel}. An entry of $\emptyset$ means that the procedure failed to reject anywhere: $\Lhat = \emptyset$.
    We used $n=5000$ data points, $N=1000$ evenly spaced grid points of $\lambda$ between 0 and 1, $\mathrm{corr}=0.9$, and $\delta=0.1$.  The entry titled `Empirical risk $<\alpha$' in the table is a baseline procedure that selects the largest $\lambda$ such that the empirical risk is below $\alpha$. This baseline does not control the risk, and represents an upper bound on what we can hope to achieve with a risk-controlling procedure.}
    \label{fig:concentration_comparisons}
\end{figure}

\subsection{Additional experimental results for FDR control}
In Figure~\ref{fig:0_5_coco_histograms}, we report additional results from the experiment from Section~\ref{sec:multilabel}, this time with desired FDR level $\alpha = 0.5$.

\begin{figure}[H]
    \centering
    \begin{minipage}{0.4\textwidth}
        \centering
        \includegraphics[width=\linewidth]{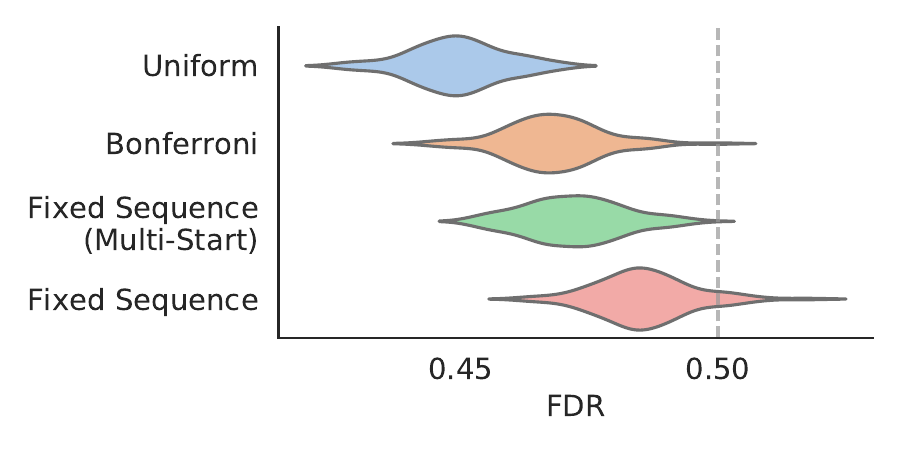}
    \end{minipage}
    \begin{minipage}{0.59\textwidth}
        \vspace{-0.5cm}
        \centering
        \footnotesize
\begin{tabular}{lccccccc} 
\toprule
Method & 50\% & 75\% & 90\% & 99\% & 99.9\% \\ 
\midrule
Uniform & 5 & 9 & 12 & 20 & 24\\
Bonferroni & 5 & 9 & 13 & 21 & 24\\
Fixed Sequence 
(Multi-Start) & 6 & 9 & 13 & 21 & 25\\
Fixed Sequence & 6 & 9 & 13 & 21 & 26\\
\bottomrule
\end{tabular}
    \end{minipage}
    \caption{{\bf Numerical results of our multi-label classification procedure on MS COCO.} This plot is the same figure as Figure~\ref{fig:0_2_coco_histograms} in the main text, but with $\alpha=0.5$.}
    \label{fig:0_5_coco_histograms}
\end{figure}

\subsection{Details about the ODIN score}
\label{app:odin-rescaling}
Next, we precisely define the ODIN function, which we use for as an OOD score.
Let $\hat{f}_y(x)$ be the output of the DenseNet's final, temperature-scaled~\cite{platt1999probabilistic} softmax layer, meant to estimate $\P\left(Y \mid X\right)$.
Additionally, let $\hat{f}_{(i)}(x)$ be the softmax output of the $i$-th most likely class.
The key step in ODIN is to add a small perturbation to the input image designed to help distinguish between in-distribution and out-of-distribution images.
In particular, the perturbation
\begin{equation}
    w(X) = X - \epsilon \,  \mathrm{sign}\left( -\nabla_X \log \hat{f}_{(1)}(X) \right)
\end{equation}
encourages the top softmax output to be more confident on the perturbed input $w(X)$.
The tuning parameter $\epsilon$ is the magnitude of the perturbation---we took it to be 0.0014, the default value from the ODIN GitHub repository.
Then, the OOD function is taken to be
\begin{equation}
    \mathrm{OOD}(X) = 1-\hat{f}(w(X))_{(1)}.
\end{equation}
The reader can view ODIN as an improvement upon the obvious OOD function, $1-\hat{f}(X)_{(1)}$; the improvement relies on the empirical observation that the top softmax output grows more for in-distribution images when perturbed than it does for out-of-distribution images.

We make one final note here about the scaling of the ODIN function.
With this particular model on the CIFAR-10 dataset, all values of $\mathrm{OOD}(X)$ were between the numbers $0.890$ and $0.899$.
In other words, this score is extremely poorly scaled; when choosing 1000 equally spaced points between 0 and 1000, only 9 of them will land in this interval.
We re-scaled ODIN manually to fix this problem, although it could be fixed using the training dataset.

\section{CLT p-value}
\label{app:clt-p-value}

Defining the empirical standard deviation as $\hat{\sigma}_j=((L_{1,j}-\Rhat_j)^2 + \ldots + (L_{n,j}-\Rhat_j)^2)^{1/2}/ (n-1)$, the p-value is
\begin{prop}[CLT p-values]
    \label{prop:clt-p-values}
    Suppose $L(T_{\lambda_j}(X_i),Y_i)$ has a finite mean and variance. Then
    \begin{equation}
        p_j^{\rm CLT} = 1-\Phi\left(\frac{\alpha - \Rhat_j}{\hat{\sigma}_j/\sqrt{n}}\right)
    \end{equation}
    is an asymptotically valid p-value: for all $u \in [0,1],$ $\limsup_{n\rightarrow \infty}\P(p_j^{\rm CLT} \le u) \le u$.
\end{prop}
The standard finite-sample Berry-Essen correction can be made, though we do not state it here.

\section{Proofs}
\label{app:proofs}
\begin{proof}[Proof of Theorem~\ref{thm:master-multiple-testing}]
    The result is immediate.
\end{proof}

\begin{proof}[Proof of Proposition~\ref{cor:hb-p-value}]
This is a combination of the results from~\cite{hoeffding1963} and~\cite{bentkus2004hoeffding}. See~\cite{bates2021distribution} for details on how to hybridize the two inequalities.
\end{proof}

\begin{proof}[Proof of Proposition~\ref{prop:clt-p-values}]
    This is a straightforward consequence of the central limit theorem.
\end{proof}

\begin{proof}[Proof of Proposition~\ref{prop:bonferroni}]
    This is a classical result in multiple testing. For example, see~\cite{holm1979simple} for discussion.
\end{proof}

\begin{proof}[Proof of Proposition~\ref{prop:bf_search}]
Fixed sequence testing has existed in the literature for some time; see, e.g.,~\cite{Sonnemann1986} and~\cite{bauer1991multiple}. We include a proof of the version we use for completeness.

Consider first the case where $|\mathcal{J}|$ = 1. Then there will be a first index at which you encounter a null. The probability of making a false discovery at that index is bounded by $\delta$. Thus, the probability of making any false discoveries is bounded by $\delta$. 

Turning to the $|\mathcal{J}| > 1$ case, the procedure is equivalent to running many instances of the $|\mathcal{J}| = 1$ procedure in parallel, at level $\delta / |\mathcal{J}|$. By the union bound, the probability of any false rejections is then bounded by $\delta$.
\end{proof}

\begin{proof}[Proof of Proposition~\ref{prop:fixed_sequence_J=1}]
  Since the procedure is sequential, it rejects a null iff it rejects $\mathcal{H}_{j^{*}}$, in which case $p_{j^{*}}\le \delta$.
\end{proof}

\begin{proof}[Proof of Proposition~\ref{prop:multirisk-pvalue}]
    The result is immediate.
\end{proof}

\begin{proof}[Proof of Proposition~\ref{thm:master-uniform}]
    By the definition of a uniform bound, $R^+$ fails with probability at most $\alpha$ at $\hat{\lambda}$.
\end{proof}

\section{Split Fixed Sequence Testing}
\label{app:split-fixed-sequence}
In Section~\ref{sec:sgt}, we showed how any pre-specified SGT can control the FWER.
The technique is flexible and powerful, especially for situations where it is straighforward to hand-design a graph, such as Figure~\ref{fig:ood-graph} in the OOD detection example.
However, for the large-scale machine learning systems we use, designing the graph is not always scaleable.
For example, consider the object detection example in Section~\ref{sec:detection}. 
Because all three coordinates of $\lambda$ affect all three risks, and we do not know \emph{a-priori} which values of $\lambda$ control the risks, there is no graph that is obviously best.
In such situations, we might desire an automated procedure that defines the graph by looking at promising hypotheses in our data; in other words, we seek to \emph{learn the graph}. 

The procedure we define here, which we call \emph{split fixed sequence testing}, is a very simple way of learning the graph from an extra data split.
The idea is inspired by the following fact: in many settings, we hope to identify points that violate several risk functions equally often.
Therefore, we pick a sequence of $\lambda$ that have nearly the same p-value using the first split of data.
Then, we simply use fixed sequence testing on this selected sequence using the fresh data.
This allows us to decide our path through $\Lambda$ using data while still providing rigorous FWER control.

Concretely, the algorithm begins by partitioning the calibration data points $\Ical_{\mathrm{cal}}$ into a graph selection set $\Ical_{\mathrm{graph}}$ and a multiple testing set $\Ical_{\mathrm{testing}}$.
We will learn our path through $\Lambda$ only by looking at $\Ical_{\mathrm{graph}}$; we will reserve the other split of data simply for running SGT.
Following the notation from Section~\ref{sec:multi-risk}, and calculating the p-values only on the graph selection set $\Ical_{\mathrm{graph}}$, we define the function
\begin{equation}
    \tilde{\lambda}(\beta) = \lambda_j\text{, where } j = \underset{j'}{\arg\min} \Big|\Big| [p_{j,1},...,p_{j,m}] - [\beta, ..., \beta]  \Big|\Big|_{\infty},
\end{equation}
where $\beta$ ranges from $0$ to $1$. The function $\tilde{\lambda}(\beta)$ picks a point where the p-values for all risks are nearly equal to $\beta$.
We parameterize our path by discretizing this function.
For some positive integer $D$, we set 
\begin{equation}
    \tilde{\lambda}_d = \tilde{\lambda}\left(\frac{d}{D}\right)\text{, for }d = 0,1,...,D.
\end{equation}
Note that in practice, this sequence can have repeated values, usually adjacent to one another; we remove them in practice, and ignore them for notational convenience.

Once we have the sequence $\{\tilde{\lambda}_d\}_{d=0}^D$, we do fixed sequence testing directly on the sequence as it is naturally ordered using the multiple testing set $\Ical_{\mathrm{testing}}$.
Figure~\ref{fig:detection-histograms} shows the result of this procedure in our object detection example.

As a remark on the theory of this example, once we select a set $\widehat{\Lambda}$ using split fixed sequence testing, we still have to select the specific element $\lhat \in \widehat{\Lambda}$ to report our results.
Because we use FWER control to pick $\widehat{\Lambda}$, we can optimize over the parameters however we wish.
Accordingly, we follow the following procedure (note that all coordinates of $\lambda$ depend on each other, and must be chosen jointly):
\begin{align}
    \lhat_3 & = \underset{\lambda \in \Lhat}{\min} \left\{\lambda_3 : \exists \lambda_1 \in \Lhat,  1-\lambda_1 < \lambda_3\right\}\\
    \lhat_1 & = \underset{\lambda \in \Lhat}{\max} \left\{\lambda_1 : \lambda_3 = \lhat_3\right\} \\
    \lhat_2 & = \max_{\lambda\in\Lhat} \left\{ \lambda_2 : \left[\lhat_1, \lambda_2, \lhat_3\right] \in \underset{\lambda \in \Lhat}{\arg \min} \; \hat{R}_2(\lambda) \right\} ,
\end{align}
which optimizes over $\Lhat$ to get meaningful prediction sets, many detections, and a high IOU.

\section{SGT methods used in Section~\ref{sec:ood}}
\label{app:hamming-sgt}
\begin{figure}[t]
    \centering
    \includegraphics[width=0.33\textwidth]{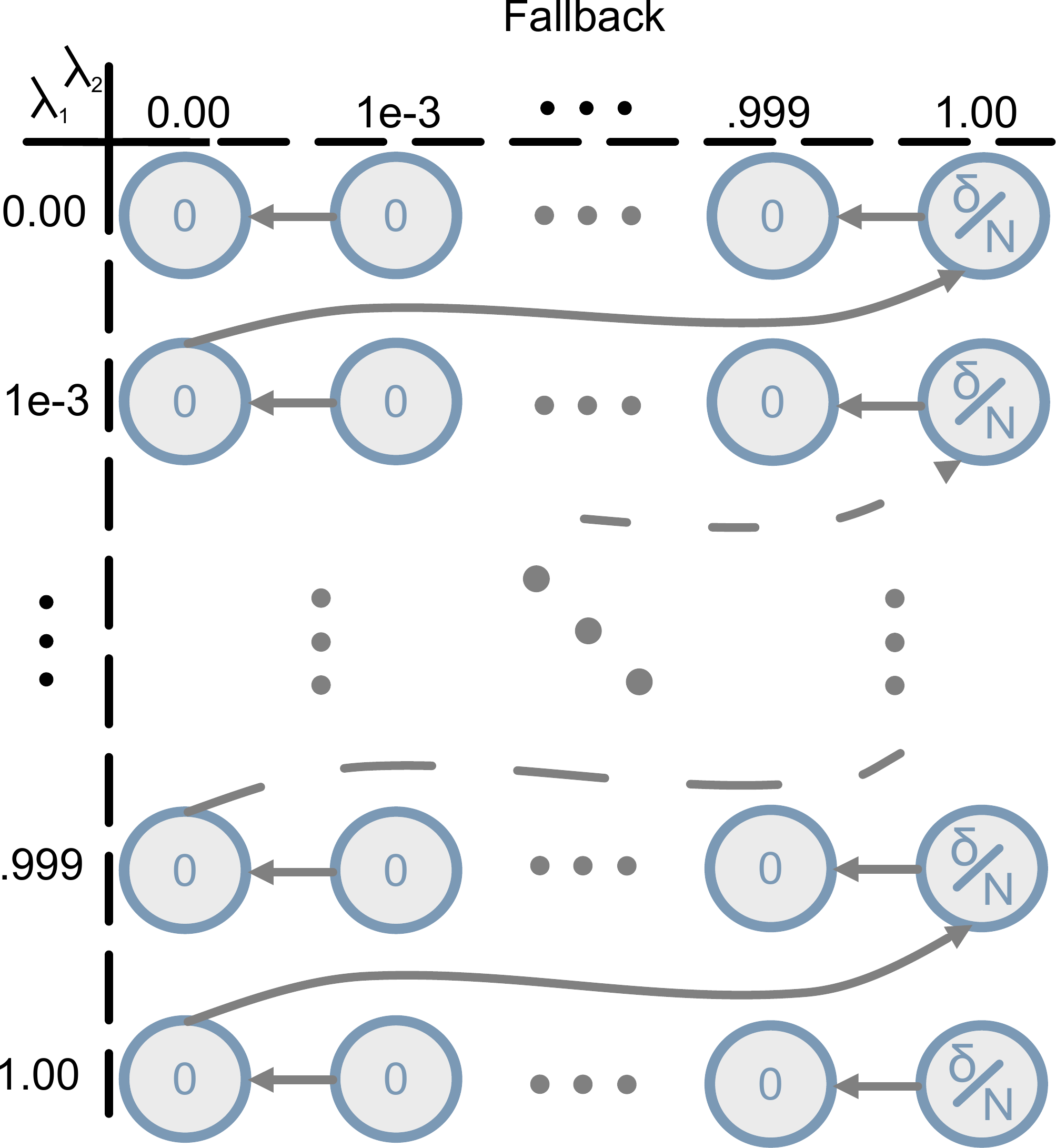}
    \hspace{1cm}
    \includegraphics[width=0.33\textwidth]{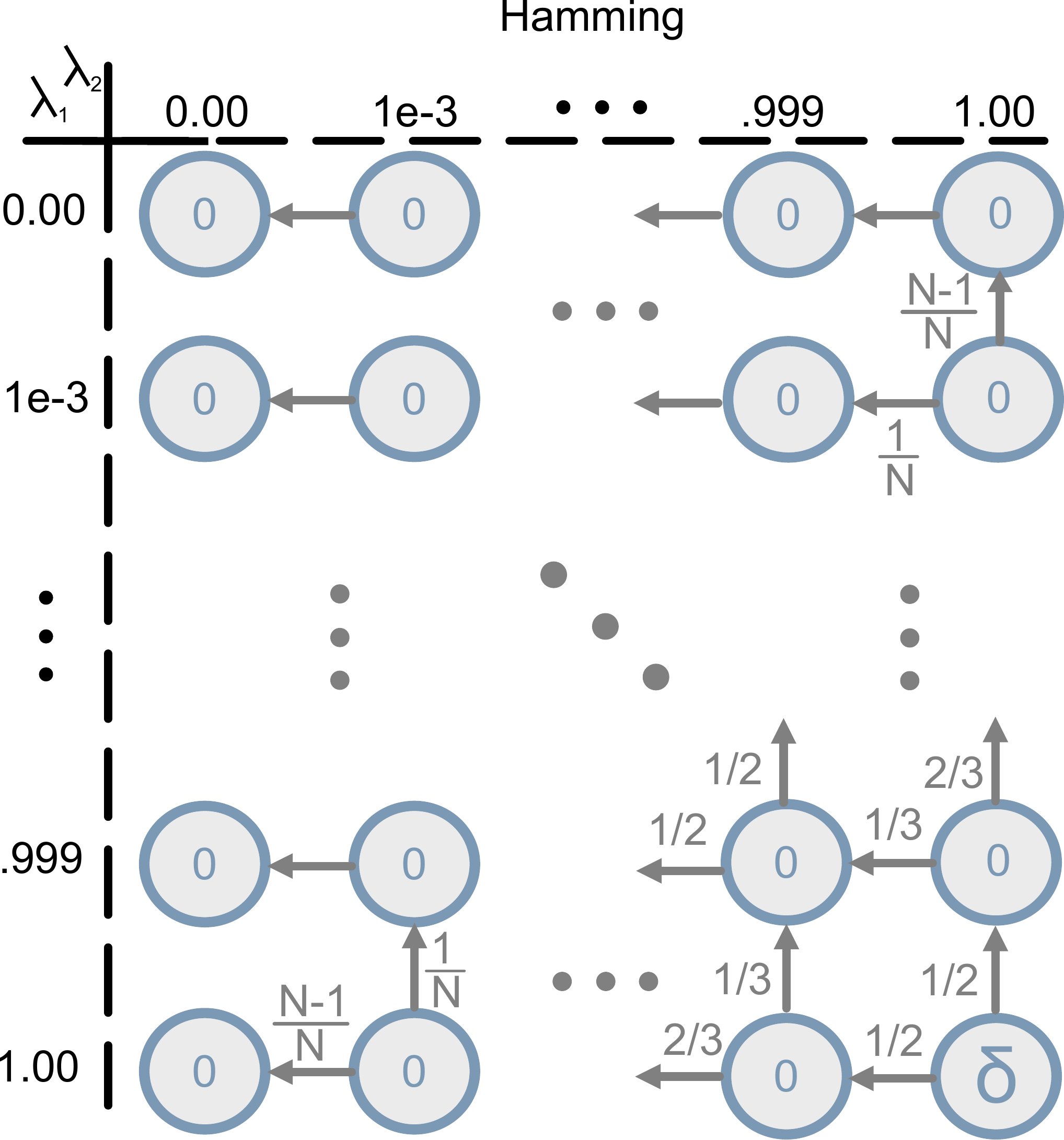}
    \caption{
    \textbf{The SGT graphs} for the two multiple testing procedures used in Section~\ref{sec:ood}.
    Each node corresponds to the null hypothesis $\mathcal{H}_{i,j} : R_1((\lambda_i,\lambda_j)) > \alpha_1 \text{ or } R_2((\lambda_i,\lambda_j)) > \alpha_2$.  
    The initial allocations of the error budget are shown within the nodes, and the edges define the transition matrix of the graph.
    Edges without a label have weight 1.
    The left graph describes a form of SGT called a \emph{fallback procedure}, wherein the hypotheses are chained together, with the initial error budget distributed throughout the chain.
    The right graph is the \emph{Hamming} graph, which allocates the entire error budget to the bottom right node and then propagates the error budget outwards; see Appendix~\ref{app:hamming-sgt}.
    }
    \label{fig:ood-graph}
\end{figure}

Recall that the SGT procedure from Section~\ref{sec:sgt} allows us to design any graph to propagate our error budget while still controlling type-I error.
We now turn to the details of the multiple testing methods used in Section~\ref{sec:ood}. First, we consider the SGT with graph shown in the left panel of Figure~\ref{fig:ood-graph}, which is known as the \emph{fallback procedure} \citep{wiens2003fixed, wiens2005fallback}.
Here, we start with the error budget spread equally across levels of $\lambda_1$, and then propagate the error budget towards the aforementioned Pareto frontier, starting from the safe points where $\lambda_2$ is large. 
This is nearly the same as the fixed sequence test that splits the error budget between each $\lambda_1$ and then running a fixed sequence test on each to find the smallest $\lambda_2$ for each $\lambda_1$, but it is more powerful due to the edges connecting adjacent rows.

Next, we consider SGT with the \emph{Hamming} graph in the right panel of Figure~\ref{fig:ood-graph}. 
Here, the initial error budget is all allocated to the bottom-right node, and then it cascades outward. 
The weights are chosen to yield a balanced procedure, in the sense that the maximum possible error budget a node can be allocated (which happens if there are rejections at all ancestor nodes) is $\delta$ divided by the Hamming distance to the bottom right.
Concretely, we associate the index $(i,j)$ to the node in the $i$th row from the bottom and the $j$th row from the right of the graph.
Then we set the weights of our graph to 
\begin{equation}
  W_{(i,j)\to(i,j+1)} =
    \begin{cases}
          \frac{j}{i+j} & i+j \leq N \\
          1 & i+j > N \text{ and } i = N \\
          0 & \text{ else,}
    \end{cases}
\end{equation}
and
\begin{equation}
  W_{(i,j)\to(i+1,j)} =
    \begin{cases}
          \frac{i}{i+j} & i+j \leq N \\
          1 & i+j > N \text{ and } i < N \\
          0 & \text{ else,}
    \end{cases}
\end{equation}
with all other edge weights equaling zero.
Setting these edge weights creates a graph where all nodes in the bottom-right triangle have \emph{balanced inflow}, i.e.,
\begin{definition}[Balanced Inflow]
\label{def:balanced_inflow}
  Let $i+j=L$ and consider the set of incoming paths to node $(i,j)$ as
\begin{equation}
  \mathcal{P}(i,j) = \Big\{ (i_1,j_1),(i_2,j_2),...,(i_{L-1},j_{L-1}),(i_L,j_L)\text{, where }(i_1,j_1)=(1,1)\text{ and }(i_L,j_L)=(i,j)\Big\}.
\end{equation}
  A graph has balanced inflow if 
  \begin{equation}
    \sum\limits_{\substack{\{(i_{i'},j_{i'})\}_{i'=1}^L\\ \in \mathcal{P}(i,j)}}\prod\limits_{l=1}^{L-1}W_{(i_l,j_l) \to (i_{l+1},j_{l+1})} = \frac{1}{L-1}. 
  \end{equation}
\end{definition}
In English, this property encodes the idea of balanced inflow we discussed earlier---nodes that are equally far away from the root can all potentially receive the same error budget (although in practice, rejections in the lower-right triangle may thwart this effort). Proving that the Hamming SGT has balanced inflow is simple by induction on $L$.
This graph structure plays well with coordinatewise monotonicity and other such settings where the goal is to push from one corner to another in the space of hypotheses. 

At a high level, the Hamming SGT improves on the fallback procedure when there are many rejections in the lower-right triangle.
Indeed, the Hamming SGT is much more powerful in this region, since the error budget is more concentrated.
In the best-case scenario, where all of the lower-right triangle is rejected, each node along the main diagonal gets a budget of $\frac{\delta}{N-1}$, and a fallback procedure is performed on the upper-left triangle.
However, the graph's structure can also hurt us; if a large fraction of nulls fail to be rejected in the lower-right triangle, the procedure could be less effective in rejecting an extreme setting of $\lambda$.
Thus, we suggest the use of the Hamming SGT if there is reason to believe that the lower-right triangle will contain many rejections.
We repeat that balanced inflow is not a necessary condition for a SGT graph to perform well, and in Section~\ref{sec:ood} we find that other approaches are more powerful.

As a final attempt on the multiple testing design, we designed a cascaded 2D fixed sequence test.
First, a budget of $\delta/2$ gets allocated to picking $\lambda_1$ via fixed sequence testing.
After finding the smallest $\lhat_1$ that controls $R_1$, we do a second fixed sequence test to find the smallest $\lambda_2$ such that the pair $(\lhat_1,\lambda_2)$ controls both risks.
We considered two hand-designed graphs in the main text: the fallback procedure and the so-called Hamming graph presented in Figure~\ref{fig:ood-graph}.
Now, we return to the Hamming graph, which improves upon the fallback procedure under certain conditions, and discuss its properties here.
In the Hamming graph, all of the error budget starts in the bottom-right corner, and propagates through two subparts of the graph.
In the lower-right triangle, the graph perfuses the error budget such that nodes with equal Hamming distance from the bottom-right node are designed to receive the same error budget if all previous nodes reject. We will call this property \emph{balanced inflow}; see Definition~\ref{def:balanced_inflow} below.
In the upper-left triangle, the graph propagates the error leftwards.

\section{Alternative Approach: Uniform Concentration}
\label{app:theory-uniform}
Rather than use multiple testing, one can consider creating risk-controlling predictions using uniform concentration results. We outline this strategy below. Unfortunately, this approach is typically much more conservative than the multiple testing approach, even when using state-of-the-art concentration results. This is evidenced in our many simulations. Nonetheless, since this is a seemingly natural alternative, we record this approach in detail here.

\subsection{Framework}
We stated how uniform bounds lead to risk control in Proposition~\ref{thm:master-uniform}.

The main challenge is to come up with a valid uniform confidence bound $R^+$. We have developed such bounds, stating the technical details in Appendix~\ref{app:uniform_bound_details}. 
In the experiments presented in the main text, we find that this is less powerful than the multiple-testing approach.

\subsection{A new class of concentration inequalities for self-normalized empirical processes}
\label{app:uniform_bound_details}
We next develop a state-of-the art set of uniform concentration results for our setting.
Recalling that 
\[\hat{R}(\T_{\lambda}) = \frac{1}{n}\sum_{i=1}^{n}L(\T_\lambda(X_i), Y_i),\]
we can view $\{\hat{R}(\T_\lambda): \lambda\in \Lambda\}$ as an empirical process indexed by $\lambda$. The simplest empirical process is the empirical CDF. The celebrated Dvoretzky-Kiefer-Wolfowitz-Massart (DKWM) inequality bounds the absolute difference $|\hat{F}_{n}(w) - F(w)|$. Although the constant is tight, the sup-difference metric ignores the non-constant variance of $\hat{F}_{n}(w)$. Specifically, $n\hat{F}_{n}(w)\sim \mathrm{Binom}(n, F(w))$ and thus $\Var[\hat{F}_{n}(w)] = F(w)(1 - F(w)) / n$. As a result, when $F(w)$ is close to $0$ or $1$, $\hat{F}_{n}(w)$ is more concentrated around $F(w)$. 

To improve the DKWM bounds for tail events, a line of work has established concentration inequalities for self-normalized empirical processes~\cite[e.g.][]{anthony1993result, hole1995some, bartlett1999inequality, bercu2002concentration, gine2003ratio}. In particular, they derived upper tail bounds for
 \[\sup_{w\in \R}\frac{\hat{F}_{n}(w) - F(w)}{\sqrt{\hat{F}_{n}(w)}} \text{ and }\sup_{w\in \R}\frac{F(w) - \hat{F}_{n}(w)}{\sqrt{F(w)}},\]
 where $\hat{F}_{n}(w)$ denotes the empirical CDF of $n$ i.i.d. samples $W_1, \ldots, W_n$ drawn from the distribution $F$. For instance,~\cite{anthony1993result} prove that, with probability $1 - \delta$,
 \[\sup_{w\in \R}\frac{F(w) - \hat{F}_{n}(w)}{\sqrt{F(w)}} \le \sqrt{\frac{4}{n}\log\lb\frac{8(n + 1)}{\alpha}\rb} = O\lb\sqrt{\frac{\log n}{n}}\rb.\]
 When $\hat{F}_{n}(w) = O(\log n / n)$, the Anthony--Shaw-Taylor inequality yields an upper confidence bound of $F(w)$ of order $O(\log n / n)$ as well, while the DKWM inequality implies an upper confidence bound of $F(w)$ of order $O(1 / \sqrt{n})$. 
 Thus, the former yields tighter results for rare events for large $n$.

In our context, $\hat{R}(\T_\lambda)$ is more complicated than the empirical CDF because the summands are no longer binary. We adapt the proofs of~\cite{anthony1993result},~\cite{parrondo1993vapnik}, and~\cite{vapnik1971uniform} to the general case and further improve their constants using the recently developed tools for sampling without replacement~\cite{bardenet2015concentration} and weighted sums of Rademacher variables~\cite{pinelis2012asymptotically, bentkus2015tight}. To state the general results, we consider a generic empirical process 
\begin{equation}\label{eq:generic_empirical_process}
\hat{s}_{n}(\lambda) = \frac{1}{n}\sum_{i=1}^{n}S(\lambda; Z_i), \quad s(\lambda) = \E[S(\lambda; Z_i)], \quad S(\lambda; Z_{i})\in [0, 1]\text{ almost surely for every }\lambda\in \Lambda.
\end{equation}
Note that $S(\lambda; Z_i) = L(T_\lambda(X_i), Y_i)$ in our context. Furthermore, we define $\Delta(n)$ as the 
\begin{equation}
  \label{eq:Deltan}
  \Delta(n) = \sup_{z_{1}, \ldots, z_{n}}\bigg|\big\{\{S(\lambda; z_1), \ldots, S(\lambda; z_n)\}: \lambda\in \Lambda\big\}\bigg|.
\end{equation}
In the literature, $\log \Delta(n)$ is often referred to as the growth function~\cite[][Section 2]{vapnik2013nature}.

 \begin{theorem}\label{thm:normalized_vapnik}
   Under the setting \eqref{eq:generic_empirical_process}, for any $\eta \ge 0$,
   \begin{equation}
     \label{eq:slam_lower}
     \P\lb \sup_{\lambda\in \Lambda}\frac{\hat{s}_{n}(\lambda) - s(\lambda)}{\sqrt{\hat{s}_n(\lambda) + \eta}}\ge t\rb\le  \inf_{\gamma\in (0,1), n'\in \Z^{+}}\frac{\Delta(n + n')\exp\{-g_2(t; n, n', \gamma, \eta)\}}{1 - \exp\{-g_1(t; n', \gamma, \kappa^{+})\}},
   \end{equation}
   and
   \begin{equation}
     \label{eq:slam_upper}
     \P\lb \sup_{\lambda\in \Lambda}\frac{s(\lambda) - \hat{s}_{n}(\lambda)}{\sqrt{s(\lambda) + \eta}}\ge t\rb\le \inf_{\gamma\in (0,1), n'\in \Z^{+}}\frac{\Delta(n + n')\exp\{-g_2(t; n, n', \gamma, \kappa^{-})\}}{1 - \exp\{-g_1(t; n', \gamma, \eta)\}},
   \end{equation}
   where 
   \begin{align*}
     g_1(t; n', \gamma, \kappa) &= \max\left\{\frac{n't^2}{2}\frac{\gamma^2}{1 + \gamma^2 t^2/36\kappa},\,\, \log\lb  \frac{n' t^2\gamma^2}{(\sqrt{1 + \kappa} - \sqrt{\kappa})^2}\rb\right\},\\
     g_2(t; n, n', \gamma, \kappa) &= \frac{nt^2}{2}\lb\frac{n'}{n + n'}\rb^2\frac{(1-\gamma)^2}{1 + (1-\gamma)^2 t^2/36\kappa},
   \end{align*}
   and
   \[\kappa^{+} = \eta + \frac{t^2}{2} + t\sqrt{\frac{t^2}{4} + \eta}, \quad \kappa^{-} = \eta + \frac{n + \gamma n'}{n + n'}\sqrt{\kappa^{+}}\]
\end{theorem}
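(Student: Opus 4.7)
The plan is to prove both inequalities by the classical Vapnik--Chervonenkis ghost-sample symmetrization, adapted to the self-normalized regime in the spirit of Anthony--Shaw-Taylor and Parrondo--Van den Broeck, but with sharper constants obtained by replacing the Serfling step with Bardenet--Maillard (sampling without replacement) and Hoeffding-in-$\lambda$ with the Pinelis--Bentkus asymptotic-free bound for weighted Rademacher sums. I would focus on \eqref{eq:slam_lower}; the bound \eqref{eq:slam_upper} is symmetric with the roles of $\hat s_n$ and $s$ interchanged, which is exactly what the more intricate definition of $\kappa^-$ records.

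First, introduce an independent ghost sample $Z_{n+1}, \dots, Z_{n+n'}$ with empirical mean $\hat s'_{n'}(\lambda)$, and let $E_t = \{\sup_\lambda (\hat s_n(\lambda) - s(\lambda))/\sqrt{\hat s_n(\lambda) + \eta} \ge t\}$ with a (random) near-maximizer $\hat\lambda$. The first step is a deterministic implication: on $E_t$, the quadratic inequality $u^2 - tu - (s(\hat\lambda) + \eta) \ge 0$ in $u = \sqrt{\hat s_n(\hat\lambda) + \eta}$ forces $\hat s_n(\hat\lambda) + \eta \ge s(\hat\lambda) + \kappa^+$, which is precisely how $\kappa^+ = \eta + t^2/2 + t\sqrt{t^2/4 + \eta}$ arises. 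Consequently, on $E_t$, the deviation at $\hat\lambda$ also satisfies $(\hat s_n - s)/\sqrt{s+\kappa^+} \ge t$, moving us onto a deterministic-denominator scale where ghost-sample concentration can be applied directly.

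Second, a Bentkus-style tail bound (with the two-regime max reproducing $g_1$) applied to $\hat s'_{n'}(\hat\lambda) - s(\hat\lambda)$, whose summands are bounded with variance at most $s(\hat\lambda)$, gives
\[
\P\!\left(\hat s'_{n'}(\hat\lambda) - s(\hat\lambda) \ge -\gamma t\sqrt{s(\hat\lambda) + \kappa^+} \,\Big|\, \hat\lambda\right) \;\ge\; 1 - \exp\{-g_1(t; n', \gamma, \kappa^+)\}.
\]
On the intersection of this event with $E_t$, subtracting yields $\hat s_n(\hat\lambda) - \hat s'_{n'}(\hat\lambda) \ge (1-\gamma)t\sqrt{\hat s_n(\hat\lambda) + \kappa^+}$, again via the quadratic of step one. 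A standard conditional-probability manipulation then produces
\[
\P(E_t)\,[1 - \exp\{-g_1\}] \;\le\; \P\!\left(\sup_\lambda \frac{\hat s_n(\lambda) - \hat s'_{n'}(\lambda)}{\sqrt{\hat s_n(\lambda) + \kappa^+}} \ge (1-\gamma)t\right).
\]
To finish, symmetrize: conditional on the pooled $(n+n')$-sample, the partition into the two sub-samples is uniform-without-replacement, so the supremum over $\lambda$ reduces to a maximum over at most $\Delta(n+n')$ distinct value patterns, and for each fixed pattern the numerator is a centered linear combination of hypergeometric indicators. The Bardenet--Maillard inequality combined with the Pinelis--Bentkus weighted-Rademacher tail yields $\exp\{-g_2(t; n, n', \gamma, \eta)\}$: the prefactor $(n'/(n+n'))^2$ reflects the hypergeometric variance, and the Bernstein-style interpolation $1/(1 + (1-\gamma)^2 t^2/36\kappa)$ is exactly the Bentkus constant. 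Taking the infimum over $\gamma \in (0,1)$ and $n' \in \Z^+$ delivers \eqref{eq:slam_lower}.

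\textbf{Main obstacle.} The delicate part is the algebraic bookkeeping in step one: the same quadratic in $\sqrt{\hat s_n + \eta}$ must play two roles --- converting the self-normalized deviation on $\hat s_n$ into one on $s$, so that the ghost-sample concentration applies, and then converting back to one on $\hat s_n$ so that the resulting bound is self-normalizing. For \eqref{eq:slam_upper} the analogous but nested quadratic yields the more elaborate $\kappa^- = \eta + [(n + \gamma n')/(n+n')]\sqrt{\kappa^+}$, and tracking this composition while preserving the tight $1/36$ constant through the Bardenet--Maillard / Pinelis--Bentkus step is where the main care is needed.
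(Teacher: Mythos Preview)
Your overall architecture --- ghost-sample symmetrization, the quadratic producing $\kappa^{+}$, a concentration step for the ghost sample yielding $g_{1}$, then a permutation argument over the pooled sample yielding $g_{2}$ --- is exactly the paper's. However, several steps are not right as written.

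The central issue is the denominator in the symmetrized quantity. You write $\sqrt{\hat{s}_{n}(\hat\lambda)+\kappa^{+}}$; the correct choice is $\sqrt{\hat{s}_{n+n'}(\lambda)+\eta}$, the \emph{pooled} empirical average. First, your claimed intermediate $\hat{s}_{n}-\hat{s}'_{n'}\ge(1-\gamma)t\sqrt{\hat{s}_{n}+\kappa^{+}}$ does not follow: the quadratic gives $\hat{s}_{n}+\eta\ge s+\kappa^{+}$, so $\hat{s}_{n}+\kappa^{+}\ge s+\kappa^{+}$, and the inequality you need goes the wrong way. Second, and more importantly, the permutation/sampling-without-replacement step requires the denominator to be invariant under reshuffling the $(n+n')$-pool into the two subsamples; $\hat{s}_{n+n'}$ is invariant, $\hat{s}_{n}$ is not, so Bardenet--Maillard cannot be applied with your normalization. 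The paper fixes this by first showing $\hat{s}_{n}\ge\hat{s}'_{n'}$ on the relevant event, hence $\hat{s}_{n+n'}\le\hat{s}_{n}$, which lets one pass from $\sqrt{\hat{s}_{n}+\eta}$ to the smaller, permutation-invariant $\sqrt{\hat{s}_{n+n'}+\eta}$.

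A related slip: the ghost-sample event you need is the \emph{upper} tail $\hat{s}'_{n'}-s\le\gamma t\sqrt{s+\kappa^{+}}$, not $\ge -\gamma t\sqrt{\,\cdot\,}$. With your sign, subtraction gives no lower bound on $\hat{s}_{n}-\hat{s}'_{n'}$ at all.

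Finally, Pinelis--Bentkus / weighted Rademacher sums play no role in this theorem; they appear only in the companion result (Theorem~\ref{thm:normalized_vapnik_rademacher}) with $n'=n$ and Rademacher rather than permutation symmetrization. Here $g_{2}$ comes purely from the Bardenet--Maillard Bernstein-type inequality for sampling without replacement, using $\mathrm{Var}\le\hat{s}_{n+n'}$ and the same AM--GM trick as for $g_{1}$ to get the $1/(1+(1-\gamma)^{2}t^{2}/36\eta)$ factor. And the two branches of $g_{1}$ are Bernstein (first) and Chebyshev (second), not a Bentkus bound.
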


\begin{theorem}\label{thm:normalized_vapnik_rademacher}
Under the setting \eqref{eq:generic_empirical_process}, for any $\eta \ge 0$,
  \begin{equation}
    \label{eq:slam_lower_rademacher}
     \P\lb \sup_{\lambda\in \Lambda}\frac{\hat{s}_{n}(\lambda) - s(\lambda)}{\sqrt{\hat{s}_n(\lambda) + \eta}}\ge t\rb\le \inf_{\gamma\in (0,1)}\frac{\Delta(2n)\td{g}\lb \sqrt{\frac{n(1 + \eta)}{2}}(1 - \gamma)t\rb}{1 - \exp\{-g_1(t; n, \gamma, \kappa^{+})\}},
   \end{equation}
   and
   \begin{equation}
     \label{eq:slam_upper_rademacher}
     \P\lb \sup_{\lambda\in \Lambda}\frac{s(\lambda) - \hat{s}_{n}(\lambda)}{\sqrt{s(\lambda) + \eta}}\ge t\rb\le \inf_{\gamma\in (0,1)}\frac{\Delta(2n)\td{g}\lb \sqrt{\frac{n(1 + \eta)}{2}}(1 - \gamma)t\rb}{1 - \exp\{-g_1(t; n, \gamma, \eta)\}},
   \end{equation}
   where $g_1$ is defined in Theorem \ref{thm:normalized_vapnik}, $\td{g}(x) = \min\{\td{g}_1(x), \td{g}_2(x), \td{g}_3(x)\}$, 
   \begin{align*}
     \td{g}_1(x) &= c_1(1 - \Phi(x)), \quad c_1 = 1 / 4(1 - \Phi(\sqrt{2}))\approx 3.178,\\
     \td{g}_2(x) &= 1 - \Phi(x) + \frac{c_2}{9 + x^2}\exp\left\{-\frac{x^2}{2}\right\}, \quad c_2 = 5\sqrt{e}(2\Phi(1) - 1)\approx 5.628,\\
     \td{g}_3(x) &= \exp\left\{-\frac{x^2}{2}\right\}.
   \end{align*}
 \end{theorem}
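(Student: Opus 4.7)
The plan is to mirror the proof of Theorem~\ref{thm:normalized_vapnik}, but to replace the Bernstein-type control of the two-sample numerator with a sharp tail inequality for weighted Rademacher averages, which produces the tighter $\td{g}$-bound. The argument proceeds in three stages: ghost-sample symmetrization, reduction to a Rademacher sum via exchangeability and the growth function $\Delta(2n)$, and a state-of-the-art Rademacher tail bound.

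First, I would introduce a ghost sample $Z_1',\ldots,Z_n'$ drawn i.i.d.\ from the same law and independent of the original sample, with empirical mean $\hat{s}_n'(\lambda)$. Exactly as in the proof of Theorem~\ref{thm:normalized_vapnik}, I would restrict to the event that the ghost-sample denominator is not much smaller than the population denominator; a Bernstein-type lower-tail bound using the variance inequality $\Var[S(\lambda;Z_i')] \le s(\lambda)$ shows that the complementary event has probability at most $\exp\{-g_1(t;n,\gamma,\kappa^{+})\}$. On the good event, $\{\sup_{\lambda}(\hat{s}_n(\lambda)-s(\lambda))/\sqrt{\hat{s}_n(\lambda)+\eta}\ge t\}$ implies a two-sample event of the form $\{\sup_\lambda(\hat{s}_n(\lambda)-\hat{s}_n'(\lambda))/D(\lambda)\ge (1-\gamma)t\}$ for a suitable deterministic denominator $D(\lambda)$ comparable to $\sqrt{s(\lambda)+\eta}$, producing the $1-\exp\{-g_1\}$ factor and the $(1-\gamma)$ splitting. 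Next, by exchangeability within each pair $(Z_i,Z_i')$, the conditional law of $\hat{s}_n(\lambda)-\hat{s}_n'(\lambda)$ given the pooled sample equals that of $\frac{1}{n}\sum_{i}\eps_i\bigl(S(\lambda;Z_i)-S(\lambda;Z_i')\bigr)$ for i.i.d.\ Rademacher signs $\eps_i$. As $\lambda$ ranges over $\Lambda$, the number of distinct summand-vectors is at most $\Delta(2n)$ by \eqref{eq:Deltan}, so a union bound reduces the problem to controlling, for each admissible $\lambda$, a single one-dimensional Rademacher sum whose sub-Gaussian variance proxy is controlled by a constant multiple of $(1+\eta)/n$ on the denominator event.

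Finally, I would apply the sharpest available tail inequalities for such one-dimensional weighted Rademacher averages at the effective threshold $u=\sqrt{n(1+\eta)/2}\,(1-\gamma)t$, namely Pinelis' optimal constant-factor Gaussian bound (giving $\td{g}_1$), Bentkus' Gaussian-plus-polynomial refinement (giving $\td{g}_2$), and the classical Hoeffding bound (giving $\td{g}_3$); taking the minimum gives the numerator in \eqref{eq:slam_lower_rademacher}. The mirror bound \eqref{eq:slam_upper_rademacher} follows by a symmetric argument in which the denominator $\sqrt{\hat{s}_n+\eta}$ is replaced throughout by $\sqrt{s+\eta}$ and the constant $\kappa^{+}$ is accordingly replaced by $\eta$. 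The main obstacle is the bookkeeping around the self-normalizing denominator: since $\sqrt{\hat{s}_n(\lambda)+\eta}$ depends on the same sample as the numerator, $\kappa^{+}$ and $\kappa^{-}$ have to be chosen so that the ghost-sample denominator dominates both the empirical and the population denominators with high probability, and the normalization must be transferred correctly through the resulting chain of inequalities. Apart from this, the argument is a clean combination of classical symmetrization-with-growth-function with the modern Rademacher tail bounds of Pinelis and Bentkus.
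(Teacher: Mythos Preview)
Your overall architecture matches the paper's proof: reuse Steps 1--2 of Theorem~\ref{thm:normalized_vapnik} with $n'=n$ to obtain the $(1-\gamma)$ splitting and the $1-\exp\{-g_1\}$ denominator, then replace the Bernstein-without-replacement step by a Rademacher argument on the two-sample numerator, and apply the sharp one-dimensional tail bounds after a union bound over $\Delta(2n)$ configurations. So the strategy is right.

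However, you are vague exactly at the one non-routine step. After symmetrization the denominator is the \emph{pooled empirical} quantity $\sqrt{\hat{s}_{2n}(\lambda)+\eta}$, not a ``deterministic $D(\lambda)$ comparable to $\sqrt{s(\lambda)+\eta}$''; it is fixed only conditionally on the paired sample $\mathcal{Z}_{\mathrm{swap}}=(\{Z_i,Z_{n+i}\})_{i=1}^n$. Writing the conditional numerator as $\sum_i a_i\eps_i$ with
\[
a_i=\frac{\sqrt{1+\eta}\,\bigl(S(\lambda;Z_i)-S(\lambda;Z_{n+i})\bigr)}{\sqrt{\sum_{k}\bigl(S(\lambda;Z_k)+S(\lambda;Z_{n+k})\bigr)+2n\eta}},
\]
the crucial fact is $\|a\|_2\le 1$, and this is where the boundedness hypothesis $S\in[0,1]$ enters: since $S^2\le S$ one has $(S_i-S_{n+i})^2\le S_i^2+S_{n+i}^2\le S_i+S_{n+i}$, and together with $\sum_k(S_k+S_{n+k})\le 2n$ this gives $\sum_i a_i^2\le 1$. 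Your phrase ``sub-Gaussian variance proxy controlled by a constant multiple of $(1+\eta)/n$'' does not capture this; without the exact unit-norm bound you do not get the threshold $\sqrt{n(1+\eta)/2}\,(1-\gamma)t$ with constant $1$, and the three tail inequalities you invoke all require $\|a\|_2\le 1$. This is the gap to close.

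Two smaller points. First, your attribution is swapped: $\td g_1(x)=c_1(1-\Phi(x))$ is the Bentkus--Dzindzalieta bound and $\td g_2$ is Pinelis' refinement, not the other way around. Second, $\kappa^{+}$ and $\kappa^{-}$ are not free constants to be ``chosen''; they are the specific quantities from Theorem~\ref{thm:normalized_vapnik}, and in the present proof $\kappa^{-}$ never appears (the upper-bound symmetrization lands on a denominator $\sqrt{\hat s_{2n}+\kappa^{-}}\ge\sqrt{\hat s_{2n}+\eta}$, after which the same Rademacher estimate \eqref{eq:goal_rademacher} applies by monotonicity and by the symmetry $\hat s_n\leftrightarrow\hat s_n'$).
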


The proofs of both theorems are lengthy and relegated to Section \ref{subapp:proofs}.

\subsection{Uniform upper confidence bound for RCP}
From these general results, we obtain the following corollary, which will result in a practical algorithm momentarily.
\begin{cor}\label{cor:normalized_vapnik_UCB}
    Let 
    \begin{equation*}
        R^+(\T_\lambda) :=  \hat{R}(\T_\lambda) + t(\eta; \delta)\sqrt{\hat{R}(\T_\lambda) + \eta + \frac{t(\eta; \delta)^2}{4}} + \frac{t(\eta; \delta)^2}{2}.
    \end{equation*}
    where $t(\eta; \delta)$ is defined as the $t$ that solves
    \begin{equation*}
        \delta = \underset{\gamma \in (0,1), n' \in \mathbb{Z}^+}{\inf} \min\left\{\frac{ \Delta(n+n')\exp\big\{-g_2(t;n,n',\gamma,\kappa^-)\big\}}{1-\exp\{-g_1(t;n',\gamma,\eta)\}}, \frac{\Delta(2n)\td{g}\lb \sqrt{\frac{n(1 + \eta)}{2}}(1 - \gamma)t\rb}{1 - \exp\{-g_1(t; n, \gamma, \eta)\}}\right\},
    \end{equation*}
    where the functions and quantities are defined in Theorem \ref{thm:normalized_vapnik} and \ref{thm:normalized_vapnik_rademacher}. Then
    \[\P(R(\T_\lambda)\le R^{+}(\T_\lambda)\text{ for all }\lambda\in \Lambda)\ge 1 - \delta.\]
\end{cor}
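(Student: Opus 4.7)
The plan is to invoke the uniform self-normalized concentration bounds in Theorems \ref{thm:normalized_vapnik} and \ref{thm:normalized_vapnik_rademacher}, specialized to $S(\lambda; Z_i) = L(\T_\lambda(X_i), Y_i)$, and then solve a quadratic inequality to turn the resulting self-normalized deviation into an explicit upper confidence bound on $R(\T_\lambda)$.

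First, identify $s(\lambda) = R(\T_\lambda)$ and $\hat s_n(\lambda) = \hat R(\T_\lambda)$; the assumption that $L \in [0,1]$ matches the boundedness hypothesis \eqref{eq:generic_empirical_process}. Apply inequality \eqref{eq:slam_upper} from Theorem \ref{thm:normalized_vapnik} and inequality \eqref{eq:slam_upper_rademacher} from Theorem \ref{thm:normalized_vapnik_rademacher}; taking the pointwise minimum of the two right-hand sides and optimizing over $\gamma \in (0,1)$ and $n' \in \mathbb{Z}^+$ (only the first term involves $n'$), the defining equation for $t(\eta;\delta)$ ensures that
\[
\P\!\left(\sup_{\lambda \in \Lambda} \frac{R(\T_\lambda) - \hat R(\T_\lambda)}{\sqrt{R(\T_\lambda) + \eta}} \ge t(\eta;\delta)\right) \le \delta.
\]

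Next, on the complementary event (which has probability at least $1-\delta$), for every $\lambda \in \Lambda$,
\[
R(\T_\lambda) - \hat R(\T_\lambda) \le t(\eta;\delta)\sqrt{R(\T_\lambda) + \eta}.
\]
Substitute $x = \sqrt{R(\T_\lambda) + \eta} \ge 0$, so that $R(\T_\lambda) = x^2 - \eta$; the inequality becomes the quadratic condition
\[
x^2 - t(\eta;\delta)\, x - \bigl(\hat R(\T_\lambda) + \eta\bigr) \le 0.
\]
Write $t = t(\eta;\delta)$ for brevity. Since the larger root of the quadratic is $\tfrac{1}{2}\!\left(t + \sqrt{t^2 + 4(\hat R(\T_\lambda) + \eta)}\right)$, we obtain $x \le \tfrac{1}{2}\!\left(t + 2\sqrt{t^2/4 + \hat R(\T_\lambda) + \eta}\right)$. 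Squaring and subtracting $\eta$,
\[
R(\T_\lambda) \;\le\; \hat R(\T_\lambda) + t\sqrt{\hat R(\T_\lambda) + \eta + t^2/4} + \tfrac{t^2}{2} \;=\; R^+(\T_\lambda),
\]
which is exactly the quantity in the statement. Because the self-normalized bound is uniform in $\lambda$, the inequality $R(\T_\lambda) \le R^+(\T_\lambda)$ holds simultaneously for all $\lambda \in \Lambda$ on that event, and the proof is complete.

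There is no real obstacle beyond bookkeeping: the two theorems are the substantive ingredients, and the rest is an algebraic inversion. The only point requiring care is confirming that the definition of $t(\eta;\delta)$ as the infimum over $(\gamma, n')$ is compatible with a union bound of the two theorems' tail estimates — since each of \eqref{eq:slam_upper} and \eqref{eq:slam_upper_rademacher} individually controls the same self-normalized supremum at its respective bound, taking their pointwise minimum (not sum) is legitimate, and choosing $t$ so that this minimum equals $\delta$ yields the claim without losing a factor of two.
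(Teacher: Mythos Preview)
Your proposal is correct and follows essentially the same route as the paper: identify $S(\lambda;Z_i)=L(\T_\lambda(X_i),Y_i)$, invoke \eqref{eq:slam_upper} and \eqref{eq:slam_upper_rademacher} with $t=t(\eta;\delta)$ to obtain the uniform self-normalized bound, and then invert the quadratic in $\sqrt{R(\T_\lambda)+\eta}$. The only cosmetic difference is that the paper completes the square, writing $\bigl(\sqrt{R(\T_\lambda)+\eta}-t/2\bigr)^2\le \hat R(\T_\lambda)+\eta+t^2/4$, whereas you use the quadratic formula; these are equivalent, and your explicit remark that the pointwise minimum of the two tail bounds is legitimate (since both control the same event) is a helpful clarification the paper leaves implicit.
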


\subsubsection*{An optimal choice of $\eta$}
The parameter $\eta$ needs to be chosen in advance. For a given level $(\alpha, \delta)$, however, there is an optimal choice. Let 
\begin{equation*}
    x(\eta; \delta) := \alpha - t(\eta; \delta)\sqrt{\alpha + \eta},
\end{equation*}
which is the largest solution $x$ of
\begin{equation*}
    \alpha = x + t(\eta; \delta)\sqrt{x + \eta + \frac{t(\eta; \delta)^2}{4}} + \frac{t(\eta; \delta)^2}{2}.
\end{equation*}
This is the largest value for which $\fdp(\T) \le x$ implies that $\fdr^+(\T) \le \alpha$. In other words, with $\eta$ fixed, our procedure will find the largest $\lambda$ such that $\fdp(\T_\lambda) \le x(\eta; \delta)$, and select this value as $\hat{\lambda}$. Thus, we should select the value of $\eta$ that makes $x(\eta; \delta)$ as small as possible;
\begin{equation*}
\eta^*(\delta) := \underset{\eta \ge 0}{\arg\min} \ x(\eta; \delta)
\end{equation*}
is the best value of $\eta$.

\subsection{Concrete instantiation for FDR control}

\begin{figure}[t]
    \centering
    \includegraphics[width=0.6\linewidth]{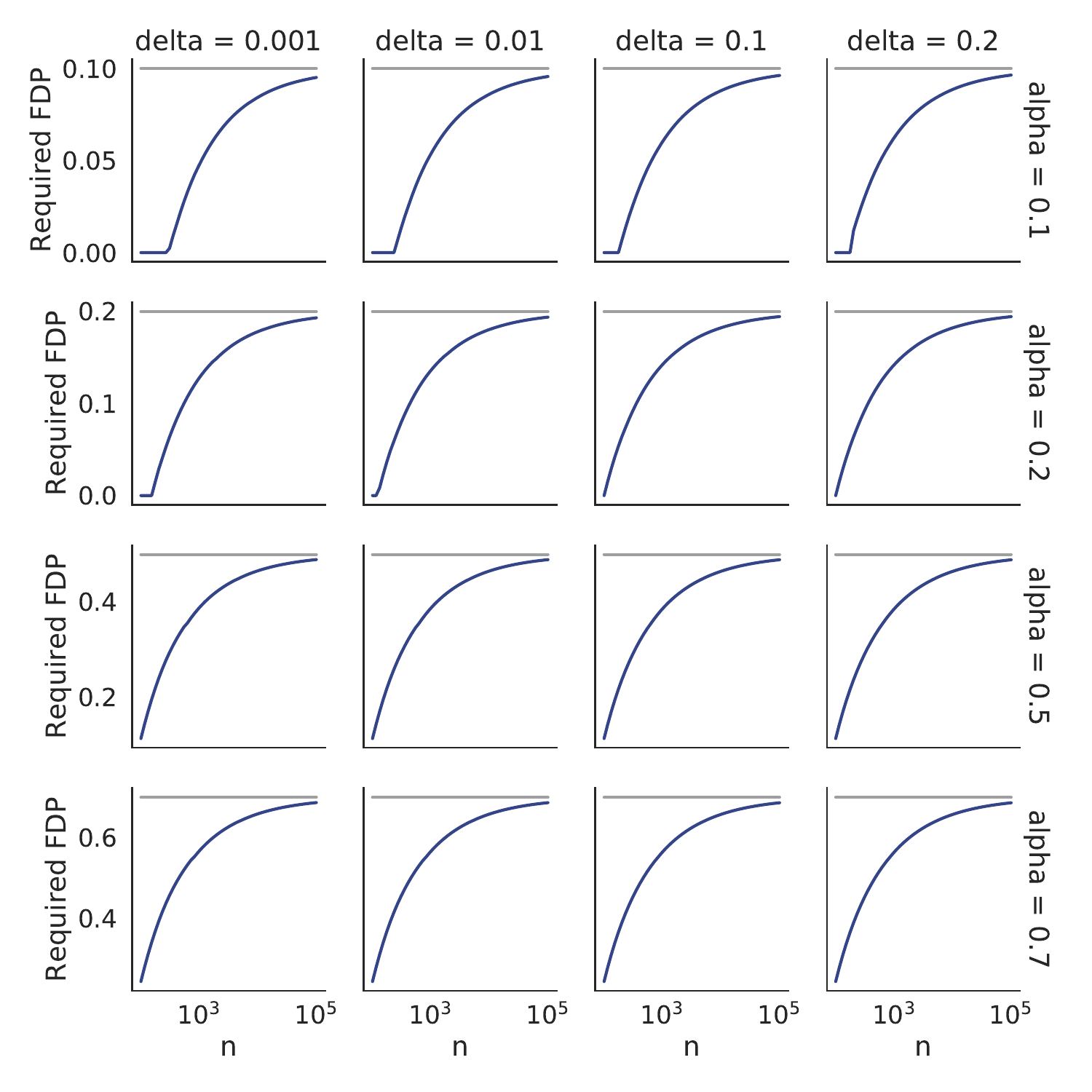}
    \vspace{-0.5cm}
    \caption{{\bf Empirical FDP needed to achieve FDR control.} We plot several desired $\fdr$ levels and high-probability choices $\delta$. The gray horizontal line indicates the target $\fdr$ level $\alpha$. }
    \label{fig:numerics}
\end{figure}

To apply Corollary~\ref{cor:normalized_vapnik_UCB} for FDR control, we need to derive the growth function. Let $Z_i = (X_i, Y_i)$, $\{1, \ldots, K\}$ be the set of labels, and 
\[\Lambda_{ij} := \inf\{\lambda \in \R: j \not\in \T_\lambda(X_i)\}\]
In our example, $\T_\lambda$ is decreasing in $\lambda$. Thus,
\[\fdr = \E[S(\lambda; Z_i)], \quad \text{where }S(\lambda; Z_i) = \frac{\sum_{j=1}^{m}I(\Lambda_{ij}\le \lambda, j\not \in Y_i)}{\sum_{j=1}^{m}I(\Lambda_{ij}\le \lambda)}.\]
Here, $0 / 0$ is defined as $0$. Note that $(S(\lambda; Z_1), \ldots, S(\lambda; Z_n))$ changes value only when $\lambda = \Lambda_{ij}$ for some $i$ and $j$, and thus
\begin{equation}
  \label{eq:Deltan_FDR}
  \Delta(n) \le nm + 1.
\end{equation}

In Algorithm~\ref{algo:bardenet} below, we present the full algorithm  corresponding to Corollary~\ref{thm:master-uniform}, specialized to the FDR control setting from Section~\ref{sec:multilabel} for concreteness.  In addition, in Figure~\ref{fig:numerics}, we report numerical information about the size of the bound. In particular, we show the empirical FDR that must be achieved in order to conclude that the true FDR is below $\alpha$ with probability at least $1-\delta$; that is, $x(\eta^*; \delta)$ in our notation above. When this curve is farther below the nominal rate $\alpha$, it means that the procedure is quite conservative. We find that the procedure is heavily conservative until $n$ is of order $10^5$. This provides further evidence that the uniform concentration approach is looser than the multiple testing approach, so it requires a much larger amount of calibration data.

\begin{algorithm}
    \caption{FDR Calibration via Uniform Concentration}
    \label{algo:bardenet}
    \begin{algorithmic}[1]
        \Require Nested-set-valued function $\T_{\lambda}$, desired FDR $\alpha$, calibration set $(X_1,Y_1),...,(X_n,Y_n)$, step size $\zeta$.
        \Procedure{LargestSet}{$\T_{\lambda}, \alpha, (X_1,Y_1),...,(X_n,Y_n)$}
        \State $\lambda \gets 1$
        \While{$\lambda > 0 \And  \alpha \ge  \widehat{\text{FDR}}\big(\T_{{\lambda}}\big) + t(\eta^*; \delta) \sqrt{\widehat{\text{FDR}}\big(\T_{{\lambda}}\big) + \eta^* + t(\eta^{*}; \delta)^2 / 4} + t(\eta^{*}; \delta) / 2$}
            \vspace{1mm}
            \State $\lambda \gets \lambda - \zeta$
        \EndWhile
        \State \textbf{return} $\lambda$
        \EndProcedure
        \Ensure A parameter $\hat{\lambda}$ that controls the FDR at level $\alpha$ with probability $\delta$.
    \end{algorithmic}
\end{algorithm}

\allowdisplaybreaks
\subsection{Technical proofs}\label{subapp:proofs}

\begin{proof}[Proof of Corollary~\ref{cor:normalized_vapnik_UCB}]
 Let $S(\lambda; Z_i) = L(T_\lambda(X_i), Y_i)$. By the second inequalities in Theorem \ref{thm:normalized_vapnik} and \ref{thm:normalized_vapnik_rademacher}, 
 \[\P\lb \sup_{\lambda\in \Lambda}\frac{R(\T_\lambda) - \hat{R}(\T_\lambda)}{\sqrt{R(\T_\lambda) + \eta}}\le t(\eta; \delta)\rb\ge 1 - \delta.\]
 On this event, for any $\lambda\in \Lambda$,
 \begin{align*}
     &R(\T_\lambda) - t(\eta; \delta)\sqrt{R(\T_\lambda) + \eta}\le \hat{R}(\T_\lambda)\\
     &\Longleftrightarrow \lb\sqrt{R(\T_\lambda) + \eta} - \frac{t(\eta; \delta)}{2}\rb^2 \le \hat{R}(\T_\lambda) + \eta + \frac{t(\eta; \delta)^2}{4}\\
     &\Longleftrightarrow \sqrt{R(\T_\lambda) + \eta}\le \sqrt{\hat{R}(\T_\lambda) + \eta + \frac{t(\eta; \delta)^2}{4}} + \frac{t(\eta; \delta)}{2}\\
     & \Longrightarrow R(\T_\lambda)\le \hat{R}(\T_\lambda) + t(\eta; \delta)\sqrt{\hat{R}(\T_\lambda) + \eta + \frac{t(\eta; \delta)^2}{4}} + \frac{t(\eta; \delta)^2}{2}.
 \end{align*}
 \end{proof}

 \begin{prop}\label{prop:bernstein}[Section 2.7 of~\cite{Boucheron2013}]
  Let $Z_1, \ldots, Z_n \in [0, 1]$ be i.i.d. random variables with $\E[Z_1] = \mu$ and $\Var[Z_1] = \sigma^2$. Further let $\hat{\mu} = (1/n)\sum_{i=1}^{n}Z_i$. Then for any $x > 0$,
   \[\P\lb \hat{\mu} - \mu \ge x\rb \le \exp\left\{-\frac{nx^2 / 2}{\sigma^2 + x / 3}\right\}.\]
 \end{prop}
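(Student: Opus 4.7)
The plan is a classical Cram\'er--Chernoff argument combined with a Bennett-style moment generating function (MGF) bound for bounded random variables. Set $W_i = Z_i - \mu$, so that $\E[W_i] = 0$, $\Var[W_i] = \sigma^2$, and $|W_i| \le 1$ (since $Z_i, \mu \in [0, 1]$). By Markov's inequality applied to the exponential, for any $\lambda > 0$,
\[
\P(\hat\mu - \mu \ge x) = \P\lb \sum_{i=1}^n W_i \ge nx \rb \le e^{-\lambda n x} \prod_{i=1}^n \E[e^{\lambda W_i}] = \exp\lb -\lambda n x + n \log \E[e^{\lambda W_1}] \rb,
\]
so it remains to control the single-variable log-MGF and then optimize over $\lambda$.

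Next I would establish the Bennett bound $\E[e^{\lambda W_1}] \le \exp(\sigma^2 (e^\lambda - \lambda - 1))$. Expanding the exponential as a power series and using $\E[W_1^k] \le \E[W_1^2]\cdot \|W_1\|_\infty^{k-2} \le \sigma^2$ for $k \ge 2$,
\[
\E[e^{\lambda W_1}] = 1 + \sum_{k \ge 2} \frac{\lambda^k}{k!}\, \E[W_1^k] \le 1 + \sigma^2 \sum_{k \ge 2} \frac{\lambda^k}{k!} = 1 + \sigma^2 (e^\lambda - \lambda - 1) \le \exp\lb \sigma^2(e^\lambda - \lambda - 1) \rb,
\]
where the last step uses $1 + u \le e^u$.

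To reach the Bernstein form, I would then apply the polynomial majorant $e^\lambda - \lambda - 1 \le \lambda^2 / \lb 2(1 - \lambda/3) \rb$ valid on $[0, 3)$, which follows from the bound $k! \ge 2\cdot 3^{k-2}$ for $k \ge 2$ and summing a geometric series. Substituting into the Chernoff bound and choosing $\lambda^{\star} = x / (\sigma^2 + x/3) \in (0, 3)$, the identity $1 - \lambda^{\star}/3 = \sigma^2 / (\sigma^2 + x/3)$ makes the exponent collapse to $-n x^2 / \lb 2 (\sigma^2 + x/3) \rb$, which is exactly the stated bound.

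The only step with any real subtlety is the MGF bound; the Chernoff step and the choice of $\lambda^{\star}$ are routine algebra. The main obstacle, such as it is, is justifying the moment comparison $\E[W_1^k] \le \sigma^2$ for $k \ge 2$, which here is handled by the crude estimate $|W_1|^{k-2} \le 1$. A modestly sharper constant could be obtained by using $|W_1| \le \max(\mu, 1-\mu)$, but this refinement is not needed for the inequality as stated.
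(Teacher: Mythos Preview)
Your argument is correct and follows the classical Cram\'er--Chernoff route to Bernstein's inequality: bound the moment generating function via $\E[W_1^k]\le\sigma^2$ for $k\ge 2$ (using $|W_1|\le 1$), pass to the Bennett form, apply the inequality $e^{\lambda}-\lambda-1\le \lambda^2/(2(1-\lambda/3))$ on $[0,3)$, and optimize in $\lambda$. The only edge case is $\sigma^2=0$, where $\lambda^\star$ is not in $(0,3)$, but then $\hat\mu=\mu$ almost surely and the inequality holds trivially.

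As for comparison: the paper does not supply its own proof of this proposition. It is stated as a quoted result with a citation to Section~2.7 of Boucheron, Lugosi, and Massart, \emph{Concentration Inequalities}, and is used as a black box in the proof of Theorem~\ref{thm:normalized_vapnik}. Your derivation is precisely the argument given in that reference (up to the minor variant of bounding moments directly rather than invoking the monotonicity of $x\mapsto (e^x-x-1)/x^2$), so there is nothing to contrast.
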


  \begin{prop}\label{prop:bardenet_maillard}[~\cite{bardenet2015concentration}, Theorem 2.4]
    Let $x_1, \ldots, x_N$ be a fixed finite population of $N > 1$ real points with $x_i\in [0, 1]$ and $\bar{x} = (1 / N)\sum_{i=1}^{N}x_i$. Further let $\Pi$ be a random permutation of $\{1, \ldots, N\}$. Then for any $\eps > 0$,
    \[\P\lb \frac{1}{n}\sum_{k=1}^{n}x_{\pi(k)} - \bar{x}\ge \eps\rb\le \exp\left\{-\frac{2n\eps^2}{(1 - n/N)(1 + 1/n)}\right\}.\]
 \end{prop}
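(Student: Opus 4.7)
The plan is to prove this via the standard Chernoff / exponential moment approach, combined with a comparison between sampling without replacement and sampling with replacement. Specifically, let $S_n = \sum_{k=1}^{n} x_{\Pi(k)}$ and $\bar S_n = S_n / n$. By Markov's inequality applied to $\exp(\lambda(S_n - n\bar{x}))$ for $\lambda > 0$, it suffices to bound the moment generating function (MGF) of $S_n - n\bar x$ under sampling without replacement. The classical identity of Hoeffding~\cite{hoeffding1963} states that this MGF is upper bounded by the MGF of $n$ i.i.d.\ samples drawn \emph{with} replacement from the empirical distribution $\{x_1, \dots, x_N\}$. Combining this with the Hoeffding MGF bound $\E[\exp(\lambda(X - \bar x))] \le \exp(\lambda^2/8)$ for any $X \in [0,1]$ immediately yields Hoeffding's classical bound $\exp(-2n\eps^2)$, but without the finite-population correction factor $(1-n/N)$.

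To sharpen the exponent to the Bardenet--Maillard form, the plan is to use a reverse martingale argument in the spirit of Serfling. The idea is to view the sample mean $\bar S_n$ as a functional of the random permutation and consider, for $k$ running down from $N$ to $n$, the reverse martingale
\[
M_k = \E\big[ \bar S_n \mid \Pi(k+1), \dots, \Pi(N) \big].
\]
The conditional variance of the increments shrinks as $k$ grows, reflecting the fact that sampling without replacement leaves less residual randomness than sampling with replacement; concretely, each increment's range is controlled by a factor of order $(N - k)/(N-1)$. Summing these increments and applying an Azuma/Hoeffding-type bound with the sharpened per-step bounded-differences constants then produces the $(1 - n/N)$ factor in the denominator of the exponent.

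To also pick up the $(1 + 1/n)$ factor (which is the refinement of Bardenet--Maillard over Serfling), the plan is to keep slightly more information than a pure bounded-differences bound. Rather than bounding each martingale increment by its worst case, I would use a variance-based MGF bound (Bennett/Bernstein style) on the centered increments and track the sum of conditional variances exactly via the hypergeometric variance formula $\mathrm{Var}(S_n) = n (N-n) \sigma^2 / (N-1)$, where $\sigma^2 = N^{-1}\sum_i (x_i - \bar x)^2 \le 1/4$. Feeding this into a Bennett-type argument with a careful optimization over $\lambda$ and then discarding lower-order terms (using $x_i \in [0,1]$ and $n \ge 1$) produces the extra $(1 + 1/n)$ correction in the denominator.

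The main obstacle will be executing the reverse-martingale argument with the right constants: one needs both the sharp increment-range bound (giving $(1-n/N)$) and the sharp cumulative-variance bound (giving $(1+1/n)$) simultaneously, which requires bookkeeping of the conditional distributions at each step of the reverse filtration. A secondary obstacle is the coupling step that underlies Hoeffding's reduction; while well known, it must be set up carefully in the reverse-time formulation so that the convexity inequality is still applicable inside the conditional MGF at each stage. Since the result is stated here verbatim as~\cite[Theorem 2.4]{bardenet2015concentration}, an alternative acceptable plan is simply to invoke that theorem as a black box, which is what the paper does.
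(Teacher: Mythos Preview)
The paper does not prove this proposition at all; it is listed among a block of cited auxiliary results (alongside the Bernstein inequality, the Bentkus--Dzindzalieta inequality, and the Pinelis inequality) and simply invoked as \cite[Theorem~2.4]{bardenet2015concentration}. You correctly identify this in your last sentence, so from the standpoint of matching the paper your ``alternative plan'' is exactly what is needed.

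That said, your proposed proof sketch contains a misattribution of the mechanism behind the $(1+1/n)$ factor. The Serfling-type reverse-martingale argument does give the $(1-(n-1)/N)$ denominator, but the refinement in Bardenet--Maillard is \emph{not} obtained via a Bennett/Bernstein variance bound on the increments. It comes instead from a complement (duality) trick: the un-sampled points form a simple random sample of size $N-n$, and their mean satisfies $\bar S_n - \bar x = -\tfrac{N-n}{n}(\bar S'_{N-n}-\bar x)$. Applying Serfling's inequality to this complementary sample of size $N-n$ with threshold $\tfrac{n}{N-n}\eps$ and simplifying gives exactly the exponent $-2n\eps^2/[(1-n/N)(1+1/n)]$. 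So the extra factor is an artifact of swapping the roles of sampled and un-sampled points, not of tracking conditional variances more carefully. Your Bennett-type plan would more naturally lead toward their Bernstein--Serfling inequality (their Theorem~2.5 / Proposition~1.4, which is the paper's Proposition~\ref{prop:bardenet_maillard_bernstein}) rather than Theorem~2.4.
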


   \begin{prop}\label{prop:bardenet_maillard_bernstein}[~\cite{bardenet2015concentration}, Proposition 1.4]
     With the same setting as Proposition \ref{prop:bardenet_maillard}, for any $\eps > 0$,
     \[\P\lb \frac{1}{n}\sum_{k=1}^{n}x_{\pi(k)} - \bar{x}\ge \eps\rb\le \exp\left\{-\frac{n\eps^2 / 2}{\sigma^2 + \eps / 3}\right\},\]
     where
     \[\sigma^2 = \frac{1}{n}\sum_{k=1}^{n}(x_k - \bar{x})^2.\]
 \end{prop}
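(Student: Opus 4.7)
The plan is a two-step reduction: first compare the moment generating function (MGF) of the without-replacement sample to that of an i.i.d.\ sample with replacement from the same finite population, and then apply the classical Chernoff--Bernstein calculation in the i.i.d.\ regime. This is the standard route for turning any concentration inequality for sums of i.i.d.\ bounded random variables into the corresponding inequality for sampling without replacement.

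Concretely, let $W_k = x_{\pi(k)}$ denote the $k$-th draw without replacement, and let $Z_1,\ldots,Z_n$ be i.i.d.\ draws with replacement from the uniform distribution on $\{x_1,\ldots,x_N\}$, so that $\E[Z_k] = \bar{x}$ and $\Var(Z_k) = \sigma^2$ (the population variance). A classical theorem of \citet{hoeffding1963} states that for any continuous convex $f:\mathbb{R}\to\mathbb{R}$,
\[\E\!\left[f\!\left(\sum_{k=1}^n W_k\right)\right] \le \E\!\left[f\!\left(\sum_{k=1}^n Z_k\right)\right].\]
Specializing to $f(s)=e^{\lambda s}$ with $\lambda>0$, this yields $\E[\exp(\lambda\sum_k W_k)]\le \E[\exp(\lambda\sum_k Z_k)]$, so that by Markov,
\[\P\!\left(\frac{1}{n}\sum_{k=1}^n W_k - \bar{x}\ge \eps\right)\le e^{-\lambda n\eps}\,e^{-\lambda n\bar{x}}\,\E\!\left[\exp\!\left(\lambda \sum_{k=1}^n Z_k\right)\right].\]
Since the $Z_k$ are i.i.d.\ in $[0,1]$ with mean $\bar{x}$ and variance $\sigma^2$, the standard i.i.d.\ Bernstein MGF bound (the calculation underlying Proposition~\ref{prop:bernstein}) controls the right-hand side by $\exp\{-n\eps^2/(2(\sigma^2+\eps/3))\}$ after optimizing over $\lambda$. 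Combining the two steps gives exactly the claimed tail bound.

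The only non-routine ingredient is Hoeffding's convex comparison; everything else is standard Chernoff bookkeeping already invoked elsewhere in the paper. Its proof constructs an implicit coupling by writing $(W_1,\ldots,W_n)$ as an average over permutations of $n$-subsets of $\{x_1,\ldots,x_N\}$ and $(Z_1,\ldots,Z_n)$ as an average over all $n$-tuples, then applying Jensen's inequality inside the expectation. One minor bookkeeping subtlety is that the expression $(1/n)\sum_{k=1}^n(x_k-\bar{x})^2$ in the statement should be read as the population variance $(1/N)\sum_{k=1}^N(x_k-\bar{x})^2$; this is the quantity used in \cite{bardenet2015concentration} and is what naturally appears as $\Var(Z_k)$ in the reduction above.
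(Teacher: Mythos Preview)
Your argument is correct and follows exactly the route taken in \cite{bardenet2015concentration}: Hoeffding's convex-domination lemma reduces the without-replacement MGF to the with-replacement MGF, after which the standard i.i.d.\ Bernstein--Chernoff computation (Proposition~\ref{prop:bernstein}) finishes the job. Note, however, that the present paper does not give its own proof of this proposition; it is quoted directly from \cite{bardenet2015concentration} as an auxiliary tool and invoked without argument inside Step~3 of the proof of Theorem~\ref{thm:normalized_vapnik}. So there is no ``paper's proof'' to compare against beyond the citation itself, and your write-up simply fills in what the reference contains.

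Your closing remark about the definition of $\sigma^2$ is also on target: the displayed formula $\sigma^2 = \tfrac{1}{n}\sum_{k=1}^{n}(x_k-\bar{x})^2$ is a typo for the full population variance $\tfrac{1}{N}\sum_{k=1}^{N}(x_k-\bar{x})^2$, which is what appears in \cite{bardenet2015concentration} and what the reduction to i.i.d.\ sampling actually produces as $\Var(Z_k)$. This is worth flagging, since the paper later applies the bound with $\sigma^2$ replaced by an upper bound on the population variance of the $S(\lambda;Z_i)$'s, consistent with the correct reading.
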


 \begin{prop}\label{prop:bentkus2015tight}[~\cite{bentkus2015tight}, Theorem 1.1]
   Let $\eps_{1}, \ldots, \eps_{n}$ be i.i.d. Rademacher random variables and $a = (a_1, \ldots, a_n)$ be a vector with $\|a\|_{2}\le 1$. Then
   \[\P\lb\sum_{i=1}^{n}a_{i}\eps_{i}\ge x\rb \le c_1(1 - \Phi(x))\]
   where $\Phi(x)$ is the CDF of the standard normal distribution and $c_1 = 1 / 4(1 - \Phi(\sqrt{2})) \approx 3.178$.
 \end{prop}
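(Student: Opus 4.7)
The plan is to prove this as a sharp comparison inequality between the tail of the weighted Rademacher sum $S = \sum_{i=1}^{n} a_i \eps_i$ and the tail of a standard Gaussian. The constant $c_1 = 1/(4(1-\Phi(\sqrt{2})))$ is tight, attained at $n=2$, $a_1 = a_2 = 1/\sqrt{2}$, and $x = \sqrt{2}$, where $S$ takes the values $\pm\sqrt{2}, 0$ with probabilities $1/4, 1/2, 1/4$, so $\P(S\ge \sqrt{2}) = 1/4 = c_1(1-\Phi(\sqrt{2}))$. Any proof must therefore certify that this balanced two-point configuration is the worst case.

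First, by normalization I reduce to $\|a\|_2 = 1$ (shrinking $\|a\|_2$ only makes the left-hand side smaller while the right-hand side is unchanged, and padding with zero coordinates is harmless). Setting
\[R := \sup_{n,\, a:\|a\|_2 = 1,\, x \ge 0} \frac{\P(S \ge x)}{1 - \Phi(x)},\]
the goal becomes to show $R = c_1$. That $R < \infty$ follows from Hoeffding's bound $\P(S \ge x) \le e^{-x^2/2}$ combined with the Mills-ratio lower bound $1 - \Phi(x) \ge \frac{x}{(1+x^2)\sqrt{2\pi}} e^{-x^2/2}$, so the ratio grows at most polynomially in $x$ and any supremizing sequence stays within a compact region.

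Second, the core step is an extremality reduction showing the sup is attained by a vector with at most two non-zero entries. I would proceed by a pairwise-exchange argument: fix all coordinates except $(a_i, a_j)$, condition on the remaining Rademacher variables to reduce to $C + a_i\eps_i + a_j\eps_j \ge x$ for a fixed shift $C$, and then rotate $(a_i, a_j)$ on the circle of radius $\sqrt{a_i^2 + a_j^2}$ to show that the conditional probability, after averaging over $C$, is maximized when the mass is concentrated into either one coordinate or the balanced pair. Iterating forces the optimizer to have at most two non-zero coordinates; a residual comparison then shows that the balanced two-point case dominates the one-point case $a = (1, 0)$ (for which $\P(S \ge 1) = 1/2$ corresponds to a ratio $1/(2(1 - \Phi(1))) \approx 3.15 < c_1$).

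Finally, for $a = (\cos\theta, \sin\theta)$ with $\theta \in [0,\pi/4]$, the random variable $S$ takes values $\pm(\cos\theta\pm\sin\theta)$ each with probability $1/4$, so $\P(S \ge x)/(1-\Phi(x))$ is piecewise-constant in $x$ with at most three jumps. A direct optimization over the finitely many $x$-values and $\theta \in [0,\pi/4]$ shows the maximum of the ratio is attained at $\theta = \pi/4$, $x = \sqrt{2}$, giving $R = c_1$.

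The hard part will be the pairwise-exchange step. The conditional probability $\P(C + a_i\eps_i + a_j\eps_j \ge x \mid C)$ is piecewise-constant in $(a_i, a_j)$ with transitions at several thresholds depending on $C$ and $x$, and the preferred direction of mass concentration depends non-monotonically on these parameters. A clean proof likely requires either averaging over $C$ to smooth the piecewise behavior before optimizing, or appealing to a convex-ordering lemma (as in Pinelis' extremal-measure framework) that tracks a stronger stochastic dominance rather than individual tail probabilities. The sharpness of $c_1$ in particular suggests that the proof hinges on the precise interaction between Mills-ratio asymptotics and the discrete symmetry of the two-point extremizer, which is why I expect a pure tensorization or moment-generating-function approach to fall short of the optimal constant.
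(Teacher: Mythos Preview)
The paper does not prove this proposition at all: it is stated with attribution to \cite{bentkus2015tight}, Theorem~1.1, and then invoked as a black box inside the proof of Theorem~\ref{thm:normalized_vapnik_rademacher}. There is therefore no ``paper's own proof'' to compare against; your proposal is an attempt to reprove a cited external result.

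As an independent sketch, your outline has a real gap. Your argument that $R<\infty$ does not go through: Hoeffding's inequality $\P(S\ge x)\le e^{-x^2/2}$ together with the Mills-ratio lower bound $1-\Phi(x)\ge \tfrac{x}{(1+x^2)\sqrt{2\pi}}e^{-x^2/2}$ gives a ratio bound of order $(1+x^2)/x$, which \emph{diverges} as $x\to\infty$. So ``the ratio grows at most polynomially'' is true but does not imply that supremizing sequences stay in a compact region; you would need a sharper sub-Gaussian tail (with constant strictly better than $e^{-x^2/2}$) or a separate argument ruling out large $x$. More seriously, the pairwise-exchange step---reducing the extremizer to at most two nonzero coordinates by rotating $(a_i,a_j)$ on a circle---is asserted but not justified, and you correctly flag it as the hard part. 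The conditional probability $\P(C+a_i\eps_i+a_j\eps_j\ge x\mid C)$ is piecewise constant in $(a_i,a_j)$ and does not vary monotonically or convexly along the circle, so there is no evident reason the averaged-over-$C$ objective should be maximized at a corner or at the balanced point; the actual Bentkus--Dzindzalieta argument proceeds by a different (and considerably more delicate) route. If you want to supply a self-contained proof here, you should either reproduce their argument or cite it, rather than rely on an exchange heuristic that, as you yourself note, would require a Pinelis-style extremal-measure lemma you have not stated.
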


 \begin{prop}\label{prop:pinelis2012asymptotically}[~\cite{pinelis2012asymptotically}, Theorem 1.1]
   With the same assumptions and notation as in Proposition \ref{prop:bentkus2015tight},
      \[\P\lb\sum_{i=1}^{n}a_{i}\eps_{i}\ge x\rb \le 1 - \Phi(x) + \frac{c_2}{9 + x^2}\exp\left\{-\frac{x^2}{2}\right\},\]
      where $c_2 = 5\sqrt{e}(2\Phi(1) - 1)\approx 5.628$.
 \end{prop}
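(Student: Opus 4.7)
Since this statement is a direct quotation of Theorem 1.1 of \cite{pinelis2012asymptotically}, the honest plan is simply to invoke that result rather than to reprove it. Nevertheless, if I were to attempt an independent derivation, I would begin from the standard Chernoff--Hoeffding estimate: each $\eps_i$ is Rademacher, so $\E[e^{ta_i\eps_i}] = \cosh(ta_i) \le e^{t^2 a_i^2/2}$, and hence $\E[\exp(t\sum_i a_i \eps_i)] \le e^{t^2/2}$ using $\|a\|_2 \le 1$. Optimizing the Markov bound over $t$ yields the crude subgaussian tail $\P(\sum_i a_i\eps_i \ge x)\le e^{-x^2/2}$, which has the correct exponential shape but is missing the polynomial prefactor.

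The content of the proposition is sharper: the bound matches the Gaussian tail $1-\Phi(x)$ itself, plus an explicit rational correction $c_2 e^{-x^2/2}/(9+x^2)$ that handles small and moderate values of $x$ uniformly. To close the $1/x$ gap between the crude estimate and the true Gaussian tail $1 - \Phi(x) \sim e^{-x^2/2}/(x\sqrt{2\pi})$, I would follow Pinelis's strategy based on convex (stop-loss) comparisons. The central intermediate lemma would take the form $\E[\phi(\sum_i a_i\eps_i)] \le \E[\phi(Z)]$ for a standard normal $Z$ and all convex $\phi$, which transfers Gaussian structure to the Rademacher sum. Choosing $\phi(y) = (y-y_0)_+^p$ and optimizing over $y_0$ and $p$ yields tail bounds of the shape claimed, since $\E[(Z-y_0)_+^p]$ admits a closed-form expression in terms of Mills' ratio that decays like $(1-\Phi(y_0)) \cdot (\text{polynomial in } y_0)$.

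The main obstacle is recovering the precise constant $c_2 = 5\sqrt{e}(2\Phi(1)-1)$. This constant emerges from a carefully tuned two-parameter optimization, and tracking it requires the stop-loss comparison to be tight at a specific pivot point (from which the factor $2\Phi(1)-1$ arises) together with a specific choice of $p$ (accounting for the $\sqrt{e}$ factor). Reproducing it faithfully essentially amounts to redoing Pinelis's original argument, and doing so adds no insight to the present paper. I would therefore cite Theorem 1.1 of \cite{pinelis2012asymptotically} directly and use it as a black box, in the same spirit as the adjacent Proposition~\ref{prop:bentkus2015tight} from \cite{bentkus2015tight}.
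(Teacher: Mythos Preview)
Your proposal is correct and matches the paper's approach exactly: the paper states this proposition as a direct citation of Theorem~1.1 in \cite{pinelis2012asymptotically} and provides no proof, using it purely as a black box (just as it does for Proposition~\ref{prop:bentkus2015tight}). Your conclusion to cite the result rather than reprove it is precisely what the paper does.
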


 \begin{proof}[\textbf{Proof of Theorem \ref{thm:normalized_vapnik}}]
Let $Z_{n+1}, \ldots, Z_{n+n'}$ be i.i.d. fresh samples drawn from the same distribution as $Z_1$ and 
\[\hat{s}_{n'}(\lambda) = \frac{1}{n'}\sum_{i=n+1}^{n+n'}S(\lambda; Z_{i}).\]
Since the proof is quite involved, we decompose it into four steps.

  ~\\
 \noindent  \textbf{Step 1 for \eqref{eq:slam_lower}:} we shall prove that
  \begin{equation}
    \label{eq:step1_lower}
    \P\lb \sup_{\lambda\in \R}\frac{\hat{s}_{n}(\lambda) - \hat{s}_{n'}(\lambda)}{\sqrt{\hat{s}_{n+n'}(\lambda) + \eta}}\ge (1 - \gamma)t\rb\ge \P\lb\sup_{\lambda\in \R}\frac{\hat{s}_{n}(\lambda) - s(\lambda)}{\sqrt{\hat{s}_n(\lambda) + \eta}}\ge t\rb\inf_{\lambda\in \R}\P\lb \frac{\hat{s}_{n'}(\lambda) - s(\lambda)}{\sqrt{s(\lambda) + \eta}}\le \gamma t\rb.
  \end{equation}
Consider the event that there exists $\lambda^{*}\in \R$ such that
  \begin{equation}
    \label{eq:step1_lower_event}
    \frac{\hat{s}_{n}(\lambda^{*}) - s(\lambda^{*})}{\sqrt{\hat{s}_{n}(\lambda^{*}) + \eta}}\ge t, \quad \frac{\hat{s}_{n'}(\lambda^{*}) - s(\lambda^{*})}{\sqrt{s(\lambda^{*}) + \kappa^{+}}}\le \gamma t.
  \end{equation}
  The first inequality of \eqref{eq:step1_lower_event} implies that
  \[\lb\hat{s}_{n}(\lambda^{*}) + \eta\rb - t\sqrt{\hat{s}_{n}(\lambda^{*}) + \eta}\ge \eta + s(\lambda^{*})\Longrightarrow \lb\sqrt{\hat{s}_{n}(\lambda) + \eta} - \frac{t}{2}\rb^2 \ge \frac{t^2}{4} + \eta + s(\lambda^{*}).\]
  Since $\hat{s}_{n}(\lambda) + \eta > 0$ and $\sqrt{t^2 / 4 + \eta} > t / 2$, we have
  \begin{align}
    \hat{s}_{n}(\lambda^{*}) + \eta&\ge \lb\frac{t}{2} + \sqrt{\frac{t^2}{4} + \eta + s(\lambda^{*})}\rb^2 =  \frac{t^2}{2} + \eta + s(\lambda^{*}) + t\sqrt{\frac{t^2}{4} + \eta + s(\lambda^{*})}\nonumber\\
    \ge & s(\lambda^{*}) + \eta + \frac{t^2}{2} + t\sqrt{\frac{t^2}{4} + \eta} = s(\lambda^{*}) + \kappa^{+}.\label{eq:step1_lower_sn_s}
  \end{align}
  The second inequality of \eqref{eq:step1_lower_event} and \eqref{eq:step1_lower_sn_s} imply that
  \begin{equation}\label{eq:step1_lower_sn_snp}
    \frac{\hat{s}_{n'}(\lambda^{*}) - s(\lambda^{*})}{\sqrt{s(\lambda^{*}) + \kappa^{+}}}\le \gamma t \le t \le \frac{\hat{s}_{n}(\lambda^{*}) - s(\lambda^{*})}{\sqrt{\hat{s}_{n}(\lambda^{*}) + \eta}} \le \frac{\hat{s}_{n}(\lambda^{*}) - s(\lambda^{*})}{\sqrt{s(\lambda^{*}) + \kappa^{+}}}\Longrightarrow \hat{s}_{n}(\lambda^{*})\ge \hat{s}_{n'}(\lambda^{*}).
  \end{equation}
  As a result,
  \begin{equation}\label{eq:step1_lower_snnp_sn}
    \hat{s}_{n+n'}(\lambda) = \frac{n'}{n + n'}\hat{s}_{n}(\lambda) + \frac{n}{n + n'}\hat{s}_{n'}(\lambda)\le \hat{s}_{n}(\lambda)
  \end{equation}
  By \eqref{eq:step1_lower_event} - 
  \eqref{eq:step1_lower_snnp_sn},
  \begin{align*}
    \frac{\hat{s}_{n}(\lambda^{*}) - \hat{s}_{n'}(\lambda^{*})}{\sqrt{\hat{s}_{n+n'}(\lambda^{*}) + \eta}}
    & \stackrel{\eqref{eq:step1_lower_sn_snp} \text{ and }\eqref{eq:step1_lower_snnp_sn}}{\ge} \frac{\hat{s}_{n}(\lambda^{*}) - \hat{s}_{n'}(\lambda^{*})}{\sqrt{\hat{s}_{n}(\lambda^{*}) + \eta}}\\
    & \stackrel{\eqref{eq:step1_lower_event}}{\ge} \frac{t\sqrt{\hat{s}_{n}(\lambda^{*}) + \eta} - (\hat{s}_{n'}(\lambda^{*}) - s(\lambda^{*}))}{\sqrt{\hat{s}_{n}(\lambda^{*}) + \eta}}\\
    & \stackrel{\eqref{eq:step1_lower_event}}{\ge} \frac{t\sqrt{\hat{s}_{n}(\lambda^{*}) + \eta} - \gamma t\sqrt{s(\lambda^{*}) + \kappa^{+}}}{\sqrt{\hat{s}_{n}(\lambda^{*}) + \eta}}\\
    & \stackrel{\eqref{eq:step1_lower_sn_s}}{\ge} \frac{t\sqrt{\hat{s}_{n}(\lambda^{*}) + \eta} - \gamma t\sqrt{\hat{s}_{n}(\lambda^{*}) + \eta}}{\sqrt{\hat{s}_{n}(\lambda^{*}) + \eta}}\\
    & = (1 - \gamma)t.
  \end{align*}
  Given $(Z_1, \ldots, Z_n)$, if $\lambda\mapsto \frac{\hat{s}_{n}(\lambda) - s(\lambda)}{\sqrt{\hat{s}_{n}(\lambda) + \eta}}$ achieves the supremum, we take $\lambda^{*}$ as the maximizer. Then $\lambda^{*}$ is measurable with respect to $\{Z_1, \ldots, Z_n\}$ and independent of $\hat{s}_{n'}$. As a result,
  \begin{align*}
    \lefteqn{\P\lb \sup_{\lambda\in \R}\frac{\hat{s}_{n}(\lambda) - \hat{s}_{n'}(\lambda)}{\sqrt{\hat{s}_{n+n'}(\lambda) + \eta}}\ge (1 - \gamma)t\rb}\\ & \ge \P\lb \frac{\hat{s}_{n}(\lambda^{*}) - \hat{s}_{n'}(\lambda^{*})}{\sqrt{\hat{s}_{n+n'}(\lambda^{*}) + \eta}}\ge (1 - \gamma)t\rb\\
    &\ge \P\lb\frac{\hat{s}_{n}(\lambda^{*}) - s(\lambda^{*})}{\sqrt{\hat{s}_{n}(\lambda^{*}) + \eta}}\ge t, \,\, \frac{\hat{s}_{n'}(\lambda^{*}) - s(\lambda^{*})}{\sqrt{s(\lambda^{*}) + \kappa^{+}}}\le \gamma t\rb\\
    & = \E\left[\P\lb\frac{\hat{s}_{n}(\lambda^{*}) - s(\lambda^{*})}{\sqrt{\hat{s}_{n}(\lambda^{*}) + \eta}}\ge t, \,\, \frac{\hat{s}_{n'}(\lambda^{*}) - s(\lambda^{*})}{\sqrt{s(\lambda^{*}) + \kappa^{+}}}\le \gamma t\rb\mid Z_1, \ldots, Z_n\right]\\    
    & = \E\left[\ind{\frac{\hat{s}_{n}(\lambda^{*}) - s(\lambda^{*})}{\sqrt{\hat{s}_{n}(\lambda^{*}) + \eta}}\ge t} \P\lb\frac{\hat{s}_{n'}(\lambda^{*}) - s(\lambda^{*})}{\sqrt{s(\lambda^{*}) + \kappa^{+}}}\le \gamma t\mid Z_1, \ldots, Z_n\rb\right]\\
    & \ge  \P\lb\frac{\hat{s}_{n}(\lambda^{*}) - s(\lambda^{*})}{\sqrt{\hat{s}_{n}(\lambda^{*}) + \eta}}\ge t\rb \inf_{\lambda\in \R}\P\lb\frac{\hat{s}_{n'}(\lambda) - s(\lambda)}{\sqrt{s(\lambda) + \kappa^{+}}}\le \gamma t\rb\\
    & = \P\lb\sup_{\lambda\in \R}\frac{\hat{s}_{n}(\lambda) - s(\lambda)}{\sqrt{\hat{s}_{n}(\lambda) + \eta}}\ge t\rb \inf_{\lambda\in \R}\P\lb\frac{\hat{s}_{n'}(\lambda) - s(\lambda)}{\sqrt{s(\lambda) + \kappa^{+}}}\le \gamma t\rb.
  \end{align*}
  This concludes \eqref{eq:step1_lower}. If the supremum of $\lambda\mapsto \frac{\hat{s}_{n}(\lambda) - s(\lambda)}{\sqrt{\hat{s}_{n}(\lambda) + \eta}}$ cannot be achieved, we can find a sequence of events $\{\lambda_{\ell}: \ell = 1, 2, \ldots\}$ at which the values converge to the supremum. For each $\lambda_{\ell}$, we can prove the above inequality and \eqref{eq:step1_lower} can be proved by taking $\ell\rightarrow \infty$. 

  ~\\
  \noindent \textbf{Step 2 for \eqref{eq:slam_lower}:} we shall prove that
  \begin{equation}
    \label{eq:step2_lower}
    \inf_{\lambda\in \R}\P\lb\frac{\hat{s}_{n'}(\lambda) - s(\lambda)}{\sqrt{s(\lambda) + \kappa^{+}}}\le \gamma t\rb \ge 1 - \exp\{-g_1(t; n', \gamma, \kappa^{+})\}.
  \end{equation}
  Given any subset $\lambda\in \R$,
  \[\hat{s}_{n'}(\lambda) - s(\lambda) = \frac{1}{n'}\sum_{i=n+1}^{n+n'}\lb S(\lambda; Z_i) - s(\lambda)\rb,\]
  and
  \[\E[S(\lambda; Z_i) - s(\lambda)] = 0, \quad \E[(S(\lambda; Z_i) - s(\lambda))^2] \le \E[S(\lambda; Z_i)^2] \le \E[S(\lambda; Z_i)] = s(\lambda).\]
  By the Bernstein inequality (Proposition \ref{prop:bernstein}),
  \[\P\lb \hat{s}_{n'}(\lambda) - s(\lambda)\ge \gamma t \sqrt{s(\lambda) + \kappa^{+}}\rb\le \exp\left\{-\frac{n't^2}{2}\gamma^2\frac{s(\lambda) + \kappa^{+}}{s(\lambda) + \gamma t \sqrt{s(\lambda) + \kappa^{+}} / 3}\right\}.\]
  It remains to prove that
  \begin{equation}
    \label{eq:step2_lower_extra_term}
    \frac{s(\lambda) + \kappa^{+}}{s(\lambda) + \gamma t \sqrt{s(\lambda) + \kappa^{+}} / 3}\ge \lb 1 + \frac{\gamma^2 t^2}{36\kappa^{+}}\rb^{-1}.
  \end{equation}
  Using the fact that $\sqrt{a} \le (ba + 1 / b) / 2$, we have
  \[\sqrt{s(\lambda) + \kappa^{+}}\le \frac{\gamma t}{12\kappa^{+}}(s(\lambda) + \kappa^{+}) + \frac{3\kappa^{+}}{\gamma t}.\]
  This entails that
  \[\inf_{\lambda\in \R}\P\lb\frac{\hat{s}_{n'}(\lambda) - s(\lambda)}{\sqrt{s(\lambda) + \kappa^{+}}}\le \gamma t\rb \ge 1 - \exp\{-g_{11}(t; n', \gamma, \kappa^{+})\},\]
  where
  \[g_{11}(t; n', \gamma, \kappa^{+}) = \frac{n't^2}{2}\frac{\gamma^2}{1 + \gamma^2 t^2/36\kappa}.\]
  On the other hand, since $\E[\hat{s}_{n'}(\lambda)] = s(\lambda)$, by Chebyshev's inequality,
  \[\P\lb \frac{\hat{s}_{n'}(\lambda) - s(\lambda)}{\sqrt{s(\lambda) + \kappa^{+}}}\ge \gamma t\rb\le \frac{1}{\gamma^2 t^2}\Var\lb \frac{\hat{s}_{n'}(\lambda) - s(\lambda)}{\sqrt{s(\lambda) + \kappa^{+}}}\rb = \frac{1}{n'\gamma^2 t^2}\frac{s(\lambda)(1 - s(\lambda))}{s(\lambda) + \kappa^{+}}.\]
  Let $m(x) = x(1 - x) / (x + \kappa^{+})$. Then
  \[\frac{d}{dx}\log[m(x)] = \frac{1}{x} - \frac{1}{1 - x} - \frac{1}{x + \kappa^{+}} = \frac{\kappa^{+} - 2\kappa^{+} x - x^2}{x(x + \kappa^{+})(1 - x)}.\]
  Since $\eta \ge 0$, $\kappa^{+}\ge 0$. Via some tedious algebra, we can show that $m(x)$ achieves its maximum at $x^{*} = \sqrt{\kappa^{+} + \kappa^{+2}} - \kappa^{+}$ at which 
  \[m(x^{*}) = (\sqrt{1 + \kappa^{+}} - \sqrt{\kappa^{+}})^2.\]
  Therefore,
  \[\P\lb \frac{\hat{s}_{n'}(\lambda) - s(\lambda)}{\sqrt{s(\lambda) + \kappa^{+}}}\le \gamma t\rb\ge 1 - \exp\{-g_{12}(t; n', \gamma, \kappa^{+})\},\]
  where
  \[g_{12}(t; n', \gamma, \kappa^{+}) = \log\lb n'\gamma^2 t^2 / (\sqrt{1 + \kappa^{+}} - \sqrt{\kappa^{+}})^2\rb.\]
  Putting two pieces together, \eqref{eq:step2_lower} is proved by noting that $g_{1} = g_{11}\wedge g_{12}$.

  ~\\
  \noindent \textbf{Step 3 for \eqref{eq:slam_lower}:} we shall prove that
  \begin{equation}
    \label{eq:step3_lower}
    \P\lb \sup_{\lambda\in \R}\frac{\hat{s}_{n}(\lambda) - \hat{s}_{n'}(\lambda)}{\sqrt{\hat{s}_{n+n'}(\lambda) + \eta}}\ge (1 - \gamma)t\rb\le \Delta(n + n')\exp\{-g_2(t; n, n', \gamma, \eta)\}.
  \end{equation}
  Let $\Pi$ be any given permutation over $\{1, \ldots, n + n'\}$. Since $Z_{1}, \ldots, Z_{n+n'}$ are i.i.d.,
  \[(Z_{1}, \ldots, Z_{n+n'})\stackrel{d}{=}(Z_{\Pi(1)}, \ldots, Z_{\Pi(n+n')}).\]
  As a result, for any $\Pi$,
  \[\P\lb \sup_{\lambda\in \R}\frac{\hat{s}_{n}(\lambda) - \hat{s}_{n'}(\lambda)}{\sqrt{\hat{s}_{n+n'}(\lambda) + \eta}}\ge (1 - \gamma)t\rb = \P\lb \sup_{\lambda\in \R}\frac{\hat{s}_{n, \Pi}(\lambda) - \hat{s}_{n', \Pi}(\lambda)}{\sqrt{\hat{s}_{n+n'}(\lambda) + \eta}}\ge (1 - \gamma)t\rb\]
  where $\hat{s}_{n, \Pi}(\lambda)$ is the empirical CDF of $Z_{\Pi(1)}, \ldots, Z_{\Pi(n)}$ and $\hat{s}_{n', \Pi}(\lambda)$ is the empirical CDF of $Z_{\Pi(n+1)}, \ldots,$ $Z_{\Pi(n + n')}$. Note that $\hat{s}_{n+n'}(\lambda)$ is invariant with respect to $\Pi$. With a slight abuse of notation, we take $\Pi$ as a uniform permutation over $\{1, \ldots, n + n'\}$. Then
  \[\P\lb \sup_{\lambda\in \R}\frac{\hat{s}_{n}(\lambda) - \hat{s}_{n'}(\lambda)}{\sqrt{\hat{s}_{n+n'}(\lambda) + \eta}}\ge (1 - \gamma)t\rb = \E_{\hat{s}_{n+n'}}\left[\P_{\Pi}\lb \sup_{\lambda\in \R}\frac{\hat{s}_{n, \Pi}(\lambda) - \hat{s}_{n', \Pi}(\lambda)}{\sqrt{\hat{s}_{n+n'}(\lambda) + \eta}}\ge (1 - \gamma)t\mid \hat{s}_{n+n'}\rb\right].\]
  Let $\Gamma(n+n')$ be the collection of distinct sets in the form of
  \[\{S(\lambda; Z_1), \ldots, S(\lambda; Z_{n+n'}): \lambda\in \R\}\]
  It is easy to see that $|\Gamma(n+n')|\le \Delta(n+n')$. Then
  \begin{align*}
    \lefteqn{\P_{\Pi}\lb \sup_{\lambda\in \R}\frac{\hat{s}_{n, \Pi}(\lambda) - \hat{s}_{n', \Pi}(\lambda)}{\sqrt{\hat{s}_{n+n'}(\lambda) + \eta}}\ge (1 - \gamma)t\mid \hat{s}_{n+n'}\rb}\\
    & \le \sum_{\lambda\in \Gamma(n + n')}\P_{\Pi}\lb \frac{\hat{s}_{n, \Pi}(\lambda) - \hat{s}_{n', \Pi}(\lambda)}{\sqrt{\hat{s}_{n+n'}(\lambda) + \eta}}\ge (1 - \gamma)t\mid \hat{s}_{n+n'}\rb.
  \end{align*}
  It remains to prove that
  \begin{equation}
    \label{eq:step3_lower_goal}
    \P_{\Pi}\lb \frac{\hat{s}_{n, \Pi}(\lambda) - \hat{s}_{n', \Pi}(\lambda)}{\sqrt{\hat{s}_{n+n'}(\lambda) + \eta}}\ge (1 - \gamma)t\mid \hat{s}_{n+n'}\rb\le \exp\{-g_2(t; n, n', \gamma, \eta)\} \quad \mbox{a.s.}.
  \end{equation}
  By definition,
  \[\hat{s}_{n, \Pi}(\lambda) - \hat{s}_{n', \Pi}(\lambda) = \hat{s}_{n, \Pi}(\lambda) - \frac{1}{n'}\lb (n + n')\hat{s}_{n+n'}(\lambda) - n\hat{s}_{n, \Pi}(\lambda)\rb = \frac{n + n'}{n'}(\hat{s}_{n, \Pi}(\lambda) - \hat{s}_{n + n'}(\lambda)).\]
  Note that $n\hat{s}_{n}(\lambda)$ is the sum of $n$ elements from the unordered set $\{S(\lambda; Z_1), \ldots, S(\lambda; Z_{n+n'})\}$ sampled without replacement. By Proposition \ref{prop:bardenet_maillard_bernstein},
  \begin{align*}
    \lefteqn{\P\lb \hat{s}_{n}(\lambda) - \hat{s}_{n + n'}(\lambda)\ge \frac{n'}{n + n'}(1 - \gamma)t\sqrt{\hat{s}_{n+n'}(\lambda) + \eta}\mid \hat{s}_{n + n'}\rb}\\
    & \le \exp\left\{-\frac{nt^2}{2}\lb\frac{n'}{n + n'}\rb^2(1 - \gamma)^2\frac{\hat{s}_{n + n'}(\lambda) + \eta}{\hat{s}_{n + n'}(\lambda) + (1 - \gamma) t \frac{n'}{n + n'}\sqrt{\hat{s}_{n + n'}(\lambda) + \eta} / 3}\right\}\\
    & \le \exp\left\{-\frac{nt^2}{2}\lb\frac{n'}{n + n'}\rb^2(1 - \gamma)^2\frac{\hat{s}_{n + n'}(\lambda) + \eta}{\hat{s}_{n + n'}(\lambda) + (1 - \gamma) t\sqrt{\hat{s}_{n + n'}(\lambda) + \eta} / 3}\right\}\\
  \end{align*}
  Using the same argument as \eqref{eq:step2_lower_extra_term}, we can prove that
  \[\frac{\hat{s}_{n + n'}(\lambda) + \eta}{\hat{s}_{n + n'}(\lambda) + (1 - \gamma) t \sqrt{\hat{s}_{n + n'}(\lambda) + \eta} / 3}\ge \lb 1 + \frac{(1 - \gamma)^2 t^2}{36\eta}\rb^{-1}.\]
  Therefore,
  \[\P\lb \hat{s}_{n}(\lambda) - \hat{s}_{n + n'}(\lambda)\ge \frac{n'}{n + n'}t\sqrt{\hat{s}_{n+n'}(\lambda) + \eta}\mid \hat{s}_{n + n'}\rb \le \exp\{-g_2(t; n, n', \gamma, \eta)\}.\]
  Since the bound is independent of $\hat{s}_{n + n'}$, \eqref{eq:step3_lower_goal} is proved and thus step 3.

  ~\\
  \noindent \textbf{Step 4 for \eqref{eq:slam_lower}:} putting \eqref{eq:step1_lower}, \eqref{eq:step2_lower} and \eqref{eq:step3_lower} together, we prove that for any $\gamma \in (0, 1)$ and $n' \in \Z^{+}$,
  \[\P\lb \sup_{\lambda\in \R}\frac{\hat{s}_{n}(\lambda) - s(\lambda)}{\sqrt{\hat{s}_{n}(\lambda) + \eta}}\ge (1 - \gamma)t\rb\le \frac{\Delta(n + n')\exp\{-g_2(t; n, n', \gamma, \eta)\}}{1 - \exp\{-g_1(t; n', \gamma, \kappa^{+})\}}.\]
  Since the right-handed side is deterministic, we can take infimum over $\gamma$ and $n'$, which yields \eqref{eq:slam_lower}.

  ~\\
  \noindent To prove \eqref{eq:slam_upper}, we follow the same steps as above.

  ~\\
  \noindent \textbf{Step 1 for \eqref{eq:slam_upper}:} we shall prove that
  \begin{equation}
    \label{eq:step1_upper}
    \P\lb \sup_{\lambda\in \R}\frac{\hat{s}_{n'}(\lambda) - \hat{s}_{n}(\lambda)}{\sqrt{\hat{s}_{n + n'}(\lambda) + \kappa^{-}}}\ge (1 - \gamma)t\rb\ge \P\lb\sup_{\lambda\in \R}\frac{s(\lambda) - \hat{s}_{n}(\lambda)}{\sqrt{s(\lambda) + \eta}}\ge t\rb\inf_{\lambda\in \R}\P\lb \frac{s(\lambda) - \hat{s}_{n'}(\lambda)}{\sqrt{s(\lambda) + \eta}}\le \gamma t\rb.
  \end{equation}
  Consider the event that there exists $\lambda^{*}\in \R$ such that
  \begin{equation}
    \label{eq:step1_upper_event}
    \frac{s(\lambda^{*}) - \hat{s}_{n}(\lambda^{*})}{\sqrt{s(\lambda^{*}) + \eta}}\ge t, \quad \frac{s(\lambda^{*}) - \hat{s}_{n'}(\lambda^{*})}{\sqrt{s(\lambda^{*}) + \eta}}\le \gamma t
  \end{equation}
  As with \eqref{eq:step1_upper_sn_s}, we can show that
  \begin{equation}
    \label{eq:step1_upper_sn_s}
    s(\lambda^{*}) + \eta \ge \hat{s}_{n}(\lambda^{*}) + \kappa^{+}.
  \end{equation}
  On the other hand, by \eqref{eq:step1_upper},
  \begin{equation}
    \label{eq:step1_upper_sn_snp}
    \hat{s}_{n'}(\lambda^{*})\ge s(\lambda^{*}) - \gamma t\sqrt{s(\lambda^{*}) + \eta} \ge s(\lambda^{*}) - t\sqrt{s(\lambda^{*}) + \eta} \ge \hat{s}_{n}(\lambda^{*}).
  \end{equation}
  Let $a, b, c$ be arbitrary positive numbers and $d(x) = (x - a) / \sqrt{bx + c}$. Then
  \[\frac{d}{dx}\log d(x) = \frac{1}{x - a} - \frac{b}{2(bx + c)} = \frac{bx + 2c + ab}{2(x-a)(bx + c)}.\]
  Thus, $d(x)$ is increasing on $[a, \infty)$. Take $a = \hat{s}_{n}(\lambda)$, $b = n' / (n + n')$, and $c = n a / (n + n') + \kappa^{-}$. By \eqref{eq:step1_upper_sn_snp},
  \begin{align*}
    \frac{\hat{s}_{n'}(\lambda^{*}) - \hat{s}_{n}(\lambda^{*})}{\sqrt{\hat{s}_{n + n'}(\lambda^{*}) + \kappa^{-}}}
    & = d\lb\hat{s}_{n'}(\lambda^{*})\rb \ge d\lb s(\lambda^{*}) - \gamma t\sqrt{s(\lambda^{*}) + \eta}\rb\\
    & = \frac{s(\lambda^{*}) - \gamma t\sqrt{s(\lambda^{*}) + \eta} - \hat{s}_{n}(\lambda^{*})}{\sqrt{\frac{n'}{n + n'}(s(\lambda^{*}) - \gamma t\sqrt{s(\lambda^{*}) + \eta}) + \frac{n}{n + n'}\hat{s}_{n}(\lambda^{*}) + \kappa^{-}}}\\
    & \stackrel{\eqref{eq:step1_upper_sn_s}}{\ge}\frac{s(\lambda^{*}) - \gamma t\sqrt{s(\lambda^{*}) + \eta} - \hat{s}_{n}(\lambda^{*})}{\sqrt{\frac{n'}{n + n'}(s(\lambda^{*}) - \gamma t\sqrt{s(\lambda^{*}) + \eta}) + \frac{n}{n + n'}(s(\lambda^{*}) - t\sqrt{s(\lambda^{*}) + \eta}) + \kappa^{-}}}\\
    & = \frac{s(\lambda^{*}) - \gamma t\sqrt{s(\lambda^{*}) + \eta} - \hat{s}_{n}(\lambda^{*})}{\sqrt{s(\lambda^{*}) - \frac{n + \gamma n'}{n + n'}\sqrt{s(\lambda^{*}) + \eta} + \kappa^{-}}}\\
    & \stackrel{\eqref{eq:step1_upper_sn_s}}{\ge} \frac{s(\lambda^{*}) - \gamma t\sqrt{s(\lambda^{*}) + \eta} - \hat{s}_{n}(\lambda^{*})}{\sqrt{s(\lambda^{*}) - \frac{n + \gamma n'}{n + n'}\sqrt{\kappa^{+}} + \kappa^{-}}}\\
    & = \frac{s(\lambda^{*}) - \gamma t\sqrt{s(\lambda^{*}) + \eta} - \hat{s}_{n}(\lambda^{*})}{\sqrt{s(\lambda^{*}) + \eta}}\\
    & \stackrel{\eqref{eq:step1_upper}}{\ge} (1 - \gamma)t.
  \end{align*}
  Similar to step 1 for \eqref{eq:slam_upper}, we complete the proof of \eqref{eq:step1_upper}.

  ~\\
  \noindent \textbf{Step 2 for \eqref{eq:slam_upper}:} using exactly the same proof of \eqref{eq:step2_upper}, we can prove that
  \begin{equation}
    \label{eq:step2_upper}
    \inf_{\lambda\in \R}\P\lb\frac{s(\lambda) - \hat{s}_{n'}(\lambda)}{\sqrt{s(\lambda) + \eta}}\le \gamma t\rb \ge 1 - \exp\{-g_1(t; n', \gamma, \eta)\}.
  \end{equation}
  
  ~\\
  \noindent \textbf{Step 3 for \eqref{eq:slam_upper}:} using exactly the same proof of \eqref{eq:step3_upper}, we can prove that
  \begin{equation}
    \label{eq:step3_upper}
    \P\lb \sup_{\lambda\in \R}\frac{\hat{s}_{n'}(\lambda) - \hat{s}_{n}(\lambda)}{\sqrt{\hat{s}_{n+n'}(\lambda) + \kappa^{-}}}\ge (1 - \gamma)t\rb\le \Delta(n + n')\exp\{g_2(t; n, n', \gamma, \kappa^{-})\}.
  \end{equation}

  ~\\
  \noindent \textbf{Step 4 for \eqref{eq:slam_upper}:}  putting \eqref{eq:step1_upper}, \eqref{eq:step2_upper} and \eqref{eq:step3_upper} together, we prove that for any $\gamma \in (0, 1)$ and $n' \in \Z^{+}$,
  \[\P\lb \sup_{\lambda\in \R}\frac{s(\lambda) - \hat{s}_{n}(\lambda)}{\sqrt{s(\lambda) + \eta}}\ge t\rb\le \frac{\Delta(n + n')\exp\{-g_2(t; n, n', \gamma, \kappa^{-})\}}{1 - \exp\{-g_1(t; n', \gamma, \eta)\}}.\]
  Since the right-handed side is deterministic, we can take infimum over $\gamma$ and $n'$, which yields \eqref{eq:slam_upper}.
\end{proof}

 \begin{proof}[\textbf{Proof of Theorem \ref{thm:normalized_vapnik_rademacher}}]
   We prove \eqref{eq:slam_upper_rademacher} first. We will use the same notation hereafter as in the proof of Theorem \ref{thm:normalized_vapnik}. By \eqref{eq:step1_upper} and \eqref{eq:step2_upper} with $n' = n$, it remains to prove that
   \begin{equation}
     \label{eq:goal_rademacher}
     \P\lb \sup_{\lambda\in \R}\frac{\hat{s}_{n}(\lambda) - \hat{s}'_{n}(\lambda)}{\sqrt{\hat{s}_{2n}(\lambda) + \eta}}\ge (1 - \gamma)t\rb\le \Delta(2n)\td{g}\lb \sqrt{\frac{n(1 + \eta)}{2}}(1 - \gamma)t\rb,
   \end{equation}
   where $\hat{s}'_{n} = \hat{s}_{n'}$ is the empirical CDF of $Z_{n+1}, \ldots, Z_{2n}$. Instead of conditioning on the unordered set of $\{Z_1, \ldots, Z_{2n}\}$, we condition on a more refined statistic
   \[\cZ_{\swap} \triangleq (\{Z_{1}, Z_{n+1}\}, \{Z_{2}, Z_{n + 2}\}, \ldots, \{Z_{n}, Z_{2n}\}).\]
   Note that $\cZ_{\swap}$ is a function of the unordered ste of $\{Z_{1}, \ldots, Z_{2n}\}$, the number of distinct $\cZ_{\swap}$ over $\lambda\in \R$ is at most $\Delta(2n)$. Thus, by a union bound,
   \[\P\lb \sup_{\lambda\in \R}\frac{\hat{s}_{n}(\lambda) - \hat{s}'_{n}(\lambda)}{\sqrt{\hat{s}_{2n}(\lambda) + \eta}}\ge (1 - \gamma)t\rb\le \Delta(2n)\sup_{\lambda\in \R}\P\lb \frac{\hat{s}_{n}(\lambda) - \hat{s}'_{n}(\lambda)}{\sqrt{\hat{s}_{2n}(\lambda) + \eta}}\ge (1 - \gamma)t\rb.\]
   Conditional on $\cZ_{\swap}$, $\hat{s}_{2n}$ is deterministic and for any event $A$,
   \[\lb S(\lambda; Z_{i}) - S(\lambda; Z_{n + i})\rb_{i=1}^{n}\stackrel{d}{=} \lb S(\lambda; Z_{n+i}) - S(\lambda; Z_{i})\rb_{i=1}^{n}\]
   where $\eps_{1}, \ldots, \eps_{n}$ are i.i.d. Rademacher random variables, i.e. $\P(\eps_{i} = \pm 1) = 1/2$, and are independent of $\cZ_{\swap}$. As a result,
   \begin{align}
     \lefteqn{\P\lb \frac{\hat{s}_{n}(\lambda) - \hat{s}'_{n}(\lambda)}{\sqrt{\hat{s}_{2n}(\lambda) + \eta}}\ge (1 - \gamma)t \mid \cZ_{\swap}\rb}\nonumber\\
     & = \P\lb \frac{\sum_{i=1}^{n}(S(\lambda; Z_{i}) - S(\lambda; Z_{n+i}))\eps_{i}}{\sqrt{\sum_{i=1}^{n}(S(\lambda; Z_{i}) + S(\lambda; Z_{n+i})) + 2n\eta}} \ge \sqrt{\frac{n}{2}}(1 - \gamma)t\mid \cZ_{\swap}\rb\nonumber\\
     & = \P\lb \frac{\sum_{i=1}^{n}\sqrt{1 + \eta}(S(\lambda; Z_{i}) - S(\lambda; Z_{n+i}))\eps_{i}}{\sqrt{\sum_{i=1}^{n}(S(\lambda; Z_{i}) + S(\lambda; Z_{n+i})) + 2n\eta}} \ge \sqrt{\frac{n(1 + \eta)}{2}}(1 - \gamma)t\mid \cZ_{\swap}\rb.\label{eq:rademacher1}
   \end{align}
   Let
   \[a_{i} = \frac{\sqrt{1 + \eta}(S(\lambda; Z_{i}) - S(\lambda; Z_{n+i}))}{\sqrt{\sum_{i=1}^{n}(S(\lambda; Z_{i}) + S(\lambda; Z_{n+i})) + 2n\eta}}.\]
   Then $a_{i}$ is deterministic conditional on $\cZ_{\swap}$ and
   \begin{align*}
     \sum_{i=1}^{n}a_{i}^{2}
     &= \frac{(1 + \eta)\sum_{i=1}^{n}(S(\lambda; Z_{i}) - S(\lambda; Z_{n+i}))^2}{\sum_{i=1}^{n}(S(\lambda; Z_{i}) + S(\lambda; Z_{n+i})) + 2n\eta}\\
     &\stackrel{(i)}{\le} \frac{(1 + \eta)\sum_{i=1}^{n}(S(\lambda; Z_{i}) + S(\lambda; Z_{n+i}))}{\sum_{i=1}^{n}(S(\lambda; Z_{i}) + S(\lambda; Z_{n+i})) + 2n\eta}\\
     &\stackrel{(ii)}{\le} 1,
   \end{align*}
   where (i) uses the fact that $(S(\lambda; Z_{i}) - S(\lambda; Z_{n+i}))^2 \le S(\lambda; Z_{i})^2 + S(\lambda; Z_{n+i})^2 = S(\lambda; Z_{i}) + S(\lambda; Z_{n+i})$, and (ii) uses the fact that $\sum_{i=1}^{n}(S(\lambda; Z_{i}) + S(\lambda; Z_{n+i}))\le 2n$. Then \eqref{eq:rademacher1} implies that
   \[\P\lb \frac{\hat{s}_{n}(\lambda) - \hat{s}'_{n}(\lambda)}{\sqrt{\hat{s}_{2n}(\lambda) + \eta}}\ge (1 - \gamma)t \mid \cZ_{\swap}\rb\le \sup_{\|a\|_{2}^{2} \le 1}\P\lb \sum_{i=1}^{n}a_{i}\eps_{i}\ge \sqrt{\frac{n(1 + \eta)}{2}}(1 - \gamma)t\rb.\]
   By the Bentkus-Dzindzalieta inequality (Proposition \ref{prop:bentkus2015tight}),
   \[\P\lb \sum_{i=1}^{n}a_{i}\eps_{i}\ge \sqrt{\frac{n(1 + \eta)}{2}}(1 - \gamma)t\rb\le \td{g}_{1}\lb \sqrt{\frac{n(1 + \eta)}{2}}(1 - \gamma)t\rb.\]
   By the Pinelis inequality (Proposition \ref{prop:pinelis2012asymptotically}),
   \[\P\lb \sum_{i=1}^{n}a_{i}\eps_{i}\ge \sqrt{\frac{n(1 + \eta)}{2}}(1 - \gamma)t\rb\le \td{g}_{2}\lb \sqrt{\frac{n(1 + \eta)}{2}}(1 - \gamma)t\rb.\]
   Finally, since $\eps_{i}$ is subgaussian with parameter $1$ and $\|a\|_{2}^{2} = 1$, $\sum_{i=1}^{n}a_{i}\eps_{i}$ is also subgaussian with parameter $1$. By Hoeffding's inequality, we have
   \[\P\lb \sum_{i=1}^{n}a_{i}\eps_{i}\ge \sqrt{\frac{n(1 + \eta)}{2}}(1 - \gamma)t\rb\le \td{g}_{3}\lb \sqrt{\frac{n(1 + \eta)}{2}}(1 - \gamma)t\rb.\]
   Putting the pieces together, \eqref{eq:slam_upper_rademacher} is proved.

   ~\\
   \noindent Similarly, to prove \eqref{eq:slam_upper_rademacher}, it remains to prove
   \begin{equation*}
     \P\lb \sup_{\lambda\in \R}\frac{\hat{s}'_{n}(\lambda) - \hat{s}_{n}(\lambda)}{\sqrt{\hat{s}_{2n}(\lambda) + \eta}}\ge (1 - \gamma)t\rb\le \Delta(2n)\td{g}\lb \sqrt{\frac{n(1 + \eta)}{2}}(1 - \gamma)t\rb.
   \end{equation*}
   Since $\hat{s}_{n}$ and $\hat{s}'_{n}$ are symmetric, it is implied by \eqref{eq:goal_rademacher}. Thus, the proof of \eqref{eq:slam_upper_rademacher} is also completed.
 \end{proof}

\end{document}